\theoremstyle{plain}
\newtheorem{theorem}{Theorem}[section]
\newtheorem{proposition}[theorem]{Proposition}
\newtheorem*{proposition*}{Proposition}
\newtheorem{corollary}[theorem]{Corollary}
\theoremstyle{definition}
\newtheorem{definition}[theorem]{Definition}
\newtheorem{assumption}{Assumption}
\newtheorem{example}[theorem]{Example}
\theoremstyle{remark}
\DeclareMathOperator*{\argmax}{arg\,max}
\newcommand{\x}{\mathbf{x}}
\newcommand{\X}{\mathbf{X}}
\newcommand{\z}{\mathbf{z}}
\newcommand{\f}{\mathbf{f}}
\newcommand{\D}{\mathcal{D}}
\newcommand{\C}{\mathcal{C}}
\newcommand{\appropto}{\mathrel{\vcenter{
  \offinterlineskip\halign{\hfil$##$\cr
    \propto\cr\noalign{\kern2pt}\sim\cr\noalign{\kern-2pt}}}}}
\title{Preferential Normalizing Flows}
\author{%
  Petrus Mikkola, \: Luigi Acerbi\thanks{Equal contribution}, \: Arto Klami\footnotemark[1]\\
  Department of Computer Science,
  University of Helsinki\\
  %Pittsburgh, PA 15213 \\
  \texttt{first.last@helsinki.fi} 
  % examples of more authors
  %\And
  %Luigi Acerbi\thanks{Equal contribution.} \\
  %Department of Computer Science\\
  %University of Helsinki \\
  % Address \\
  %\texttt{luigi.acerbi@helsinki.fi} \\
  %\AND
  %Arto Klami\footnotemark[1] \\
  %Department of Computer Science\\
  %University of Helsinki \\
  %\texttt{arto.klami@helsinki.fi} \\
  % \And
  % Coauthor \\
  % Affiliation \\
  % Address \\
  % \texttt{email} \\
  % \And
  % Coauthor \\
  % Affiliation \\
  % Address \\
  % \texttt{email} \\
}
\begin{document}

\maketitle

\begin{abstract}
Eliciting a high-dimensional probability distribution from an expert via noisy judgments is notoriously challenging, yet useful for many applications, such as prior elicitation and reward modeling. We introduce a method for eliciting the expert's belief density as a normalizing flow based solely on preferential questions such as comparing or ranking alternatives. This allows eliciting in principle arbitrarily flexible densities, but flow estimation is susceptible to the challenge of collapsing or diverging probability mass that makes it difficult in practice. We tackle this problem by introducing a novel functional prior for the flow, motivated by a decision-theoretic argument, and show empirically that the belief density can be inferred as the function-space maximum a posteriori estimate. We demonstrate our method by eliciting multivariate belief densities of simulated experts, including the prior belief of a general-purpose large language model over a real-world dataset.
\end{abstract}

\section{Introduction}

Representing beliefs as probability distributions can be useful, particularly as prior probability distributions in Bayesian inference -- especially in high-dimensional, non-asymptotic settings where the prior strongly influences the posterior \citep{gelman2017prior} -- or as probabilistic alternatives to reward models \citep{leike2018scalable,ouyang2022training}. Our goal is to elicit a complex multivariate probability density from an expert, as a representation of their beliefs. By \textit{expert}, we mean an information source with a belief over a problem of interest, termed \textit{belief density}, which does not permit direct evaluation or sampling. The problem is an instance of expert knowledge elicitation, where the belief is elicited by asking elicitation queries such as quantiles of the distribution \citep{ohagan:2019}. The current elicitation literature (see \citealt{mikkola2023} for a recent overview) focuses almost exclusively on extremely simple distributions, mostly products of univariate distributions of known parametric form. Some isolated works have considered more flexible distributions, for instance quantile-parameterized distributions \citep{perepolkin2024hybrid} for univariate cases, or Gaussian processes \citep{oakley2007} and copulas for modelling low-dimensional dependencies \citep{clemen2000}, but we want to move considerably beyond that and elicit flexible beliefs using modern neural network representations \citep{bishop2023deep}. The main challenges are identifying elicitation queries that are sufficiently informative to infer the belief density while being feasible for the expert to answer reliably, and selecting a model class for the belief density that can represent flexible beliefs without simplifying assumptions but that can still be efficiently estimated.

Normalizing flows are a natural family for representing flexible distributions \citep{papamakarios2021normalizing}. When using flows for modelling a density $p(\x)$, learning is usually based on either a set of samples $\x \sim p(\x)$ drawn from the distribution (density estimation; \citealp{dinh2014nice}) or on the log density $\log p(\x)$ evaluated at flow samples, $\x \sim q(\x)$, in the variational inference formulation \citep{rezende2015variational}. Neither strategy applies to our setup, since we do not have the luxury of sampling from the belief density and obviously cannot evaluate it either. In addition to the well-known challenges of training normalizing flows, the setup introduces new difficulties; in particular, a flexible flow easily collapses or finds a way of allocating probability mass in undesirable ways. Significant literature on resolving these issues exists \citep{behrmann21a,salmona2022can,cornish2020relaxing}, but conclusive solutions that guarantee stable learning are still missing. Our solution offers new tools for controlling the flow in low-density areas, and hence we contribute for the general flow literature despite focusing on the specific new task.

We build on established literature on knowledge elicitation for the interaction with the expert. Distributions are primarily characterized by their location and covariance structure, yet humans are notoriously bad at assessing covariances between variables \citep{jennings_amabile_ross_1982,wilson:1994}. %ruling out direct specification of joint distributions with non-trivial structure. 
However, human preferences, with potentially strong interconnections between variables, can be recovered by asking individuals to compare or rank alternatives, a topic studied under discrete choice theory \citep{train2009discrete}. The most studied random utility models (RUMs) interpret human choice as utility maximization with an additive noise component \citep{marschak1959binary}. To infer the correlation structure in human beliefs indirectly from elicitation data, we study a setup where the expert compares or ranks alternatives (events) based on their probability so that their decisions can be modeled by a RUM.
In practice, this means that the data for learning the flow will take the form of \emph{choice sets} $\C_k = \{\x_1,...,\x_k\}$ of candidates presented to the expert, combined with their choices indicating the preference over the alternatives based on their probability. We stress that candidates $\x$ are here \emph{not} samples from the belief density but are instead provided by some other unknown process, such as an active learning method \citep{houlsby2011bayesian}. The only information about the belief density comes from the choice.

We are not aware of any previous works that learn flows from preferential comparisons. We first discuss some additional challenges caused by preferential data, and then show how we can leverage preferential structure to improve learning. Specifically, our learning objective corresponds to a function-space maximum a posteriori (FS-MAP), where Bayesian inference is conducted on the function (flow) itself, not its parameters \citep{wolpert1993bayesian,qiu2023should}. The learning objective is exact, in contrast to flow-based algorithms that model phenomena involving discontinuities \citep{nielsen2020survae,hoogeboom2021argmax}, such as the argmax operator in the RUM model. By construction, the choice sets explicitly include candidates $\x$ that were \emph{not} preferred by the expert, carrying information about relative densities of preferred vs. not preferred points. This allows us to introduce a functional prior that encourages allocating more mass to regions with high probability under a RUM with exponential noise, solving the collapsing and diverging probability mass problem that poses a challenge for flow inference in small data scenarios.

\begin{figure*}[t]
  \centering
  \subfigure[$n=10$, w/o prior]{\includegraphics[scale=0.09]{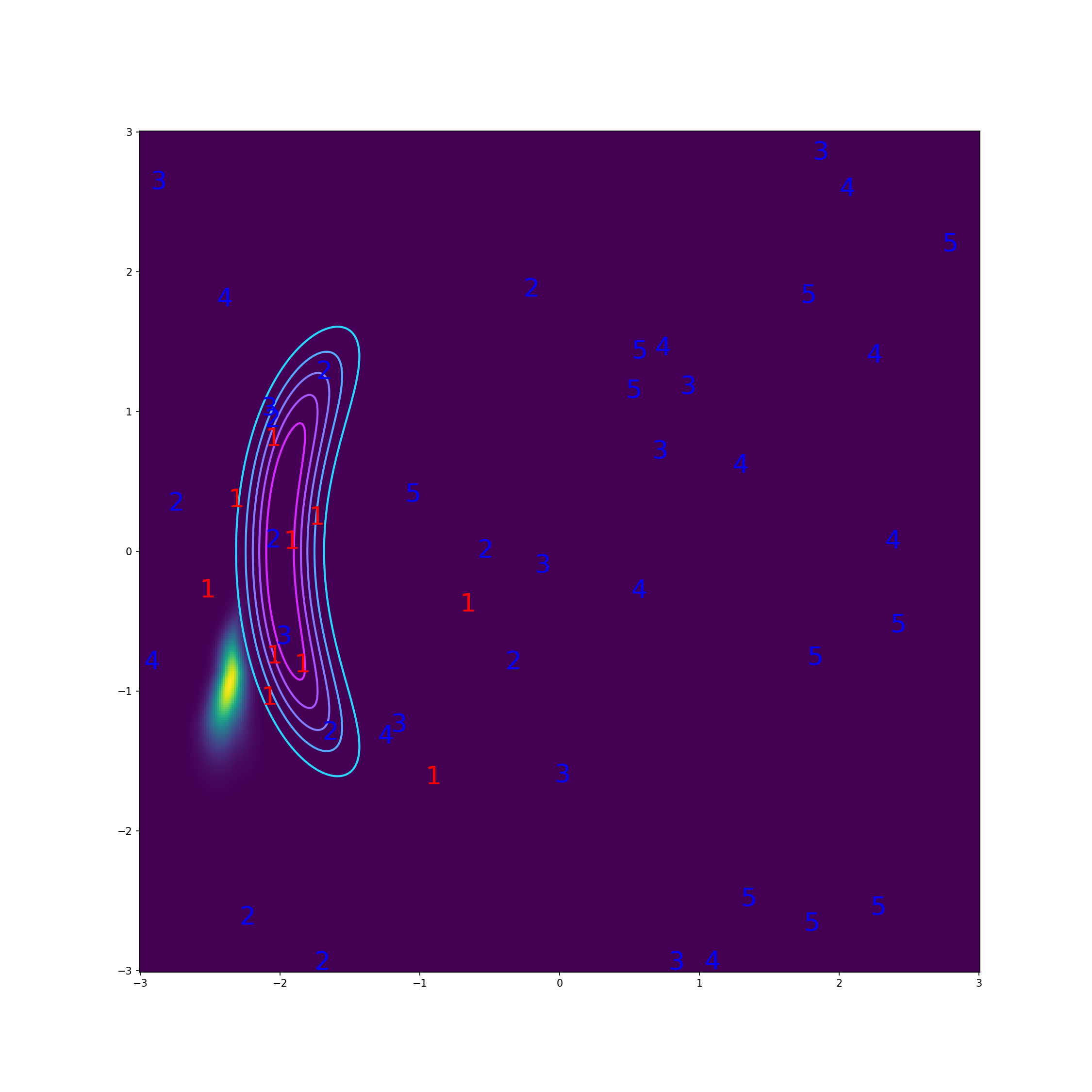}}
   \subfigure[$n=10$, w/o prior]{\includegraphics[scale=0.09]{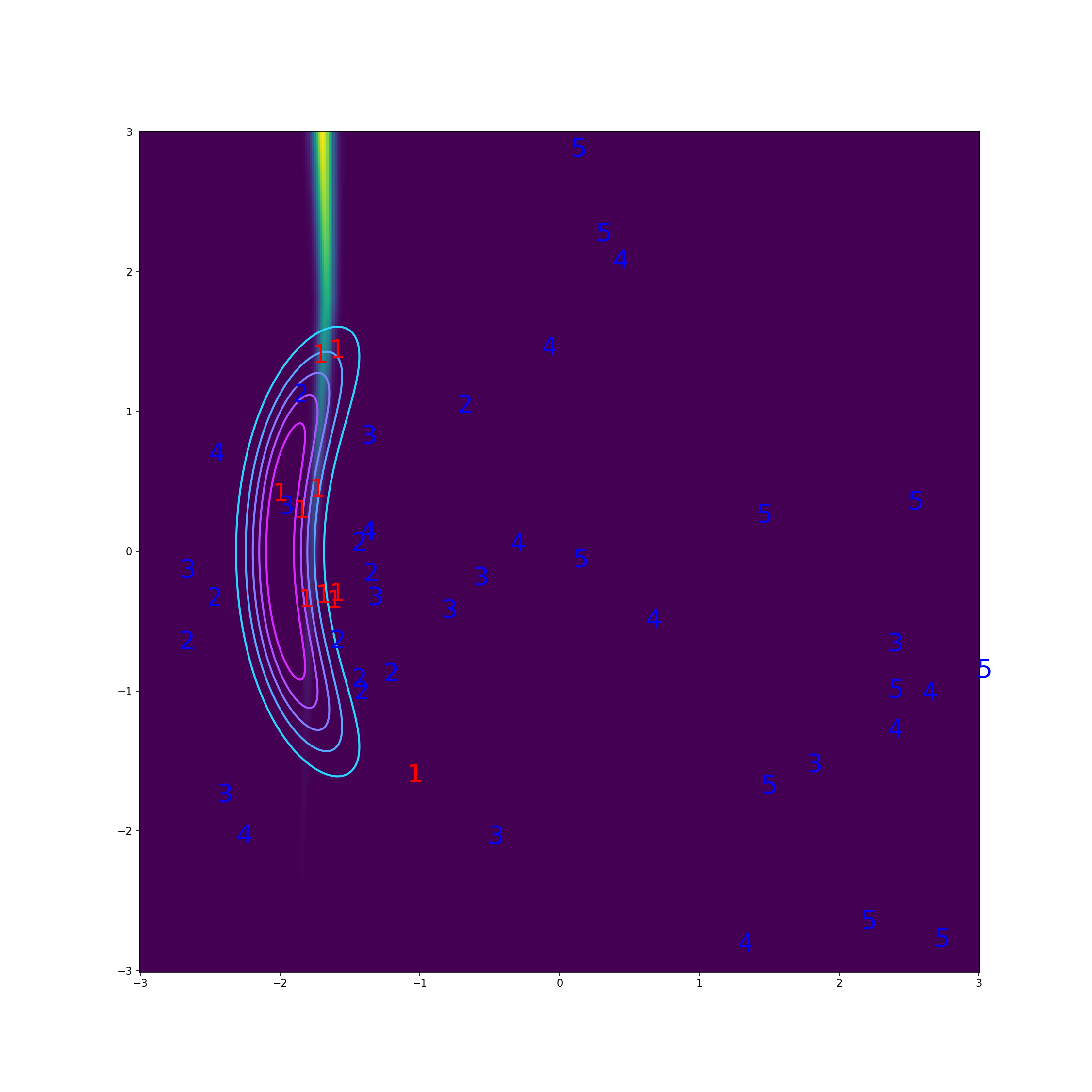}}  \subfigure[$n=10$, w/ prior]{\includegraphics[scale=0.09]{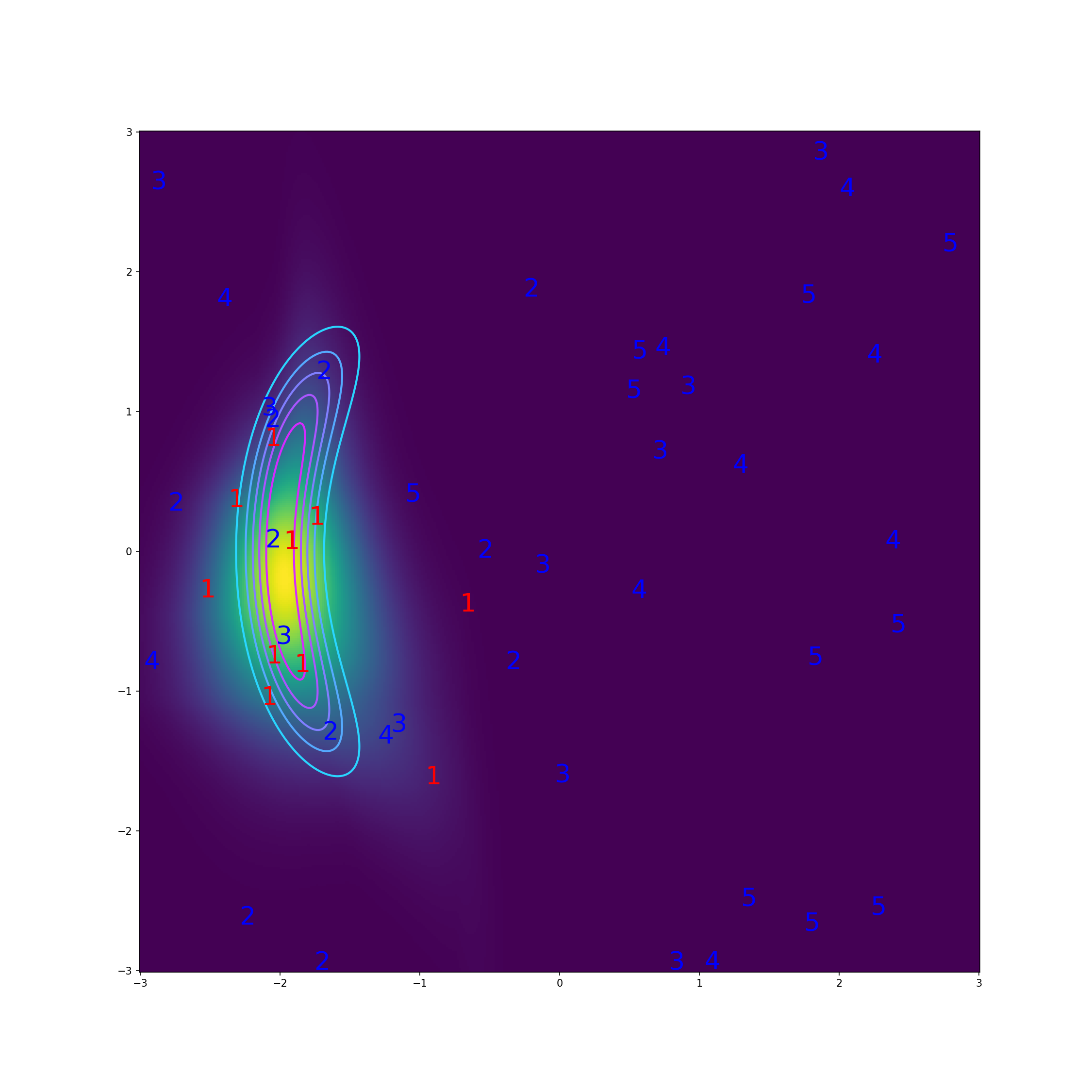}}
  \subfigure[$n=100$, w/ prior]{\includegraphics[scale=0.09]{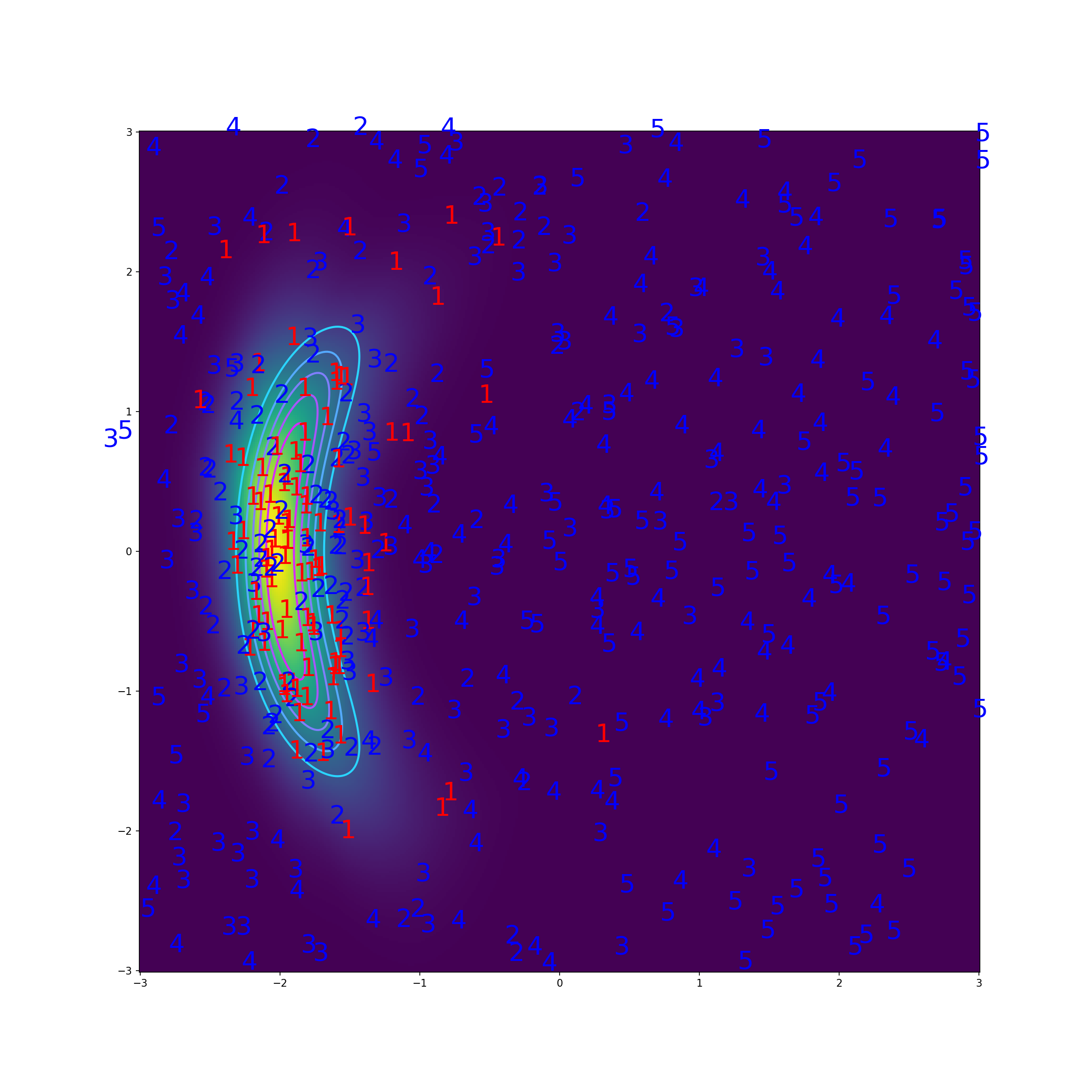}\label{fig1d}}
  \caption{Illustration of belief densities elicited from preferential ranking data by a normalizing flow (contour: true density; heatmap: estimated flow; red: preferred points; blue: non-preferred points). (a)-(b): Typical failure modes of collapsing and diverging mass, when training a flow with just $n = 10$ rankings. (c)-(d): The proposed functional prior resolves the issues, and already with 10 rankings we can learn the correct belief density, matching the result of the flow trained on larger data.
  } \label{fig1}
\end{figure*}

In summary, we introduce the novel problem of inferring probability density from preferential data using normalizing flows and provide a practical solution. We model the expert's choice as a RUM with exponentially distributed noise, and query the expert for comparison or ranking of $k$ alternatives. We derive the likelihoods for $k$-wise comparisons and rankings and study the distribution of the most preferred point among $k$ alternatives, which we term the $k$-wise winner. Based on the interpretation of the $k$-wise winner distribution as a tempered and tilted belief density, we introduce an empirical function prior and the FS-MAP objective for learning the flow. Finally, we validate our method using both synthetic and real data sets.

\section{Why learning the density from preferential data is challenging?}\label{sec-why-challenging}

Learning flows from small samples is challenging, especially in higher dimensions even when learning from direct data, such as samples from the density. Figure~\ref{fig1} illustrates two common challenges of \emph{collapsing and diverging probability mass}; the illustration is based on our setup to showcase the proposed solution, but the same problems occur in the classical setup.
 The ``collapsing mass'' scenario is a form of overfitting, similar to mode collapse in mixture models \citep{li2007nonparametric}, but more extreme for flexible models. 

In the ``diverging mass'' problem, the model places probability mass in the regions of low probability. The problem has connections to difficulties in training \citep{behrmann21a,dhaka2021challenges, vaitl2022gradients,liang2022fat} and issues with coupling flows with increasing depth, which tend to produce exponentially large sample values \citep{behrmann21a,andrade2024stable}. One intuitive explanation is that we simply have no information on how the flow should behave far from the training samples, and an arbitrarily flexible model will at least in some cases behave unexpectedly.

If already learning a flow from samples drawn from the density itself is difficult, is it even possible to infer the belief density from preferential data?  For instance, for the most popular RUM model (Plackett-Luce;  \citealp{luce1959individual,plackett1975analysis})
we cannot in the noiseless case differentiate between the true density and any normalised positive monotonic transformation of it:
\begin{proposition}[Unidentifiability of a noiseless RUM]\label{negative_result}
    Let $p_{\star}$ be the expert's belief density. For $k \geq 2$, let $\mathcal{D}_{\textrm{rank}} := \{\x_1 \succ \x_2 \succ ... \succ \x_{k}\}$ be a $k$-wise ranking (see Definition \ref{k-wise_ranking_def}). If $W \sim \textrm{Gumbel}(0,\beta)$, then for any positive monotonic transformation $g$ holds
    $
        \lim_{\beta \to 0}p(\mathcal{D}_{\textrm{rank}} | g \circ p_{\star}, \beta) = 1.
    $ Proof in \ref{supp_proofs}.
\end{proposition}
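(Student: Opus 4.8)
The plan is to reduce the claim to the limiting behaviour of the Plackett--Luce likelihood. Recall that under a Gumbel$(0,\beta)$ RUM with utility $\log f$, the likelihood of a $k$-wise ranking (Definition~\ref{k-wise_ranking_def}) is of Plackett--Luce form,
\begin{equation*}
p(\mathcal{D}_{\textrm{rank}} \mid f, \beta) \;=\; \prod_{i=1}^{k-1} \frac{f(\x_i)^{1/\beta}}{\sum_{j=i}^{k} f(\x_j)^{1/\beta}},
\end{equation*}
which depends on $f$ only through ratios of its values at the candidates; hence replacing $f$ by a positive multiple of it (in particular, normalising $g\circ p_{\star}$ to a density) leaves the likelihood unchanged. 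By construction, $\mathcal{D}_{\textrm{rank}}$ is the ordering the expert holds in the noiseless regime, so $p_{\star}(\x_1) > p_{\star}(\x_2) > \dots > p_{\star}(\x_k)$; since $g$ is positive and strictly increasing, $f := g \circ p_{\star}$ inherits the same strict ordering, $f(\x_1) > \dots > f(\x_k) > 0$.

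Next I would divide numerator and denominator of the $i$-th factor by $f(\x_i)^{1/\beta}$ to obtain
\begin{equation*}
\frac{f(\x_i)^{1/\beta}}{\sum_{j=i}^{k} f(\x_j)^{1/\beta}} \;=\; \left(1 + \sum_{j=i+1}^{k} \Big(\tfrac{f(\x_j)}{f(\x_i)}\Big)^{1/\beta}\right)^{-1}.
\end{equation*}
For every $j > i$ we have $0 < f(\x_j)/f(\x_i) < 1$, so $(f(\x_j)/f(\x_i))^{1/\beta} \to 0$ as $\beta \to 0^{+}$; the inner sum is finite, so the $i$-th factor tends to $1$, and the likelihood, a product of $k-1$ such factors, tends to $1$. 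Since $g$ was an arbitrary positive increasing transformation and normalisation does not change the likelihood, this shows that in the $\beta \to 0$ limit the RUM assigns probability one to $\mathcal{D}_{\textrm{rank}}$ both under $p_{\star}$ and under every normalised positive monotonic transform of it, i.e.\ these hypotheses are observationally indistinguishable from ranking data alone.

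The limit computation itself is one line; the only care needed is at the boundary of the conventions. We genuinely need the inequalities $p_{\star}(\x_i) > p_{\star}(\x_{i+1})$ to be \emph{strict}: if two consecutive candidates had equal belief density, the corresponding factor would contain a term $(f(\x_j)/f(\x_i))^{1/\beta} = 1$ and would converge to a value $\le 1/2$; but such a tie contradicts $\mathcal{D}_{\textrm{rank}}$ being a strict ranking $\x_1 \succ \dots \succ \x_k$, and for candidates in general position ties occur with probability zero, so this is not a real restriction. Similarly, ``monotonic'' must be read as monotonically increasing, since a strictly decreasing $g$ reverses the induced order and would instead send the likelihood to $0$. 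I expect this small amount of bookkeeping, rather than the limit, to be the only obstacle.
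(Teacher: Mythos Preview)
Your argument is correct and follows essentially the same route as the paper's proof: write the Gumbel-RUM ranking likelihood in Plackett--Luce form, use that $g$ preserves the strict ordering $p_{\star}(\x_1)>\dots>p_{\star}(\x_k)$, and let the temperature $\beta\to 0$ so each softmax factor collapses to $1$. The only cosmetic discrepancy is that the paper treats $f=g\circ p_{\star}$ as the representative utility itself (so the factors read $e^{f(\x_i)/\beta}/\sum_j e^{f(\x_j)/\beta}$), whereas you take the utility to be $\log(g\circ p_{\star})$ in line with Assumption~\ref{assumption1}; both conventions give the same limit, and your explicit ratio manipulation is just the concrete form of the paper's ``softmax $\to$ argmax'' step.
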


In other words, the \emph{noiseless} solution is not even unique and resolving this requires a way of quantifying the relative utility. Noisy RUM induces such a metric due to the noise magnitude providing a natural scale but even then the belief is identifiable only up to a noise scale; see \ref{supp_theoretical} for a concrete example for the Thurstone-Mosteller model \citep{Thurstone1927,Mosteller1951}. 

Another new challenge is that the candidates $\x$ presented to the expert are given by some external process. In the simplest case, they are drawn independently from some unknown distribution $\lambda(\x)$, which does not need to relate to the belief density $p_{\star}$. We need a formulation that affords estimating $p_{\star}$ directly, ideally under minimal assumptions on the distribution besides $\lambda(\x) > 0$ for $p_{\star}(\x) > 0$.

Despite these challenges, we can indeed learn flows as estimates of belief densities as will be explained next, in part by leveraging standard machinery in discrete choice theory to model the expert's choices and in part by introducing a new functional prior for the normalizing flow. The choice process separates the candidate samples $\x$ into preferred and non-preferred ones, and we can use this split to construct a prior that helps learning the flow. That is, the preferential setup also opens new opportunities to address problems in learning flows.

\section{Random utility model with exponentially distributed noises}
\label{sec-utility}

The random utility model represents the decision maker's stochastic utility $U$ as the sum of a deterministic utility and a stochastic perturbation \citep{train2009discrete},
\begin{equation}\label{RUM1}
    U(\x) = f(\x) + W(\x),
\end{equation}
where $f : \mathcal{X} \rightarrow \mathbb{R}$ is a deterministic function called \emph{representative utility}, and $W$ is a stochastic noise process, often independent across $\x$. The relationship between these concepts and the task will be made specific in Assumptions \ref{assumption1} to \ref{assumption3}.
We assume that the domain $\mathcal{X}$ is a compact subset of $\mathbb{R}^d$. Given a set $\C \subset \mathcal{X}$ of possible alternatives, the expert selects a specific opinion $\x \in \C$ through the noisy utility maximization,
\begin{equation}\label{RUM2}
    \x \sim \argmax_{\x' \in \C}U(\x').
\end{equation}
\begin{definition}[choice set]
    Let $k \geq 2$. The \textit{choice set} is a set of $k$ alternatives, denoted by $\C_k = \{\x_1,...,\x_k\}$. We assume that $\C_k$ is a set of i.i.d. samples from a probability density $\lambda$ over $\mathcal{X}$, but note that the formulation can be generalized to other processes.  
\end{definition}
For example, if we ask for a pairwise comparison $\C_2 = (\x,\x')$, the expert's answer would be $\x \succ \x'$ if $f(\x) + w(\x) > f(\x') + w(\x')$ for given a realization $w$ of $W$. We denote the random utility model with a representative utility $f$, a stochastic noise process $W$, and a choice set $\C_k$, by $\textrm{RUM}(\C_k,f,W)$. 

We make the common assumption of representing the probabilistic beliefs of a (human) expert in logarithmic form \citep{dehaene2003neural}.
\begin{assumption}\label{assumption1}
    $f(\x) = \log p_{\star}(\x)$; noise is additive for log-density.
\end{assumption}
\begin{assumption}\label{assumption2}
    $f$ is bounded and continuous.
\end{assumption}
Inspired by \citet{malmberg2012argmax,malmberg2014probabilistic}, we assume that the noise is exponentially distributed and thus belongs to the exponential family \citep{azari2012}.
\begin{assumption}\label{assumption3}
    $W(\x) \sim \text{Exp}(s)$ independently for any $\x \in \mathcal{X}$
\end{assumption}
With Assumption \ref{assumption1}, this corresponds to a model where in the limit of infinitely many alternatives, the expert chooses a point by sampling their belief density (Corollary \ref{limit_rum_same_as_sampling}). The parameter $s$ is here a precision parameter, the reciprocal of the standard deviation of $\text{Exp}(s)$. There are two popular types of preferential queries \citep{furnkranz2011}.
\begin{definition}[$k$-wise comparison]
    A preferential query that asks the expert to choose the most preferred alternative from $\C_k$ is called a \textit{k-wise comparison}. The choice is denoted by $\x \succ \C_k$.
\end{definition}
\begin{definition}[$k$-wise ranking]\label{k-wise_ranking_def}
    A preferential query that asks the expert to rank the alternatives in $\C_k$ from the most preferred to the least preferred is a called \textit{k-wise ranking}. The expert's feedback is the ordering $\x_{\pi(\C_k)_1} \succ ... \succ \x_{\pi(\C_k)_k}$ for some permutation $\pi$. %, is also called $k$-wise ranking.
\end{definition}
Note that the top-ranked sample of k-wise ranking is the same as the k-wise comparison choice, and the k-wise ranking can be formed as a sequence of k-wise comparisons by repeatedly removing the selected candidate from the choice set, as assumed in the Plackett-Luce model \citep{plackett1975analysis}. Hence, common theoretical tools cover both cases.

\subsection{The $k$-wise winner}

The chosen point of a $k$-wise comparison is central to us for two reasons. First, its distribution provides the likelihood for inference when data come in the format of $k$-wise rankings or comparisons. Second, its distribution in the limit as $k \to \infty$ offers insights for designing a prior that mitigates the challenge of collapsing and diverging probability mass.

\begin{definition}[$k$-wise winner]
A random vector $X^{\star}_k$ given by the following generative process is called as \textit{k-wise winner}.
\begin{enumerate}
    \item Sample $k$-samples from $\lambda(\x)$, and denote $\C_k = \{\x_1,...,\x_k\}$.
    \item Sample $\x$ from a Categorical distribution with support $\C_k$ and with probabilities given by $\textrm{RUM}(\C_k; \log p_{\star}(\x),\text{Exp}(s))$.
\end{enumerate}
\end{definition}
The density of the $k$-wise winner is proportional to the $k$-wise comparison likelihood $p(\x \succ \C_k \mid \C_k)$, namely to $p(\x \succ \C_k \mid \C_k)\lambda(\C_k)$. The likelihood of the $k$-wise comparisons takes the following form.
\begin{proposition}\label{k-wise-comparison-likelihood}
Let $\C_k$ be a choice set of $k \geq 2$ alternatives. Denote $C = \C_k \setminus \{\x\}$ and $f_C^{\star} = \max_{\x_j \in C}f(\x_j)$. The winning probability of a point $\x \in \C_k$ equals to
\begin{align}\label{likelihood}
    \textrm{P}\left(\x\ \succ \C_k \mid \C_k\right) = \sum_{l=0}^{k-1}\frac{\exp\big(-s(l+1)\max\{f_C^{\star}-f(\x),0\}\big)}{l+1}\sum_{\text{sym}: \x_j \in C}^{l}-\exp(-s(f(\x) - f(\x_j))),
\end{align}
where $\sum_{\text{sym}: \x_j \in C}^l$ denotes the $l^{th}$ elementary symmetric sum of the set $C$.
\end{proposition}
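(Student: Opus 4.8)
The plan is to compute $\textrm{P}(\x \succ \C_k \mid \C_k)$ directly from the $\argmax$ event, conditioning on the noise at $\x$ and integrating it out. Fix the choice set, write $C = \C_k \setminus \{\x\}$ and $\delta_j := f(\x) - f(\x_j)$ for $\x_j \in C$, and let $W := W(\x)$ and $W_j := W(\x_j)$ be the i.i.d.\ $\text{Exp}(s)$ noise variables. Since the noise is continuous, the $\argmax$ in \eqref{RUM2} is almost surely a singleton, so $\x$ is chosen if and only if $f(\x) + W > f(\x_j) + W_j$ for every $\x_j \in C$, i.e.\ $W_j < W + \delta_j$ for all $j$. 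Conditioning on $W = w$ and using independence, the conditional winning probability is $\prod_{\x_j \in C} F(w + \delta_j)$, where $F$ is the $\text{Exp}(s)$ CDF, equal to $1 - e^{-st}$ for $t > 0$ and to $0$ for $t \le 0$.

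The first substantive step is identifying the effective range of integration. A factor $F(w + \delta_j)$ vanishes unless $w > -\delta_j = f(\x_j) - f(\x)$, so the product is nonzero only when $w$ exceeds $\max_{\x_j \in C}\!\big(f(\x_j) - f(\x)\big)$; together with $w \ge 0$ this gives the lower limit $m := \max\{f_C^\star - f(\x),\, 0\}$, and for $w > m$ every factor equals $1 - e^{-s(w + \delta_j)}$. Therefore
\[
\textrm{P}(\x \succ \C_k \mid \C_k) = \int_m^{\infty} s\, e^{-sw} \prod_{\x_j \in C} \big( 1 - e^{-s(w + \delta_j)} \big)\, dw .
\]

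Next I would expand the product. Writing each factor as $1 + \big(-e^{-s\delta_j}\big) e^{-sw}$ and multiplying out yields $\prod_{\x_j \in C}\big(1 - e^{-s(w+\delta_j)}\big) = \sum_{l=0}^{k-1} e^{-slw}\,\sigma_l$, where $\sigma_l$ is the $l$-th elementary symmetric polynomial of the set $\{-e^{-s(f(\x) - f(\x_j))} : \x_j \in C\}$ --- exactly the $\sum_{\mathrm{sym}:\,\x_j \in C}^{l}$ in the statement, with the sign $(-1)^l$ absorbed into the negated arguments. Interchanging the finite sum with the integral and using $\int_m^{\infty} s\, e^{-s(l+1)w}\, dw = \tfrac{1}{l+1} e^{-s(l+1)m}$ then reproduces \eqref{likelihood} term by term.

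The only genuine obstacle is the bookkeeping around the truncation of the exponential density at $0$: since individual $\delta_j$ may be negative, one has to check carefully that $f_C^\star - f(\x)$ clipped at $0$ is the binding lower limit, and that the product $\prod_{\x_j\in C}(1 - e^{-s(w+\delta_j)})$ holds without any further clipping on exactly that range. Everything afterwards --- the elementary-symmetric expansion and a single exponential integral --- is routine. One could alternatively argue by induction on $k$ via the sequential-removal (Plackett--Luce) representation of a $k$-wise choice, but the direct integration above is more transparent.
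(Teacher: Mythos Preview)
Your proposal is correct and follows essentially the same route as the paper: condition on $W(\x)=w$, use independence to factorize into $\prod_j F(w+\delta_j)$, absorb the indicator constraints into the lower limit $m=\max\{f_C^\star-f(\x),0\}$, expand the product via elementary symmetric polynomials in $-e^{-s\delta_j}$, and integrate each resulting exponential. The only cosmetic difference is that the paper carries the indicator $\mathbf{1}_w$ through the calculation and factors out $e^{-sw}$ from each term before expanding, whereas you fold the truncation into the integration limits up front; both arrive at the same formula by the same mechanism.
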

\begin{proof}
    See \ref{supp_proofs}.
\end{proof}
%The probability in Proposition \ref{k-wise-comparison-likelihood} reduces to the pairwise comparisons likelihood, which is the cumulative distribution function of Laplace$(0,1/s)$ evaluated at $f(\x) - f(\x')$.
The $k$-wise ranking likelihood is a product of the $k$-wise comparison likelihoods where the winners are sequentially removed from the choice set and provided in Appendix \ref{supp_theoretical} as Equation \eqref{ranking_likelihood}.

In the limit of infinitely many comparisons, the $k$-wise distribution reduces to a tempered belief density tilted by the sampling distribution $\lambda$ \citep{malmberg2012argmax,malmberg2014probabilistic}. 

\begin{theorem}\label{main-theorem}
If $f$ is bounded and continuous, then the limit distribution of $X^{\star}_k$ as $k \to \infty$ exists and its density is given by,
\begin{align}\label{argmax_density_over_choicespace}
        p(\x) &= \frac{\exp\left(sf(\x)\right)\lambda(\x)}{\int \exp\left(sf(\x')\right) \lambda(\x') d\x'}.
\end{align}
\end{theorem}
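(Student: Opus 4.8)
The plan is to derive a closed form for the density $p_k$ of the $k$-wise winner $X^\star_k$ valid at every finite $k$, and then pass to a pointwise limit of Laplace/Watson type, in which a factor of $k$ balances an integrand decaying like a $(k-1)$-st power.

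\textbf{Closed form for $p_k$.} By the generative definition of $X^\star_k$ and exchangeability of $\C_k$, the winner has density $p_k(\x)=k\,\lambda(\x)\,\E_{\x_2,\dots,\x_k\sim\lambda}\!\big[\Pr(\x\succ\{\x,\x_2,\dots,\x_k\})\big]$ at any $\x$ with $\lambda(\x)>0$. Condition on the noise value $W(\x)=w$; since the $W(\x_j)$ are i.i.d.\ $\text{Exp}(s)$ with CDF $F(t)=(1-e^{-st})\mathbf 1[t\ge0]$ and $\x_2,\dots,\x_k$ are independent, the expectation of the product of winning events factorizes into a product of identical expectations, giving
\[
 p_k(\x)=k s\,\lambda(\x)\int_0^\infty e^{-sw}\,G(\x,w)^{k-1}\,dw,\qquad G(\x,w):=\E_{\x'\sim\lambda}\!\big[F\big(f(\x)+w-f(\x')\big)\big].
\]
This is just the integral form of the likelihood in Proposition~\ref{k-wise-comparison-likelihood}; note $G(\x,\cdot)$ is nondecreasing and $G(\x,w)<1$ for every finite $w$, so the integrand vanishes pointwise as $k\to\infty$ and the limit must come from a shrinking region near $w=\infty$.

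\textbf{Using boundedness of $f$.} Put $w_0:=\sup f-\inf f<\infty$ and $Z:=\E_{\x'\sim\lambda}[e^{sf(\x')}]\in(0,\infty)$, both finite precisely because $f$ is bounded. For $w\ge w_0$ one has $f(\x)+w-f(\x')\ge0$ for $\lambda$-a.e.\ $\x'$, so $F$ is on its smooth branch and $G(\x,w)=1-Ze^{-sf(\x)}e^{-sw}$ \emph{exactly}. Split $\int_0^\infty=\int_0^{w_0}+\int_{w_0}^\infty$. On $[0,w_0]$, monotonicity gives $G(\x,w)\le G(\x,w_0)=:\rho(\x)\in[0,1)$, so that piece of $p_k(\x)$ is $\le k\lambda(\x)\rho(\x)^{k-1}\to0$. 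On $[w_0,\infty)$, substitute $u=e^{-sw}$ and write $a:=Ze^{-sf(\x)}>0$, $u_0:=e^{-sw_0}$ (with $au_0\in(0,1]$ since $G(\x,w_0)\ge0$):
\[
 k s\,\lambda(\x)\int_{w_0}^{\infty}e^{-sw}\big(1-Ze^{-sf(\x)}e^{-sw}\big)^{k-1}dw
 = k\,\lambda(\x)\int_0^{u_0}(1-au)^{k-1}du
 = \lambda(\x)\,\frac{1-(1-au_0)^{k}}{a}.
\]
Since $au_0\in(0,1]$ we have $(1-au_0)^k\to0$, so this tends to $\lambda(\x)/a=\lambda(\x)e^{sf(\x)}/Z$. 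Adding the two pieces, $p_k(\x)\to\lambda(\x)e^{sf(\x)}/Z$, which is \eqref{argmax_density_over_choicespace} because $Z=\int\exp(sf(\x'))\lambda(\x')\,d\x'$.

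\textbf{Upgrading to the limit law, and the hard part.} The limiting function integrates to $1$, so Scheff\'e's lemma turns pointwise convergence of the densities into total-variation convergence of the laws of $X^\star_k$, establishing existence of the limit distribution. The one genuinely delicate point is the middle display: because $e^{-sw}G(\x,w)^{k-1}\to0$ pointwise, the entire limit is produced by a $1/k$-scale sliver of mass near $w=\infty$, and the $k\cdot(\text{small})^{k-1}$ competition must be resolved exactly rather than by a naive limit-in-the-integral. The change of variables $u=e^{-sw}$ is precisely what makes that region's contribution an elementary integral and sidesteps any dominated-convergence argument inside the $w$-integral; it is also worth noting that the argument uses boundedness of $f$ (for finiteness of $Z$ and for a uniform $w_0$), not continuity.
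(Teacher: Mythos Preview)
Your argument is correct. The paper itself does not give a self-contained proof at all: it simply invokes Theorem~18.4 of \citet{malmberg2014probabilistic} and patches a minor standardization error in that source. You instead compute the limit directly. The key ideas that make your route work are (i) the exchangeability-plus-conditioning step that reduces $p_k(\x)$ to $k\,\lambda(\x)\int_0^\infty se^{-sw}G(\x,w)^{k-1}\,dw$, (ii) boundedness of $f$ giving a finite threshold $w_0$ beyond which $G(\x,w)=1-Ze^{-sf(\x)}e^{-sw}$ holds \emph{exactly}, and (iii) the substitution $u=e^{-sw}$ that converts the remaining tail into an elementary Beta-type integral whose $k\to\infty$ limit is $\lambda(\x)e^{sf(\x)}/Z$ without any dominated-convergence maneuvering. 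Finishing with Scheff\'e's lemma is clean and gives you total-variation (hence distributional) convergence, which is actually a slightly stronger conclusion than the bare statement.

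Two observations worth keeping. First, as you note, continuity of $f$ is never used---only boundedness (for $w_0<\infty$ and $0<Z<\infty$); the paper's hypothesis is stronger than necessary for this proof. Second, your bound on the $[0,w_0]$ piece is pointwise in $\x$ (since $\rho(\x)<1$ depends on $\x$), which is exactly what Scheff\'e needs; a reader might momentarily worry about uniformity, but none is required. Compared with the paper's approach, yours trades a literature citation for an explicit one-page computation, which has the pedagogical benefit of making the mechanism---the $k$ versus $(\cdot)^{k-1}$ balance resolved by the $u$-substitution---completely transparent.
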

\begin{proof}
    Apply Theorem 18.4 in \citep{malmberg2014probabilistic} to our setting and note that the first sentence in the proof of Theorem 18.4 is incorrect. For a random variable $Y = X/s$ with $X \sim \textrm{Exp}(s)$ it holds that $Y \sim \textrm{Exp}(s^2)$. However, for a random variable $Y = sX$ with $X \sim \textrm{Exp}(s)$ it holds that $Y \sim \textrm{Exp}(1)$. Thus, the correct standardization is $Y \leftarrow sY$. %With this standardization, the proved basic case $m(x) + \textrm{Exp}(1)$ can be entended to the scenario $n(x) + \textrm{Exp}(s)$ by defining $n(x) := sm(x)$, since $s\textrm{Exp}(s) = \textrm{Exp}(1)$.
    %Note that our stronger assumptions on $f$ satisfy the required conditions for Theorem 18.4 in \citep{malmberg2014probabilistic}. The convergence is in the sense of weak convergence of probability measures.
\end{proof}

\subsection{The $k$-wise winner distribution as a tilted and tempered belief density}

Building on the definitions and theorems above, we now introduce the central idea of how to model the belief density based on the $k$-wise winner distribution.
The RUM precision parameter $s$ acts as a temperature parameter for the belief density, as $p(\x) \propto \lambda(\x) p_{\star}(\mathbf{x})^s$, by Eq.~\eqref{argmax_density_over_choicespace}. 
In general, there is no connection between $\lambda(\x)$ and $p_{\star}(\x)$, but intuition can be gained by considering some extreme cases. For $\lambda=p_{\star}$ we have $p(\x) \propto p_{\star}(\x)^{s+1}$, whereas for uniform $\lambda(\x)$ and $s=1$ the limit distribution is the belief. This is also apparent from Corollary \ref{limit_rum_same_as_sampling}. However, our interest is in cases where $k$ is finite.

For $k<\infty$, forming the $k$-wise winner distribution requires marginalising over the choice set (Proposition \ref{k-wise-comparison-likelihood}). The formulas can be found in the Appendix (Corollary \ref{k-wise_winner_distribution}), and do not have elegant analytic simplifications. However, they empirically resemble tempered versions of the actual belief as illustrated in Figure~\ref{fig-tempering}. In other words, finite $k$ plays a similar role as the RUM noise precision $s$. When resorting to $k < \infty$, the choice distribution (Eq.~\eqref{choice_distribution_finite_k}) does not match the belief density for the true noise precision $s$, but we can improve the fit by selecting some alternative noise precision such that the choice distribution better approximates the belief. We will later use this connection to build a prior over the flow, and note that for this purpose we do not need an exact theoretical characterization: It is sufficient to know that for some choice of $k$ and $s$ the choice distribution can resemble the target density, at least to the degree that it can be used as a basis for prior information. Given that $k$ is typically fixed, $s$ can be varied in the prior, implying that the further $s$ is from the `optimal' value, the greater the prior misspecification. 

The idea is empirically illustrated in Figure~\ref{fig-tempering}.
The $k$-wise winner distribution for varying $k$ is shown in Figure \ref{fig2a}, where the belief density is a truncated standard normal on the interval $[-5, 5]$, comparisons are sampled uniformly over the interval, and $s = 1$. As $k$ increases, the $k$-wise winner distribution approaches the belief density (here $k=10$ is already very close), but we can equivalently approach the same density by changing the noise level (Figure~\ref{fig2b}). 
%In this particular 1D example, convergence is relatively fast, as the $k$-wise winner distribution with $k = 10$ approximates the belief density quite well.
\begin{figure*}[t]
  \centering
  \subfigure[]{\includegraphics[scale=0.26]{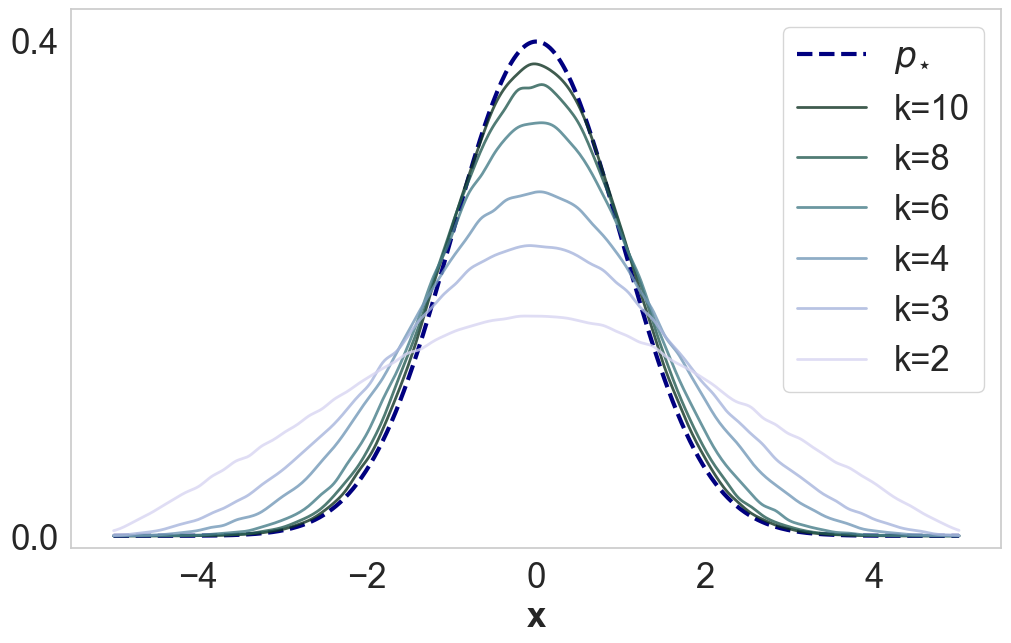}\label{fig2a}}
  \subfigure[]{\includegraphics[scale=0.26]{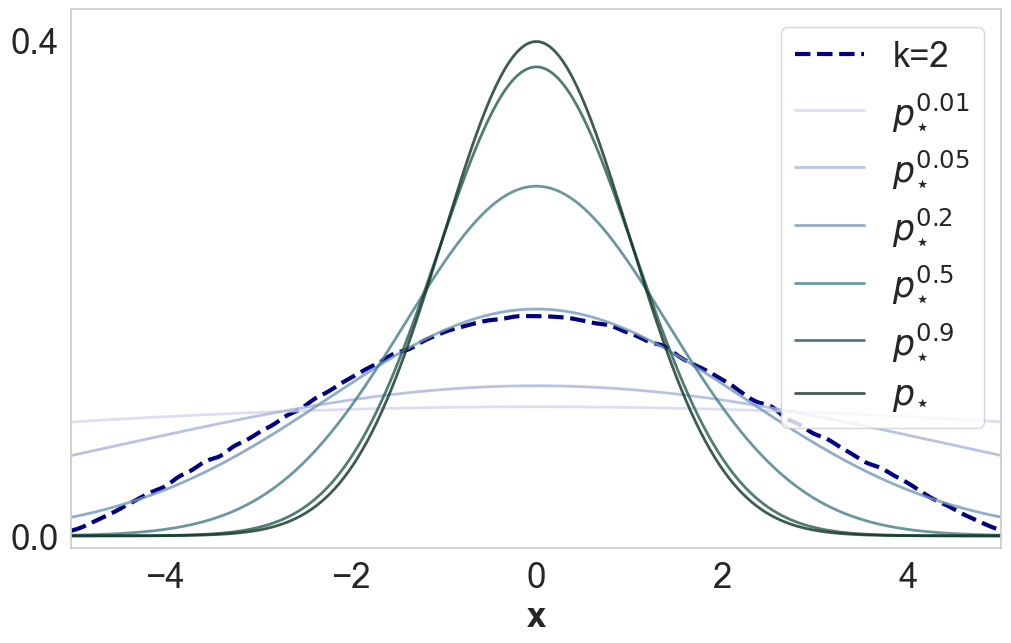}\label{fig2b}}
  \caption{(a) The $k$-wise winner distribution converges to the belief density as $k \rightarrow \infty$.
  (b) The $k$-wise winner distribution can be approximated by a tempered belief density. For example, the tempered belief density with an exponent $1/5$ approximates well the pairwise winner distribution.}
  \label{fig-tempering}
\end{figure*}

\section{Belief density as normalizing flow}
\label{sec-solution}

We model the belief density $p_{\star}$ with a normalizing flow \citep{rezende2015variational,papamakarios2021normalizing}. We introduce a new learning principle and objective for the flow which is compatible with any standard flow architecture, as long as it affords easy computation of the flow density. A normalizing flow is an invertible mapping $T$ from a latent space $\mathcal{Z} \subset \mathbb{R}^d$ into a target space $\mathcal{X} \subset \mathbb{R}^d$. $T$ consists of a sequence of invertible transformations, so that the forward (generative) direction $\z \mapsto T(\z)$ is fast to compute and the backward (normalizing) direction $\x \mapsto T^{-1}(\x)$ is either known in closed form or can be approximated efficiently.

The base distribution on $\mathcal{Z}$ is a simple distribution such as a multivariate normal, whose density is denoted by $p_{z}$. If we denote the parametrized $T$ by $T_{\phi}$ given the flow network parameters $\phi$, the parameterized model of the log belief density, denoted by $f_{\phi}$, can be written as,
\begin{align}\label{estimate_f}
f_{\phi}(\x) = \log p_{z}\left(T_{\phi}^{-1}(\x)\right) + \log |\det J_{T_{\phi}^{-1}}(\x)|,
\end{align}
where $J_{T_{\phi}^{-1}}$ is the Jacobian of $T_{\phi}^{-1}$.
What complicates the learning of $f_{\phi}$ in our case is the absence of a direct method to sample from $ p_{\star}(\x)$, ruling out the conventional algorithms \citep[e.g.,][]{papamakarios2021normalizing}. Instead, we devise a new learning objective explained next.

\subsection{Function-space Bayesian inference}

Our starting point is to perform Bayesian inference for the flow network parameters given preferential dataset $\D = \{(\x^{(i)},\C_k^{(i)}) \mid \x^{(i)} \succ \C_k^{(i)} \}_{i=1}^n$ ($k$-wise comparisons) or $\D = \{(\sigma^{(i)},\C_k^{(i)}) \mid \sigma^{(i)} \text{ is a permutation on } \C_k^{(i)} \}_{i=1}^n$ ($k$-wise rankings),
\begin{align*}
    p(\phi \mid \D) &\propto p(\D \mid \phi)p(\phi),
\end{align*}
where $p(\D \mid \phi)$ is the likelihood and $p(\phi)$ is the prior for the flow network parameters. It is difficult to devise a good prior $p(\phi)$, and we instead perform inference over the function \citep{wolpert1993bayesian},
\begin{align*}
    p(f_{\phi} \mid \D) &\propto p(\D \mid f_{\phi})p(f_{\phi})
\end{align*}
where $p(\D \mid f_{\phi})$ is the preferential likelihood for comparisons, Eq. \eqref{likelihood}, or rankings, Eq. \eqref{ranking_likelihood}. The function-space prior is easier to specify as we can focus on the characteristics of the log belief density itself, not its parametrization. The function-space prior is often evaluated at a finite set of representer points $\tilde{X} = (\tilde{\x}_1,...,\tilde{\x}_m)$, where $m$ should be chosen to be large to capture the behavior of the function at high resolution $p(f_{\phi}(\tilde{X}))$ \citep{wolpert1993bayesian,qiu2023should}. For example, when $f$ is a Gaussian process, the prior representer points in the posterior corresponds to the datapoints (e.g., Equation 3.12 in \citealp{rasmussen}). Following the considerations above, we construct our prior knowledge of $f_{\phi}$ on a subset of datapoints.

\subsection{Empirical functional prior}

To address the issue of collapsing or diverging probability mass, we introduce an empirical functional prior whose finite marginals at winner points $\{\x_1,...,\x_n\}$ are independently distributed as
\begin{align}\label{functional_prior}
    p(\f) \propto p_{\textrm{unif}}(\f) \prod_i \exp(\f_i),
\end{align}
where $\f := (f_{\phi}(\x_1),...,f_{\phi}(\x_n))$ and $p_{\textrm{unif}}$ is an uninformative bounded (hyper) prior that guarantees that the functional prior is proper. 

The functional prior Eq. \eqref{functional_prior} is a special case of a class of priors, $p(\f) \propto \prod_i \boldsymbol{\lambda}_i \exp(s \f_i)$, derived from the following decision-theoretic argument under the exponential RUM model. Let us decompose the preference dataset into winners and losers $\D_k = \D^{\succ}_k\cup \D^{\nsucc}_k$ by defining $\D^{\succ}_k := \{\x \mid \exists \C_k\ s.t.\ \x \succ \C_k\}$ and $\D^{\nsucc}_k := \D_k \setminus \D^{\succ}_k$. The functional prior is the probability of observing only the $k$-wise winners,
\begin{align*}
    p(\f) \propto p(\D^{\succ}_k \mid \f, s, \lambda)p_{\textrm{unif}}(\f),
\end{align*}
where $p_{\textrm{unif}}(\f) \propto 1$ (when $f$ is bounded, Assumption \ref{assumption2}). The idea is to consider higher 
$k$ or $s$ (less noise) than the true ones, as both choices make the density more peaked around the modes of $p_{\star}$ (see Figures \ref{fig2a} and \ref{fig2b}). This choice encourages the flow to place more mass on the winner points in a way that is consistent with the underlying decision model. We consider $k=\infty$ and $s \in (0,\infty)$, where $s$ should be an increasing function of the true $k$. While setting $k=\infty$ reduces the functional prior to a closed form Eq. \eqref{argmax_density_over_choicespace} by Theorem \ref{main-theorem}, the normalizing constant remains difficult. However, for the special case of $\lambda \propto 1$ and $s=1$, the normalizing constant equals one. We make this choice to retain computational tractability, reminding that the construct is only used as a prior intended for regularizing the solution and does not need to match the true density as such. This comes at the cost of increased prior misspecification, which can, in turn, degrade the quality of the fit, especially when the true value of $k$ is small (compare Figure~\ref{fig_pairwise_onemoon} (k=2) versus Figure~\ref{fig1d} (k=5)).

\subsubsection{Function-space maximum a posteriori}

We train the flow $T_{\phi}$ by maximizing the unnormalized function-space posterior density of $f_{\phi}$ conditioned on the preferential data $\D = \D^{\succ}\cup \D^{\nsucc}$, using stochastic gradient ascent \citep{kingma2014adam}. Denoting all points in $\D$ by $\X$ and all winner points in $\D^{\succ}$ by $\X_{\succ}$, the training objective is
\begin{align} \label{eq:objective}
    \sum \log \mathcal{L}(\D\mid f_{\phi}(\X),s) + \sum f_{\phi}(\X_{\succ}),
\end{align}
where $\mathcal{L}$ is the $k$-wise comparison or ranking likelihood as per Eqs. \eqref{likelihood} and \eqref{ranking_likelihood}. In the case of ranking data, the winner point $\x \in \X_{\succ}$ is the first-ranked alternative in each individual $k$-wise ranking, meaning that $\x$ is a $k$-wise winner. A global optimum of Eq.~\eqref{eq:objective} is the function-space maximum a posteriori estimate of the belief density. Pseudo-codes for the overall algorithm (Algorithms~\ref{alg1}) and the forward pass for the unnormalized log-posterior (Algorithms~\ref{alg2}) are provided in the Appendix.
The computational cost of training is similar to standard flow learning from equally many samples.

\section{Experiments}
\label{sec-experiments}

We first evaluate our method on synthetic data with choices made by simulating the RUM model, to validate the algorithm in cases where the ground truth is known while covering both cases where the responses follow the assumed RUM model and where they do not.
We then demonstrate how the method could be used in a realistic elicitation scenario, using a large language model (LLM) as a proxy for a human expert \citep{choi2022lmpriors}. As with a real human, an LLM is unlikely to follow the exact RUM model, but compared to a real user, the LLM expert can tirelessly answer unlimited questions and possible ethical issues and risks relating to human subjects are avoided. LLMs carry their own biases and risks \citep{tjuatja2024llms}, but the focus here is on evaluating our algorithm. Code for reproducing all experiments is available at \href{https://github.com/petrus-mikkola/prefflow}{https://github.com/petrus-mikkola/prefflow}.

\paragraph{Setup}
In the main experiments we use $k$-wise ranking with $k=5$, using relatively few queries to remain relevant for the intended use-cases where the expert's capacity in providing the information is clearly limited. Since learning a preference of higher dimensions is more difficult, we scale the number of queries $n$ linearly with $d$ but still stay substantially below the large-sample scenarios typically considered in flow learning. The details, together with the choice of the flow and the candidate distribution $\lambda$, are provided below for each experiment. As a flow model, we use RealNVP \citep{dinh2017density} when $d=2$ and Neural Spline Flow \citep{durkan2019} when $d>2$, implemented on top of \citep{Stimper2023}. For more details, see Appendix \ref{appendix-experimental-details}.

\paragraph{Evaluation}

We assess performance qualiatively via visual comparison of $2d$ and $1d$ marginal distributions between the target belief density and the flow estimate of the belief density, and quantitatively by numerically computing three metrics: the log-likelihood of the preferential data, the Wasserstein distance, and the mean marginal total variation distance (MMTV; \citealp{acerbi2020variational}) between the target and the estimate. The numerical results are reported as the means and standard deviations of the metrics over replicate runs. As a baseline, we report the results of a method that uses the same preferential comparisons and optimizes the same training objective, but instead of using a flow to represent $\exp(f)$ we directly assume the density is a factorized normal distribution parameterized by means and (log-transformed) standard deviations of all dimensions. This exact method has not been presented in the previous literature, but was designed to validate the merit of the flow representation.

\begin{table}[t]
\caption{Accuracy of the density represented as a flow (\emph{flow}) compared to a factorized normal distribution (\emph{normal}), both learned from preferential data, in three metrics: log-likelihood, Wasserstein distance, and the mean marginal total variation (MMTV). Averages over 20 repetitions (but excluding a few crashed runs), with standard deviations.}
\label{experimental-results}
{\small
\centering
\vspace{0.1cm}
%\begin{tabular}{@{}lcc|cc|cc@{}}
\begin{tabular}{@{}l@{}|cc|cc|cc@{}}
\toprule
& \multicolumn{2}{c}{log-likelihood ($\uparrow$)} & \multicolumn{2}{c}{wasserstein} ($\downarrow$) & \multicolumn{2}{c}{MMTV ($\downarrow$)} \\
\cmidrule(lr){2-3} \cmidrule(lr){4-5} \cmidrule(lr){6-7}
 & \emph{normal} & \emph{flow} & \emph{normal} & \emph{flow} & \emph{normal} & \emph{flow} \\
\midrule
Onemoon2D & -1.98 $\scriptstyle{(\pm 0.12)}$ & -1.09 $\scriptstyle{(\pm 0.12)}$ & 0.45 $\scriptstyle{(\pm 0.04)}$ & 0.25 $\scriptstyle{(\pm 0.04)}$ & 0.30 $\scriptstyle{(\pm 0.02)}$ & 0.21 $\scriptstyle{(\pm 0.02)}$ \\
Gaussian6D & -1.40 $\scriptstyle{(\pm 0.07)}$ & -0.12 $\scriptstyle{(\pm 0.02)}$ & 1.74 $\scriptstyle{(\pm 0.06)}$ & 1.29 $\scriptstyle{(\pm 0.05)}$ & 0.20 $\scriptstyle{(\pm 0.01)}$ & 0.09 $\scriptstyle{(\pm 0.01)}$ \\
Twogaussians10D & -3.99 $\scriptstyle{(\pm 0.06)}$ & -0.09 $\scriptstyle{(\pm 0.01)}$ & 7.31 $\scriptstyle{(\pm 0.12)}$ & 2.60 $\scriptstyle{(\pm 0.06)}$ & 0.47 $\scriptstyle{(\pm 0.01)}$ & 0.08 $\scriptstyle{(\pm 0.00)}$ \\
Twogaussians20D & -6.35 $\scriptstyle{(\pm 0.12)}$ & -0.08 $\scriptstyle{(\pm 0.01)}$ & 11.07 $\scriptstyle{(\pm 0.15)}$ & 4.55 $\scriptstyle{(\pm 0.07)}$ & 0.47 $\scriptstyle{(\pm 0.00)}$ & 0.08 $\scriptstyle{(\pm 0.00)}$ \\
Funnel10D & -2.21 $\scriptstyle{(\pm 0.06)}$ & -0.09 $\scriptstyle{(\pm 0.01)}$ & 5.13 $\scriptstyle{(\pm 0.04)}$ & 3.92 $\scriptstyle{(\pm 0.04)}$ & 0.27 $\scriptstyle{(\pm 0.00)}$ & 0.18 $\scriptstyle{(\pm 0.01)}$ \\
Abalone7D & -5.53 $\scriptstyle{(\pm 0.03)}$ & -2.16 $\scriptstyle{(\pm 0.12)}$ & 0.53 $\scriptstyle{(\pm 0.00)}$ & 0.34 $\scriptstyle{(\pm 0.01)}$ & 0.26 $\scriptstyle{(\pm 0.00)}$ & 0.29 $\scriptstyle{(\pm 0.01)}$ \\
\bottomrule
\end{tabular}}
\end{table}

\subsection{Synthetic tasks}\label{section:synthetic}

First, we study the method on synthetic scenarios. For the first set of experiments, we assume a known density $p_{\star}$ and simulate the preferential responses from the assumed $\textrm{RUM}(\C_k,\log p_{\star},\text{Exp}(1))$. We consider five different target distributions: Onemoon2D, Gaussian6D, Twogaussians10D, Twogaussians20D, and Funnel10D. The densities of the target distributions can be found in Appendix \ref{appendix-target-distributions}. For all cases we used $100d$ queries and $\lambda$ as a mixture of uniform and Gaussian distribution centered on the mean of the target, with the mixture probability $1/3$ for the Gaussian; this technical simplification ensures a sufficient ratio of the samples to align with the target density even when $d$ is high. Table \ref{experimental-results} shows that for all scenarios we can learn a useful flow; all metrics are substantially improved compared to the method based on the normal model and visual inspection (Appendix~\ref{plots_learned_densities}) confirms the solutions match the true densities well.

\textbf{Abalone regression data.}
Having validated the method when the queries follow the assumed RUM model with a synthetic belief density, we consider a more realistic target density. We first fit a flow model to the continuous covariates of the regression data \emph{abalone} \citep{misc_abalone_1}, and then use the fitted flow as a ground-truth belief density in the elicitation experiment. The elicitation queries correspond to all $k$-combinations of the dataset size $n=4177$. The numerical results are again provided in Table~\ref{experimental-results}. Figure~\ref{fig_abalone} shows that the learned flow captures the correlations between variables almost perfectly, which can be hard to see as the flow (heatmap) overlaps the true density (contour). There is some mismatch in the marginals, which is also indicated by the MMTV metric. In terms of the Wasserstein distance and visual comparison (Figure~\ref{fig:baselineabalone}), the flow based method clearly outperforms the baseline.

\begin{figure*}[t]
  \centering
  \includegraphics[width=0.36\textwidth]{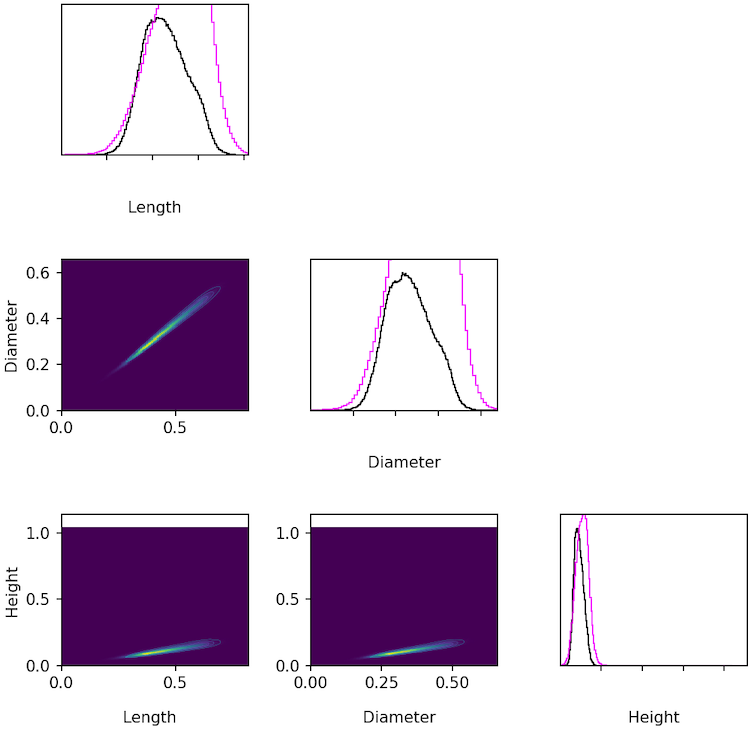}
  \includegraphics[width=0.63\textwidth]{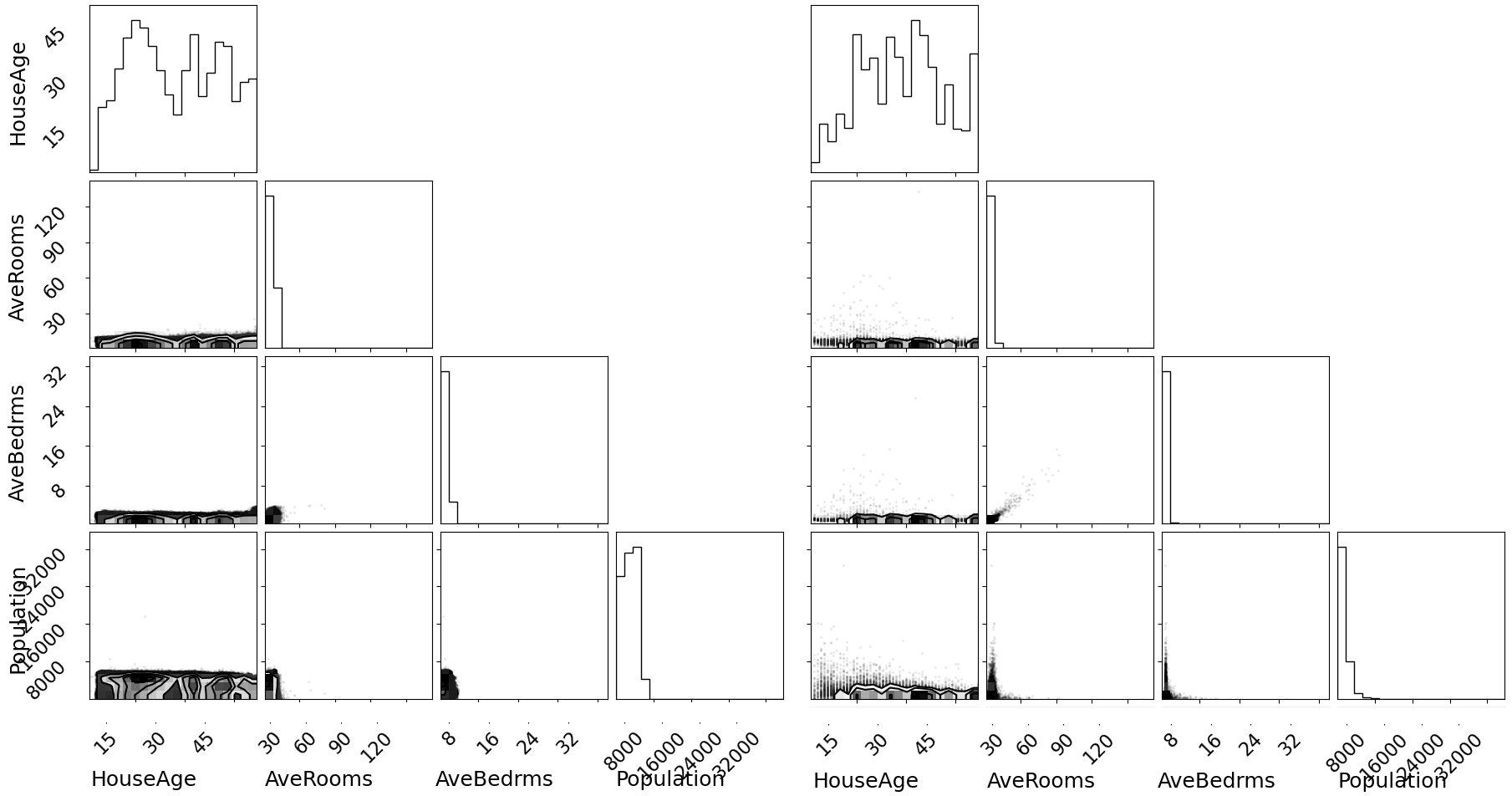}

  \caption{Cross-plot of selected variables of the estimated flow in the Abalone (left) and LLM knowledge elicitation experiment (middle), and the marginal density of the same variables for the ground truth density in the LLM experiment (right). See Figures \ref{fig-llmexp-full} and \ref{fig:abalonefull} for other variables.  
  }\label{fig_abalone}
    \label{fig-llmexp}
\end{figure*}

\subsection{Expert elicitation with LLM as the expert}
\label{sec-llmexp}

In this experiment, we prompt a LLM to provide its belief on how the features of the California housing dataset \citep{pace1997sparse} are distributed. This resembles a hypothetical expert elicitation scenario, but the human expert is replaced with a LLM (Claude 3 Haiku by Anthropic in March 2024, see Appendix \ref{appendix-llm-prior} for the prompt and detailed setup) for easier experimentation. From the perspective of the flow learning algorithm the setup is similar to the intended use-cases. 

We query in total 220 $k$-wise rankings through prompting, where the alternatives $\C_k$ are uniformly sampled over the domain specified by $1$st and $99$th percentiles of each variable in the California housing dataset. The range was chosen to ensure $\lambda(x)$ covers approximately the support of the density, but avoiding outliers. While we lack access to the ground-truth belief density, we can compare the learned LLM's belief density to the empirical data distribution of the California housing dataset, not known for the LLM. Figure \ref{fig-llmexp} shows that there is a remarkable similarity between the distributions such as the marginals of the variables \emph{AveRooms}, \emph{AveBedrms}, \emph{Population}, and \emph{AveOccup} are all correctly distributed on the lower ends of their ranges (which are very broad due to the uniform $\lambda(x)$). 
Figure~\ref{fig_ablation_llm} shows that the flow trained without the functional prior of \eqref{functional_prior} is considerably worse, confirming the FS-MAP estimate is superior to maximum likelihood.
While there might be multiple mechanisms for how the LLM forms its knowledge about this specific dataset \citep{NEURIPS2020_1457c0d6},  many of the features have clear intuitive meaning. For instance, houses are all but guaranteed to have only a few bedrooms, instead of tens.

\subsection{Ablation study}\label{sec:ablation}

We validate the sensitivity of the results with respect to the cardinality of the choice set $k$, the number of comparisons/rankings $n$, the noise level $1/s$, and the choice of distribution $\lambda$ from which the candidates are sampled. In this section, we report a subset of the analysis for varying $k$, while the rest can be found in Appendix \ref{appendix:ablation_studies}. Table \ref{tab:ablation_varying_k} presents the results of experiments on synthetic scenarios (Section~\ref{section:synthetic}) by varying $k \in \{2,3,5,10\}$ while keeping $n$ fixed. We observe that the accuracy naturally improves as a function of $k$. The common special case in which the expert is queried through pairwise comparisons ($k=2$) is shown in Figure~\ref{fig_pairwise} for the Onemoon2D experiment with $n=100$ and the Gaussian6D experiment with $n=1000$. The results indicate that we can already roughly learn the target with $k=2$ that is most convenient for a user, but naturally with somewhat lower accuracy. For further analysis and more details, see Appendix \ref{appendix:ablation_studies}. The main takeaway is that low values of $s$ or $k$, especially when $n$ is large, can cause the flow estimate to become overly dispersed due to higher prior misspecification.

\begin{figure*}[t]
  \centering
  \subfigure[Onemoon2D, $k=2$, $n=100$]{\includegraphics[scale=0.16]{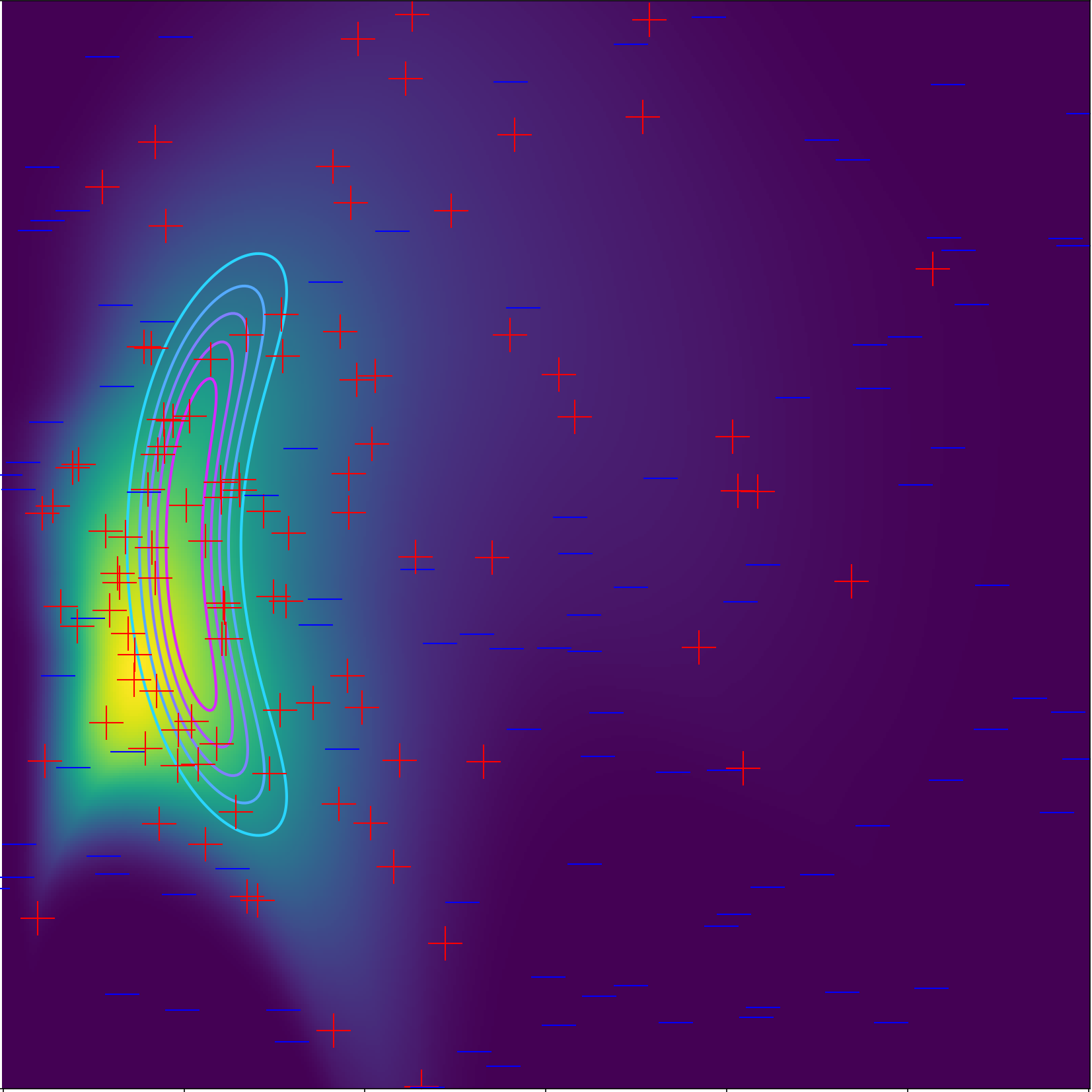}\label{fig_pairwise_onemoon}}
   \subfigure[Gaussian6D, $k=2$, $n=1000$]{\includegraphics[scale=0.13]{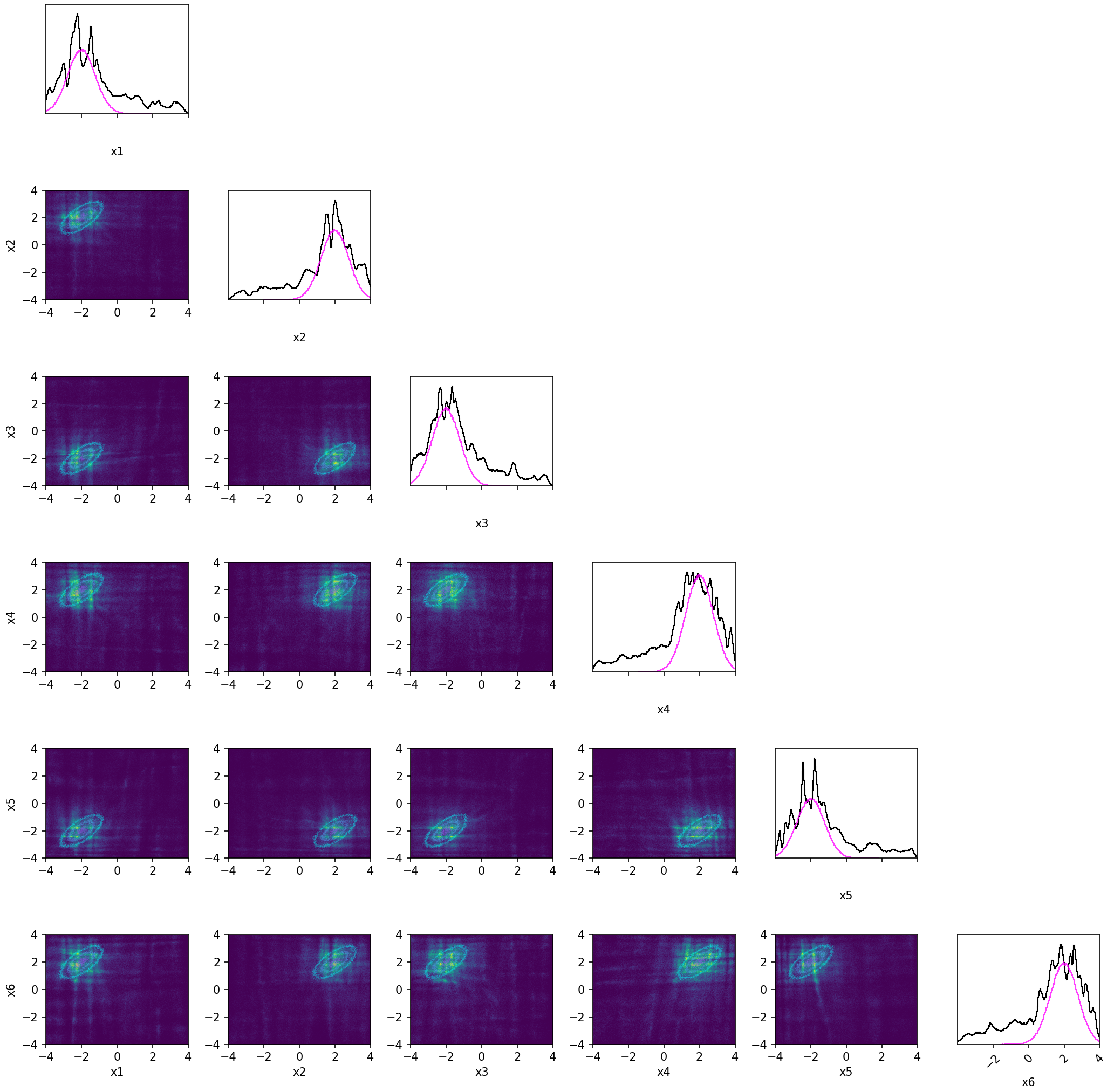}\label{fig_pairwise_gaussina6d}}
   \caption{Illustration of belief densities elicited from pairwise comparisons by a normalizing flow.
  } \label{fig_pairwise}
\end{figure*}

\begin{table}[h!]
\centering
\caption{Wasserstein distances for varying $k$ across different experiments}
\label{tab:ablation_varying_k}
\begin{tabular}{@{}lcccc@{}}
\toprule
 & $k = 2$ & $k = 3$ & $k = 5$ & $k = 10$ \\ 
\midrule
Onemoon2D ($n=100$) & 0.70 $\scriptstyle{(\pm 0.09)}$ & 0.39 $\scriptstyle{(\pm 0.05)}$ & 0.17 $\scriptstyle{(\pm 0.03)}$ & 0.11 $\scriptstyle{(\pm 0.03)}$ \\ 
Gaussian6D ($n=100$) & 2.69 $\scriptstyle{(\pm 0.30)}$ & 2.01 $\scriptstyle{(\pm 0.25)}$ & 1.46 $\scriptstyle{(\pm 0.11)}$ & 1.04 $\scriptstyle{(\pm 0.04)}$ \\ 
Funnel10D ($n=500$) & 4.82 $\scriptstyle{(\pm 0.12)}$ & 4.36 $\scriptstyle{(\pm 0.12)}$ & 3.96 $\scriptstyle{(\pm 0.05)}$ & 3.83 $\scriptstyle{(\pm 0.04)}$ \\ 
Twogaussians10D ($n=500$) & 5.47 $\scriptstyle{(\pm 0.24)}$ & 3.81 $\scriptstyle{(\pm 0.26)}$ & 2.57 $\scriptstyle{(\pm 0.08)}$ & 2.20 $\scriptstyle{(\pm 0.02)}$ \\ 
\bottomrule
\end{tabular}
\end{table} 

\section{Discussion}

Theoretical and empirical analysis validate our main claim: It is possible to learn flexible distributions from preferential data, and the proposed algorithm solves the problem for some non-trivial but synthetic scenarios, with otherwise arbitrary but largely unimodal true beliefs. However, open questions worthy of further investigation remain on the path towards practical elicitation tools.

The method is efficient only for exponential noise with $s=1$ that gives an analytic prior. Other choices would require explicit normalization or energy-based modeling techniques \citep{chao2024training}. For a given RUM precision we can, in principle, solve for $k$ such that $s=1$ becomes approximately correct due to the tempering interpretation (Figure~\ref{fig-tempering}), but there are no guarantees that $s=1$ is good enough for any $k$ sufficiently small for practical use, and this requires an explicit estimate of the noise precision. The ablation studies show that for fixed $k$, increasing $n$ generally improves the accuracy and already fairly small $n$ is sufficient for learning a good estimate (Table~\ref{tab:ablation_varying_n}). For very large $n$, the accuracy can slightly deteriorate. We believe that this is due to prior misspecification that encourages overestimation of the variation due to the fact that $k$ is finite but in the prior it is assumed to be infinite. Figure~\ref{fig:ablation_onemoon} confirms that for a large $n$ the shape of the estimate is extremely close to the target density and the slightly worse Wasserstein distance is due to overestimating the width.

We primarily experimented with k-wise ranging with $k=5$ and relatively few comparisons. However, we demonstrated that we can learn the beliefs with somewhat limited accuracy already from the most convenient case of pairwise comparisons ($k=2$), which is important for practical applications.
Finally, we focused on the special case of sampling the candidates independently from $\lambda(\x)$. In many elicitation scenarios they could be result of some active choice instead, for example an acquisition function maximizing information gain \citep{mackay1992information}. The basic learning principle generalizes for this setup, but additional work would be needed for theoretical and empirical analysis.

\section{Conclusions}

The current tools for representing and eliciting multivariate human beliefs are fundamentally limited. This limits the value of knowledge elicitation in general and introduces biases that are difficult to analyze and communicate when the true beliefs do not match the simplified assumed families.
Modern flexible distribution models offer a natural solution for representing also complex human beliefs, but until now we have lacked the means of inferring them from ecologically valid human judgements. We provided the first such method by showing how normalizing flows can be estimated from preferential data using a functional prior induced by the setup. Our focus was in specifying the problem setup and validating the computational algorithm, paving way for future applications involving real human judgements. Despite focusing on the scenario where the elicitation judgements are made by a human expert, the algorithm can be used for learning flows from all kinds of comparison and ranking data.

\paragraph{Broader Impact}
Our goal is to eventually provide methods for accurately characterising human expert beliefs, complementing the existing toolbox with techniques that make less restrictive assumptions and hence support adaptation of better computational tools in a broad range of applications. Knowledge elicitation tools are frequently used e.g. in decision-making and policy recommendations as assistive tools. For such applications, it is critically important to ensure that the mathematical tools are reliable and transparent, and further validation of the methodology is needed before the specific method proposed here could be used in critical applications.

\begin{ack}
The work was supported by the Research Council of Finland Flagship programme: Finnish Center for Artificial Intelligence FCAI, and by the grants 345811, 358980, 356498, and 363317. The authors acknowledge support from CSC – IT Center for Science, Finland, for computational resources.
\end{ack}

%\section*{References}

\bibliographystyle{plainnat}
\bibliography{refs}

%%%%%%%%%%%%%%%%%%%%%%%%%%%%%%%%%%%%%%%%%%%%%%%%%%%%%%%%%%%%

\newpage

\appendix

\section{Theoretical results}
\label{supp_theoretical}

% Redefine figure numbering for the appendix
\renewcommand{\thefigure}{A.\arabic{figure}}
\setcounter{figure}{0}
% Redefine table numbering for the appendix (corrected from previous setup)
\renewcommand{\thetable}{A.\arabic{table}}
\setcounter{table}{0}
% Redefine equation numbering for the appendix
\renewcommand{\theequation}{A.\arabic{equation}}
\setcounter{equation}{0}

In Section~\ref{sec-why-challenging} we mentioned that noisy RUM models can be identified for known noise levels. This can be illustrated by the following example:
\begin{example}\label{example}
    Consider a probit model (Thurstone-Mosteller) model \citep{Thurstone1927,Mosteller1951}. Denote the probability mass function by $p$, its values at two points $p(\x)$ and $p(\x')$, and their difference by $\Delta p = p(\x) - p(\x')$. For a sufficiently large data set of pairwise comparisons, we can estimate the winning probability of $\x$: $\textrm{P}(\x \succ \x') = q$. Since noise follows $N(0,\sigma^2)$, we can deduce that $\textrm{P}(\x \succ \x') = \Phi_{\sigma^2}(\Delta p)$, where $\Phi_{\sigma^2}$ is the cumulative distribution function of $N(0,\sigma^2)$. So, $\textrm{P}(\x \succ \x') = \Phi_{\sigma^2} (\Delta p)$ and $\Delta p = \Phi_{\sigma^2}^{-1}(q)$. Since $p$ is the probability mass function, from a pair of equations we obtain $p(\x) = (1-\Phi_{\sigma^2}^{-1}(q))/2$ and $p(\x') = (1+\Phi_{\sigma^2}^{-1}(q))/2$. For the known noise level $\sigma^2$, $p$ is identified.
\end{example}

Section~\ref{sec-utility} refers to the following corollary that relates the RUM model with sampling. 

\begin{corollary}\label{limit_rum_same_as_sampling}
Consider that presenting an $\infty$-wise comparison with the choice set $\mathcal{C} = \mathcal{X}$ to the expert is equivalent to presenting a $k$-wise comparison with large $k$ and points sampled uniformly over $\mathcal{X}$. If the expert choice model follows $\textrm{RUM}(\mathcal{X}; \log p_{\star}(\mathbf{x}), \text{Exp}(1))$, then asking the expert to pick the most likely alternative out of all possible alternatives is equivalent to sampling from their belief density.
\end{corollary}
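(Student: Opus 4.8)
The plan is to obtain the statement as an immediate specialization of Theorem~\ref{main-theorem}. First I would make precise the identification asserted in the first sentence of the corollary: an $\infty$-wise comparison with choice set $\mathcal{C}=\mathcal{X}$ is \emph{defined} to be the limit in distribution of $k$-wise comparisons in which the $k$ candidates are drawn i.i.d.\ from the uniform density on $\mathcal{X}$. Since $\mathcal{X}$ is compact, the uniform density $\lambda(\x)\equiv 1/\mathrm{vol}(\mathcal{X})$ is well-defined, strictly positive, and in particular satisfies $\lambda(\x)>0$ whenever $p_\star(\x)>0$. This definitional step is needed because the literal pathwise $\argmax$ of $U(\x)=\log p_\star(\x)+W(\x)$ over the uncountable set $\mathcal{X}$ is not well-posed (a supremum of uncountably many independent $\mathrm{Exp}(1)$ variables diverges a.s.); the large-$k$ uniform-sampling limit is the natural surrogate, and it is exactly the object whose existence Theorem~\ref{main-theorem} establishes.

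Next I would check the hypotheses of Theorem~\ref{main-theorem} and substitute the relevant quantities. Under the model $\mathrm{RUM}(\mathcal{X};\log p_\star,\mathrm{Exp}(1))$ the representative utility is $f=\log p_\star$, which is bounded and continuous by Assumptions~\ref{assumption1} and~\ref{assumption2} (note that boundedness of $\log p_\star$ forces $p_\star$ to be bounded away from $0$ and from $\infty$ on $\mathcal{X}$, so the normalizing integral below is finite and positive), and the noise precision is $s=1$. Plugging $s=1$, $\lambda\equiv 1/\mathrm{vol}(\mathcal{X})$, and $f=\log p_\star$ into Eq.~\eqref{argmax_density_over_choicespace} gives, for the limit density of $X^\star_k$,
\begin{align*}
    p(\x) = \frac{\exp(\log p_\star(\x))\,\lambda(\x)}{\int \exp(\log p_\star(\x'))\,\lambda(\x')\,d\x'}
    = \frac{p_\star(\x)\,/\,\mathrm{vol}(\mathcal{X})}{\big(1/\mathrm{vol}(\mathcal{X})\big)\int_{\mathcal{X}} p_\star(\x')\,d\x'}
    = p_\star(\x),
\end{align*}
where the constant factor $\lambda$ cancels and $\int_{\mathcal{X}} p_\star = 1$ since $p_\star$ is a density on $\mathcal{X}$.

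Finally I would conclude: the distribution of the expert's choice in the $\infty$-wise comparison over $\mathcal{X}$ — i.e.\ the distribution of the point the expert declares most likely among all alternatives — is precisely $p_\star$, which is exactly the claim that this query is equivalent to drawing a single sample from the belief density. The analytic content is a one-line substitution once Theorem~\ref{main-theorem} is available, so there is no real obstacle in the derivation itself; the only point requiring care is the modeling identification in the first step, namely interpreting the ``$\infty$-wise comparison over $\mathcal{X}$'' as the $k\to\infty$ uniform-sampling limit rather than as a pathwise $\argmax$. One could optionally reinforce this by remarking that an i.i.d.\ uniform sample of size $k$ becomes dense in $\mathcal{X}$ as $k\to\infty$, so that selecting the noisy-utility maximizer over this sample converges, in the only well-posed sense, to selecting over all of $\mathcal{X}$.
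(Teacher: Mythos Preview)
Your proposal is correct and takes essentially the same approach as the paper: both specialize Theorem~\ref{main-theorem} by substituting $s=1$, $\lambda\equiv 1/\mathrm{vol}(\mathcal{X})$, and $f=\log p_\star$ into Eq.~\eqref{argmax_density_over_choicespace}, cancelling the constant $\lambda$ and using $\int p_\star=1$. Your version is more explicit about the modeling identification (interpreting the $\infty$-wise comparison as the $k\to\infty$ uniform-sampling limit) and about verifying the hypotheses, but the core argument is identical to the paper's one-line proof.
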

\begin{proof}
Since $\lambda(\x) = 1/vol(\mathcal{X})$, the terms $\lambda(\x)$ and $\lambda(\x')$ in Eq. \eqref{argmax_density_over_choicespace} cancel out. The denominator equals $\int p_{\star}(\x) d\x = 1$, because $f(\x) = \log p_{\star}(\x)$ and $s = 1$. Thus, $p(\x \succ \mathcal{X}) = \exp\left(sf(\x)\right) = p_{\star}(\x)$.
\end{proof}

\begin{corollary}\label{k-wise_winner_distribution}
For $k=2$, the probability density of $X^{\star}_k$ equals to
\begin{align}
        p_{X^{\star}_k}(\x) = 2 \lambda(\x) \int_{\mathcal{X}} F_{\textrm{Laplace(0,1/s)}}\left(p_{\star}(\x)-p_{\star}(\x')\right)\lambda(\x')d\x'
\end{align}
For $2<k<\infty$, the probability density of $X^{\star}_k$ is proportional to
\begin{align}\label{choice_distribution_finite_k}
        p_{X^{\star}_k}(\x) \propto \lambda(\x) \int_{\mathcal{X}^{k-1}}  \textrm{P}\left(\x\ \succ \C_k \mid \C_k \right) d\lambda(\C_k \setminus \{\x\}),
\end{align}
where $\textrm{P}\left(\x\ \succ \C_k \mid \C_k \right)$ is given by Proposition \ref{k-wise-comparison-likelihood}.

For $k=\infty$, the probability density of $X^{\star}_k$ equals to
\begin{align}
        p_{X^{\star}_k}(\x) = C \lambda(\x) p_{\star}^{s}(\x), 
\end{align}
where $C > 0$. If $\lambda(\x) = 1/vol(\mathcal{X})$ and $s=1$, then $C \lambda(\x) = 1$.
\end{corollary}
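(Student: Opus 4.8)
All three parts follow from unrolling the generative definition of the $k$-wise winner $X^{\star}_k$. Writing $\C_k=\{\x_1,\dots,\x_k\}$ with the $\x_i$ drawn i.i.d.\ from $\lambda$, the event that $X^{\star}_k$ lies near $\x$ decomposes over which of the $k$ exchangeable positions carries the winning point, and by symmetry each position contributes the same amount, so
\begin{equation*}
p_{X^{\star}_k}(\x)=k\,\lambda(\x)\int_{\mathcal{X}^{k-1}}\textrm{P}\!\left(\x\succ\C_k\mid\C_k\right)\,d\lambda\!\left(\C_k\setminus\{\x\}\right),
\end{equation*}
where $d\lambda$ on $\mathcal{X}^{k-1}$ denotes the product of $\lambda$ over the $k-1$ candidates other than $\x$. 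This is exactly the claim made just before Proposition~\ref{k-wise-comparison-likelihood}; the prefactor $k$ does not depend on $\x$ and is in fact the correct normaliser, since by exchangeability each position wins with average probability $1/k$. For $2<k<\infty$ there is nothing left to do: substitute the closed form of $\textrm{P}(\x\succ\C_k\mid\C_k)$ from Proposition~\ref{k-wise-comparison-likelihood} and absorb the constant $k$ into the proportionality, which is Eq.~\eqref{choice_distribution_finite_k}.

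\textbf{The case $k=2$.} Now the winning probability is explicit. By the additive-noise RUM with Assumption~\ref{assumption3}, $\x\succ\x'$ iff $f(\x)+W(\x)>f(\x')+W(\x')$ with $W(\x),W(\x')$ independent $\text{Exp}(s)$; the difference $W(\x')-W(\x)$ of two independent rate-$s$ exponentials is $\text{Laplace}(0,1/s)$, so
\begin{equation*}
\textrm{P}(\x\succ\x')=\textrm{P}\!\left(W(\x')-W(\x)<f(\x)-f(\x')\right)=F_{\textrm{Laplace}(0,1/s)}\!\left(f(\x)-f(\x')\right).
\end{equation*}
Inserting this into the $k=2$ instance of the display above (prefactor $2$), and reading the utility difference directly through $p_{\star}$ as in Assumption~\ref{assumption1} and Example~\ref{example}, gives the claimed identity; it can be cross-checked against Proposition~\ref{k-wise-comparison-likelihood} at $k=2$, and its normalisation follows from the symmetry $F_{\textrm{Laplace}(0,1/s)}(t)+F_{\textrm{Laplace}(0,1/s)}(-t)=1$.

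\textbf{The case $k=\infty$.} Interpret the $\infty$-wise winner as the $k\to\infty$ limit of $X^{\star}_k$. This limit exists by Theorem~\ref{main-theorem} (using that $f$ is bounded and continuous, Assumption~\ref{assumption2}), with density $\exp(sf(\x))\lambda(\x)\big/\!\int\exp(sf(\x'))\lambda(\x')\,d\x'$. By Assumption~\ref{assumption1}, $\exp(sf(\x))=p_{\star}(\x)^{s}$, hence $p_{X^{\star}_{\infty}}(\x)=C\,\lambda(\x)\,p_{\star}^{s}(\x)$ with $C=\big(\int\lambda(\x')\,p_{\star}(\x')^{s}\,d\x'\big)^{-1}$, which is finite and strictly positive because $f$ is bounded and $\lambda$ is a probability density. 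When $\lambda\equiv1/vol(\mathcal{X})$ and $s=1$ the volume factors cancel and the remaining integral is $\int p_{\star}=1$, so $C\,\lambda(\x)=1$; this is the same computation as in Corollary~\ref{limit_rum_same_as_sampling}.

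\textbf{Main obstacle.} No step is genuinely difficult; the points requiring care are (i) justifying the exact prefactor $k$, and hence the exact constants rather than mere proportionality, through exchangeability of the i.i.d.\ choice set; (ii) identifying the difference of two independent exponentials with a Laplace law in the $k=2$ case, and keeping the $f$-versus-$p_{\star}$ bookkeeping straight; and (iii) recalling that for $k=\infty$ the object is \emph{defined} through the limit distribution of Theorem~\ref{main-theorem} rather than through a literal infinite choice set.
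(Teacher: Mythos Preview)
Your proposal is correct and takes essentially the same route as the paper: the paper's proof derives the normalising constant $2$ for $k=2$ via Fubini and the symmetry $\textrm{P}(\x\succ\x')+\textrm{P}(\x'\succ\x)=1$ (which is your exchangeability argument specialised to $k=2$), and handles $k=\infty$ by citing Theorem~\ref{main-theorem}, exactly as you do. Your derivation is in fact more complete, since the paper leaves both the intermediate case $2<k<\infty$ and the identification of $\textrm{P}(\x\succ\x')$ with the Laplace CDF (via the difference of two independent $\text{Exp}(s)$ variables) implicit.
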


\begin{proof}\label{proof_argmax_probss}
    Case $k=2$.
    \begin{align*}
        p_{X^{\star}_k}(\x) \propto \int_{\mathcal{X}} \textrm{P}(\x \succ \x' \mid p_{\star},s)\lambda(\x')\lambda(\x)d\x' 
    \end{align*}
    The normalizing constant can be computed by using Fubini's theorem. Since $\textrm{P}(\x \succ \x' \mid p_{\star})\lambda(\x')\lambda(\x)$ is $\mathcal{X} \times \mathcal{X}$ integrable, it holds that
    \begin{align*}
        \int_{\mathcal{X}} \int_{\mathcal{X}} \textrm{P}(\x \succ \x' \mid p_{\star},s)\lambda(\x')\lambda(\x)d\x'd\x =  \int_{\mathcal{X} \times \mathcal{X}} \textrm{P}(\x \succ \x' \mid p_{\star},s)\lambda(\x')\lambda(\x)d(\x',\x) = 0.5,
    \end{align*}
    by the symmetry and the fact that $\textrm{P}(\x \succ \x' \mid p_{\star},s) + \textrm{P}(\x' \succ \x \mid p_{\star},s) = 1$. So, the normalizing constant is $2$. \\\\
    Case $k=\infty$. It follows from Theorem \ref{main-theorem}.
\end{proof}

\textbf{$k$-wise ranking likelihood}\\\\ 
The $k$-wise ranking likelihood $\textrm{P}\left(\x_{\pi(\C_k)_1} \succ ... \succ \x_{\pi(\C_k)_k} \mid \C_k\right)$ can be computed as a product of $k$-wise comparison likelihoods,
\begin{align}\label{ranking_likelihood}
\prod_{j=1}^{k-1} \textrm{P}\left(\x_{\pi(\C_k)_j} \succ \C^{(j)} \mid \C^{(j)}\right),
\end{align}
where $\C^{(1)} = \C_k,\ \hdots,\ \C^{(k-1)}=\C_k \setminus \{\x_{\pi(\C_k)_1},...,\x_{\pi(\C_k)_{k-2}}\}$.

\section{Proofs}\label{supp_proofs}

\renewcommand{\thefigure}{B.\arabic{figure}}
\setcounter{figure}{0}
\renewcommand{\thetable}{B.\arabic{table}}
\setcounter{table}{0}
\renewcommand{\theequation}{B.\arabic{equation}}
\setcounter{equation}{0}

This section provides the proofs for the propositions made in the main paper.

\begin{proposition*}
    Let $p^*$ be the expert's belief density. Denote $N=k-1$, so that $\D_N = \{\x_1 \succ \x_2 \succ ... \succ \x_{N} \succ \x_{N+1}\}$. If $W \sim \textrm{Gumbel}(0,\beta)$, then for any positive monotonic transformation $g$, and for $f \equiv g \circ p^*$ it holds,
    \begin{align*}
        p(\D_N | f) \xrightarrow{\beta \to 0} 1.
    \end{align*}
\end{proposition*}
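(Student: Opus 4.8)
The plan is to write the Plackett--Luce ranking likelihood in its softmax form and show that each of its finitely many factors tends to $1$ as $\beta\to 0$. First I would recall the Gumbel--max identity: if the noise $W(\x)\sim\mathrm{Gumbel}(0,\beta)$ is i.i.d.\ across alternatives, then for any finite choice set $\C$ and any $\x\in\C$ one has $\Prob(\x\succ\C\mid\C)=e^{f(\x)/\beta}\big/\sum_{\x'\in\C}e^{f(\x')/\beta}$. Combined with the fact (noted after Definition~\ref{k-wise_ranking_def}) that a $k$-wise ranking is a sequence of $k$-wise comparisons with the chosen alternative removed at each step, and that the Gumbel noise is independent across the distinct alternatives, this yields the closed form
\begin{align*}
p(\D_N\mid f,\beta)&=\prod_{j=1}^{N}\frac{e^{f(\x_j)/\beta}}{\sum_{l=j}^{N+1}e^{f(\x_l)/\beta}}\\
&=\prod_{j=1}^{N}\frac{1}{1+\sum_{l=j+1}^{N+1}\exp\!\big(-(f(\x_j)-f(\x_l))/\beta\big)}.
\end{align*}

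Next I would observe that the ranking being analysed is, by construction, the one consistent with the belief density (in the noiseless regime the expert ranks deterministically by $p^*$), so that $p^*(\x_1)>p^*(\x_2)>\dots>p^*(\x_{N+1})$, assuming the candidate points are generic enough to have pairwise distinct belief densities. Since $g$ is a positive, strictly increasing transformation, $f=g\circ p^*$ preserves this strict order, so that $f(\x_j)-f(\x_l)>0$ for all $j<l$.

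It then remains to pass to the limit: for each pair $j<l$, $f(\x_j)-f(\x_l)>0$ forces $\exp(-(f(\x_j)-f(\x_l))/\beta)\to 0$ as $\beta\to 0^{+}$, hence the denominator of the $j$-th factor tends to $1$ and the factor tends to $1$. Because the product has a fixed finite number $N=k-1$ of factors, the limit commutes with the product and $p(\D_N\mid f,\beta)\to 1$, which is the claim.

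The argument is elementary; the only steps that warrant care are (i) establishing the softmax form and the product-over-nested-choice-sets factorization of the ranking likelihood — the standard Plackett--Luce / exponential-race computation, which can be cited or reproved in a couple of lines via the Gumbel--max trick — and (ii) the genericity assumption excluding ties $p^*(\x_i)=p^*(\x_j)$ (together with strict monotonicity of $g$), since a tie would leave the corresponding factor bounded away from $1$ and the conclusion would fail.
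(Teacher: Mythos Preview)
Your proposal is correct and follows essentially the same approach as the paper's proof: write the ranking likelihood in its Plackett--Luce softmax form (the paper cites Yellott for this), use that $g$ preserves the strict order $p^*(\x_1)>\dots>p^*(\x_{N+1})$, and pass to the limit factor by factor. Your explicit normalization by $e^{f(\x_j)/\beta}$ is just a concrete way of carrying out the softmax-to-argmax limit the paper invokes, and your remark on the genericity (no-ties) assumption makes explicit something the paper uses silently.
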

\begin{proof}
    Let $f(\x) = g(p^*(\x))$. Then, by \cite{yellott1977relationship},
    \begin{align}\label{PL-model}
        p(\D_N | f, \beta) &= \prod_{n=1}^{N+1}\frac{e^{\frac{1}{\beta}f(\x_n)}}{\sum_{i=n}^{N+1} e^{\frac{1}{\beta}f(\x_i)} } = \prod_{n=1}^{N+1}\frac{e^{\frac{1}{\beta}g(p^*(\x_n))}}{\sum_{i=n}^{N+1} e^{\frac{1}{\beta}g(p^*(\x_i))} }.
    \end{align}
    By the product law for limits,
        \begin{align*}
        \lim_{\beta \rightarrow 0+}p(\D_N | f, \beta) &= \prod_{n=1}^{N+1}\lim_{\beta \rightarrow 0+}\frac{e^{\frac{1}{\beta}g(p^*(\x_n))}}{\sum_{i=n}^{N+1} e^{\frac{1}{\beta}g(p^*(\x_i))} }\\
        &= \prod_{n=1}^{N+1}\mathbb{I}\left(g(p^*(\x_n)) = \max_{n \leq i \leq N+1}g(p^*(\x_i))\right)\\
        &= \prod_{n=1}^{N+1}\mathbb{I}\left(p^*(\x_n) = \max_{n \leq i \leq N+1}p^*(\x_i)\right)\\
        &= \prod_{n=1}^{N+1}\mathbb{I}\left(p^*(\x_n) = p^*(\x_n)\right)\\
        &= 1.
    \end{align*}
    The second equation holds because the softmax converges pointwise to the argmax in the limit of the temperature approaches zero. The third equation holds because $g$ preserves the order. The fourth equation holds because $p^*(\x_1) > p^*(\x_2) > ... > p^*(\x_{N+1})$.
\end{proof}

\begin{proposition*}
Let $\C_k$ be a choice set of $k \geq 2$ alternatives. Denote $C = \C_k \setminus \{\x\}$ and $f_C^{\star} = \max_{\x_j \in C}f(\x_j)$. The winning probability of a point $\x \in \C_k$ equals to
\begin{align*}
    \textrm{P}\left(\x\ \succ \C_k \mid \C_k\right) = \sum_{l=0}^{k-1}\frac{\exp\big(-s(l+1)\max\{f_C^{\star}-f(\x),0\}\big)}{l+1}\sum_{\textrm{sym}: \x_j \in C}^{l}-\exp(-s(f(\x) - f(\x_j))),
\end{align*}
where $\sum_{\textrm{sym}: \x_j \in \C_k \setminus \{\x\}}^l$ denotes the $l^{th}$ elementary symmetric sum of the set $C$.
\end{proposition*}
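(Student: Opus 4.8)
The plan is to evaluate the winning probability directly from the exponential RUM by conditioning on the noise realization at $\x$. Write $C = \C_k \setminus \{\x\}$ and set $\delta_j := f(\x) - f(\x_j)$ for $\x_j \in C$. By Eqs.~\eqref{RUM1}--\eqref{RUM2} together with Assumption~\ref{assumption3}, the event $\{\x \succ \C_k\}$ coincides (up to a probability-zero set, as the noises are continuous) with $\{\,f(\x) + W(\x) > f(\x_j) + W(\x_j)\ \text{for all}\ \x_j \in C\,\} = \{\,W(\x_j) < W(\x) + \delta_j\ \text{for all}\ j\,\}$. Conditioning on $W(\x) = w$ and using that the $W(\x_j)$ are i.i.d.\ $\text{Exp}(s)$ and independent of $W(\x)$, I would write
\begin{align*}
\textrm{P}(\x \succ \C_k \mid \C_k) = \int_0^\infty s e^{-sw} \prod_{\x_j \in C} F_s(w + \delta_j)\, dw ,
\end{align*}
where $F_s(t) = 1 - e^{-st}$ for $t \ge 0$ and $F_s(t) = 0$ for $t < 0$ is the $\text{Exp}(s)$ cdf.

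The first substantive step is to dispose of the truncation built into $F_s$. Put $a := \max\{f_C^{\star} - f(\x), 0\}$. I would argue that the integrand is supported on $\{w \ge a\}$ and takes the ``clean'' form there: if $w < a$ then $a>0$, and choosing $\x_{j^\star} \in C$ with $f(\x_{j^\star}) = f_C^{\star}$ gives $w + \delta_{j^\star} = w - a < 0$, so $F_s(w + \delta_{j^\star}) = 0$ and the product vanishes; whereas if $w \ge a$ then $\delta_j = f(\x) - f(\x_j) \ge f(\x) - f_C^{\star} = -a$ for every $\x_j \in C$, so $w + \delta_j \ge 0$ and $F_s(w + \delta_j) = 1 - e^{-s(w+\delta_j)}$. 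Hence
\begin{align*}
\textrm{P}(\x \succ \C_k \mid \C_k) = \int_a^{\infty} s e^{-sw} \prod_{\x_j \in C} \bigl(1 - e^{-s(w + \delta_j)}\bigr)\, dw .
\end{align*}

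Next I would expand the finite product multilinearly. Writing $y_j := -e^{-s\delta_j} = -\exp(-s(f(\x) - f(\x_j)))$, one has $1 - e^{-s(w+\delta_j)} = 1 + e^{-sw}y_j$, so
\begin{align*}
\prod_{\x_j \in C}\bigl(1 + e^{-sw} y_j\bigr) = \sum_{l=0}^{k-1} e^{-slw} \sum_{\substack{S \subseteq C\\ |S| = l}} \prod_{\x_j \in S} y_j ,
\end{align*}
and the inner sum is precisely the $l$-th elementary symmetric sum $\sum_{\text{sym}:\,\x_j \in C}^{l} -\exp(-s(f(\x) - f(\x_j)))$ appearing in the statement (with the $l=0$ term equal to $1$). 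Since this is a finite sum, I can integrate term by term using $\int_a^{\infty} s e^{-s(l+1)w}\, dw = e^{-s(l+1)a}/(l+1)$, and substituting $a = \max\{f_C^{\star} - f(\x), 0\}$ recovers Eq.~\eqref{likelihood} verbatim.

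The one place that needs care is the truncation argument — correctly locating the lower limit $a$ and verifying the two regimes $w < a$ and $w \ge a$; everything else (the conditioning identity, the multilinear expansion, and the term-by-term integration of a finite sum) is routine. As a sanity check, for $k = 2$ the formula collapses to $e^{-sa} - \tfrac12 e^{-2sa} e^{-s\delta}$ with $\delta = f(\x) - f(\x')$, which equals the Laplace cdf value $\textrm{P}\bigl(W(\x') - W(\x) < \delta\bigr)$ in both cases $\delta \ge 0$ and $\delta < 0$.
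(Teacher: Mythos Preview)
Your proposal is correct and follows essentially the same route as the paper's own proof: condition on $W(\x)=w$, use independence and the exponential cdf, reduce the indicator/truncation to the lower integration limit $a=\max\{f_C^\star - f(\x),0\}$, expand the product via elementary symmetric sums, and integrate term by term. The only cosmetic difference is that the paper first factors each term as $e^{-sw}\bigl(e^{sw}+c_j\bigr)$ to pull out $e^{-ksw}$ before expanding, whereas you expand $1+e^{-sw}y_j$ directly; the resulting integrals and final expression are identical.
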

\begin{proof}
    Fix $\x \in \C_k$, and for any $w \geq 0$ denote $\mathbf{1}_{w} = \mathbb{I}_{\{f(\x) + w  \geq f_C^{\star}\}}$. % Note that for some $w \geq 0$ it holds that $f(\x) + w > f^{\star} > f(\x')$ for all $\x' \in C$.
    \begin{align*}
    \textrm{P}\left(\x \succ \C_k \mid \C_k\right) &= \textrm{P}\left(\bigcap_{x_j \in C}\{f(\x) + W(\x) > f(\x_j) + W(\x_j)\}\right)\\
    &= \int \textrm{P}\left(\bigcap_{x_j \in C}\{f(\x) + W(\x) > f(\x_j) + W(\x_j)\}\ \big|\ W(\x) \right) \textrm{P}(dW(\x))\\
    &= \int_{0}^{\infty} \textrm{P}\left(\bigcap_{x_j \in C}\{W(\x_j) < f(\x) + w - f(\x_j)\}\ \big|\ w \right) s e^{-sw}dw\\
    &= \int_{0}^{\infty} s e^{-sw}\prod_{x_j \in C}\textrm{P}\left(\{W(\x_j) < f(\x) + w - f(\x_j)\}\ \big|\ w \right)dw\\
    &= \int_{0}^{\infty} s e^{-sw}\prod_{x_j \in C}\left(1 - e^{-s(f(\x) + w - f(\x_j))}\right)\mathbb{I}_{\{f(\x) + w  \geq f(\x_j)\}}dw\\
    &= \int_{0}^{\infty}s e^{-sw}\prod_{x_j \in C} \frac{1}{e^{sw}} \left(e^{sw} - e^{-s(f(\x) - f(\x_j))}\right)\mathbf{1}_{w}dw\\
    &= \int_{0}^{\infty}s e^{-ksw}\mathbf{1}_{w}\prod_{x_j \in C} \left(e^{sw} - e^{-s(f(\x) - f(\x_j))}\right)dw\\
    \end{align*}
    Denote $c_{j} := -\exp(-s(f(\x) - f(\x_j)))$. Let $b_{l}$ be the $l^{th}$ elementary symmetric sum of the $c_{j}$ over $j$s. The $l^{th}$ elementary symmetric sum is the sum of all products of $l$ distinct $c_{j}$ over $j$s. We can write,
    %In general, for $n$ variables, the product $(x_1 + c)(x_2 + c) \cdots (x_n + c) $ expands as:$ (x_1 + c)(x_2 + c) \cdots (x_n + c) = \sum_{k=0}^{n} c^{n-k} e_k(x_1, x_2, \dots, x_n),$ where $e_k(x_1, x_2, \dots, x_n)$ is the elementary symmetric polynomial of degree $k$.
    \begin{align*}
    &\int_{0}^{\infty}s e^{-ksw}\mathbf{1}_{w}\prod_{x_j \in C} \left(e^{sw} - e^{-s(f(\x) - f(\x_j))}\right)dw\\
    &= \int_{0}^{\infty}s e^{-ksw}\mathbf{1}_{w}\sum_{l=0}^{k-1}b_{l} e^{(k-1-l)sw}dw\\
    &= s\int_{0}^{\infty}\sum_{l=0}^{k-1}b_{l} e^{(k-1-l)sw-ksw}\mathbf{1}_{w}dw\\
    &= s\sum_{l=0}^{k-1}\frac{b_{l}}{s(l+1)} \int_{0}^{\infty}  s(l+1)e^{-s(l+1)w} \mathbf{1}_{w}dw\\
    &= \sum_{l=0}^{k-1}\frac{b_{l}}{l+1} \int_{\max\{f_C^{\star}-f(\x),0\}}^{\infty}s(l+1)e^{-s(l+1)w} dw\\
    &= \sum_{l=0}^{k-1}\frac{b_{l}(1-G_{s(l+1)}(\max\{f_C^{\star}-f(\x),0\}))}{l+1},\\
    &= \sum_{l=0}^{k-1}\frac{b_{l}\exp\big(-s(l+1)\max\{f_C^{\star}-f(\x),0\}\big)}{l+1},\\
    \end{align*}
    with convention that $b_{0} = 1$ and $G_{\eta}$ denotes the cumulative distribution function of $\textrm{Exp}(\eta)$.
\end{proof}

\phantom{xxxxxxxxxxxxxxxxxxxxxxxxxxxxxxxxxxxxxxxxxxxxxxxxxxxxxxxxxxxxx} %this is needed otherwise algos are misaligned
\begin{minipage}[t]{.48\textwidth}
    \begin{algorithm}[H]
        \caption{Full algorithm}\label{alg1}
        \begin{algorithmic}
            \STATE {\bfseries require:} preferential data $\D_{\textrm{full}}$
            \WHILE{not converged}
            \STATE sample mini-batch $\D \sim \D_{\textrm{full}}$
            \STATE $\Delta \phi \propto \nabla_{\phi}\texttt{FS-Posterior}(\phi | \D)$
            \ENDWHILE
        \end{algorithmic}
    \end{algorithm}
\end{minipage}\hfill
\begin{minipage}[t]{.48\textwidth}
    \begin{algorithm}[H]
        \caption{$\texttt{FS-Posterior}(\phi | \D)$}\label{alg2}
        \begin{algorithmic}
            \STATE {\bfseries require:} precision $s$
            \STATE {\bfseries input:} flow parameters $\phi$, mini-batch $\D$    
           \STATE $\X$ = design matrix of $\D$
           \STATE $\X_{\succ}$ = winner points of $\X$
           \STATE $\text{loglik} = \sum \log \mathcal{L}(\D\mid f_{\phi}(\X),s)$
           \STATE $\text{logprior} = \sum f_{\phi}(\X_{\succ})$ %(Eq. \eqref{logprior1})
           \STATE {\bfseries return:} $\text{loglik} + \text{logprior}$
        \end{algorithmic}
    \end{algorithm}
\end{minipage}

\section{Experimental details}

\renewcommand{\thefigure}{C.\arabic{figure}}
\setcounter{figure}{0}
\renewcommand{\thetable}{C.\arabic{table}}
\setcounter{table}{0}
\renewcommand{\theequation}{C.\arabic{equation}}
\setcounter{equation}{0}

\subsection{Target distributions}\label{appendix-target-distributions}

The logarithmic unnormalized densities of the target distributions used in the synthetic experiments are listed below.

\begin{flalign*}
&\textbf{Onemoon2D}:\qquad -\frac{1}{2} \left( \frac{\| \x \| - 2}{0.2} \right)^2 - \frac{1}{2} \left( \frac{x_0 + 2}{0.3} \right)^2\\
&\textbf{Gaussian6D}:\qquad -\frac{1}{2}(\x - \boldsymbol{\mu})^T \Sigma^{-1} (\x - \boldsymbol{\mu}),\ \
\boldsymbol{\mu} = 2 \begin{pmatrix}
(-1)^1 \\
(-1)^2 \\
\vdots \\
(-1)^{6}
\end{pmatrix},\
\Sigma = \begin{pmatrix}
\frac{6}{10} & 0.4 & \cdots & 0.4 \\
0.4 & \frac{6}{10} & \cdots & 0.4 \\
\vdots & \vdots & \ddots & \vdots \\
0.4 & 0.4 & \cdots & \frac{6}{10}
\end{pmatrix}\\
&\textbf{Twogaussians}:\qquad 
\log \left( \frac{1}{2} \exp \left( \log \mathcal{N}(\x \mid \boldsymbol{\mu}, \Sigma_1) \right) + \frac{1}{2} \exp \left( \log \mathcal{N}(\x \mid \boldsymbol{\mu}, \Sigma_2) \right) \right)
,\ \\
&\sigma^2 = 1,\ \rho = 0.9,\ d\in\{10,20\},\ \boldsymbol{\mu} = 3 \mathbf{1}_d,\
\Sigma_1 = \begin{bmatrix}
\sigma^2 & \rho \sigma^2 & \rho \sigma^2 & \cdots & \rho \sigma^2 \\
\rho \sigma^2 & \sigma^2 & \rho \sigma^2 & \cdots & \rho \sigma^2 \\
\rho \sigma^2 & \rho \sigma^2 & \sigma^2 & \cdots & \rho \sigma^2 \\
\vdots & \vdots & \vdots & \ddots & \vdots \\
\rho \sigma^2 & \rho \sigma^2 & \rho \sigma^2 & \cdots & \sigma^2
\end{bmatrix},\\
&\Sigma_2 = \begin{bmatrix}
\sigma^2 & -\rho \sigma^2 & \rho \sigma^2 & \cdots & (-1)^{d-1} \rho \sigma^2 \\
-\rho \sigma^2 & \sigma^2 & -\rho \sigma^2 & \cdots & (-1)^{d-2} \rho \sigma^2 \\
\rho \sigma^2 & -\rho \sigma^2 & \sigma^2 & \cdots & (-1)^{d-3} \rho \sigma^2 \\
\vdots & \vdots & \vdots & \ddots & \vdots \\
(-1)^{d-1} \rho \sigma^2 & (-1)^{d-2} \rho \sigma^2 & (-1)^{d-3} \rho \sigma^2 & \cdots & \sigma^2
\end{bmatrix} \\
&\textbf{Funnel10D}:\qquad
-\left(\frac{(x_0 - 1)^2}{a^2} \right) - \sum_{i=1}^{10-1} \left( \log (2 \pi \exp(2b x_0)) + \frac{(x_i - 1)^2}{\exp(2b x_0)} \right),\ a=3,b=0.25
\end{flalign*}

\subsection{LLM expert elicitation experiment}\label{appendix-llm-prior}

The version of the Claude 3 model used in the LLM expert elicitation experiment was \emph{claude-3-haiku-20240307}. In the experiment, we used the following prompt template to query the LLM. The configurations specified within the XML tags <configuration> were sampled from $\lambda$, a uniform distribution. The prompt template is as follows:

\begin{verbatim}

Data definition: 
<data>
California Housing

We collected information on the variables using all the block groups 
in California from the 1990 Census. In this sample a block group on 
average includes 1425.5 individuals living in a geographically 
compact area. Naturally, the geographical area included varies 
inversely with the population density. We computed distances among 
the centroids of each block group as measured in latitude and longitude. 
This dataset was derived from the 1990 U.S. census, using one row 
per census block group. A block group is the smallest geographical 
unit for which the U.S. Census Bureau publishes sample data 
(a block group typically has a population of 600 to 3,000 people). 
A household is a group of people residing within a home.

Number of Variables: 8 continuous
Variable Information:
- MedInc median income (expressed in hundreds of thousands of dollars) 
in block group
- HouseAge median house age in block group
- AveRooms average number of rooms per household
- AveBedrms average number of bedrooms per household
- Population block group population
- AveOccup average number of household members
- Latitude block group latitude
- Longitude block group longitude
</data>

The variables are:
<variables>
{MedInc, HouseAge, AveRooms, AveBedrms, Population, AveOccup, Latitude, 
Longitude}
</variables>
always reported in this order. 

Given these combinations of variables below, please list them from 
most likely to least likely in your opinion. Consider what each 
variable represents and its realistic value in light of the properties 
of the dataset.

<configurations>
A=0.79,0.81,0.40,0.60,0.74,0.49,0.59,0.75,0.00,0.04
B=0.09,0.10,0.22,0.92,0.16,0.95,0.02,0.91,0.25,0.02
C=0.72,0.50,0.17,0.70,0.37,0.78,0.15,0.14,0.05,0.05
D=0.39,0.69,0.27,0.63,0.25,0.13,0.81,0.89,0.31,0.02
E=0.34,0.52,0.01,0.34,0.90,0.42,0.49,0.02,0.26,0.04
</configurations>

<task>
1. First, think your answer step by step, considering the model 
and data definition in detail.
2. Then discuss each combination separately in light of your 
thoughts about data. Do not assign an ordering yet.
3. Finally, summarize all your considerations. 
4. At the end, write your final ordering as a comma-separated 
list of letters within an XML tag <order></order>.
</task>

\end{verbatim}

\subsection{Modified Abalone7D experiment}\label{appendix-experimental-abalone}

By modifying Abalone7D experiment, we can consider a synthetic technical validation constructed so that the data distribution is more realistic. We do this by mis-using a regression data set so that the response variable is interpreted as indication of preference and the queries are formed by presenting the expert a choice between different samples. If we denote by $g(\x_i)$ the regression function for the $i$th covariate set, then $\x_i$ is chosen over $\x_j$ if $g(\x_i) > g(\x_j)$. We remark that the task itself is not particularly interesting as the response variable does not correspond to any real belief (instead, we learn a distribution over the covariates for which the response variable is high), but it is still useful for validating the algorithm as we now need to cope with choice sets that do not match any simple distribution $\lambda(\x)$. Instead, the choice sets are now formed by uniform sampling over the sample indices, which means they are drawn from the marginal distribution of the covariates. Note that this is different from the target density, which is the density of covariates for samples with high response variables.

In the experiment, we do not assume any noise on the expert response. Hence, the expert follows a noiseless RUM with the representative utility $g(\x_i)$ equals to the response variable of $i$th covariate $\x_i$. This means that the choice distribution resembles Dirac delta function at the points with the highest response variables. For this reason, we cannot compute the MMTV metric as it involves integrating over the absolute differences of the marginals, which leads to numerical issues. The numerical results are provided in Table~\ref{experimental-results-modified-abalone} and the visual results in Figure~\ref{fig:modifiedabalone}.

\begin{table}[t]
\caption{Accuracy of the density represented as a flow (\emph{flow}) compared to a factorized normal distribution (\emph{normal}), both learned from preferential data, in three metrics: log-likelihood, Wasserstein distance, and the mean marginal total variation (MMTV). Averages over 20 repetitions (but excluding a few crashed runs), with standard deviations.}
\label{experimental-results-modified-abalone}
{\small
\centering
\vspace{0.1cm}
\begin{tabular}{@{}lcc|cc|cc@{}}
\toprule
& \multicolumn{2}{c}{log-likelihood ($\uparrow$)} & \multicolumn{2}{c}{wasserstein} ($\downarrow$) & \multicolumn{2}{c}{MMTV ($\downarrow$)} \\
\cmidrule(lr){2-3} \cmidrule(lr){4-5} \cmidrule(lr){6-7}
 & \emph{normal} & \emph{flow} & \emph{normal} & \emph{flow} & \emph{normal} & \emph{flow} \\
\midrule
Abalone7D & -5.53 $\scriptstyle{(\pm 0.03)}$ & -2.16 $\scriptstyle{(\pm 0.12)}$ & 0.53 $\scriptstyle{(\pm 0.00)}$ & 0.34 $\scriptstyle{(\pm 0.01)}$ & 0.26 $\scriptstyle{(\pm 0.00)}$ & 0.29 $\scriptstyle{(\pm 0.01)}$ \\
mod-Abalone7D & -5.25 $\scriptstyle{(\pm 0.07)}$ & -3.52 $\scriptstyle{(\pm 0.09)}$ & 1.05 $\scriptstyle{(\pm 0.00)}$ & 0.65 $\scriptstyle{(\pm 0.01)}$ & - & -\\
\bottomrule
\end{tabular}}
\end{table}

\begin{figure*}[t]
  \centering
  \includegraphics[scale=0.3]{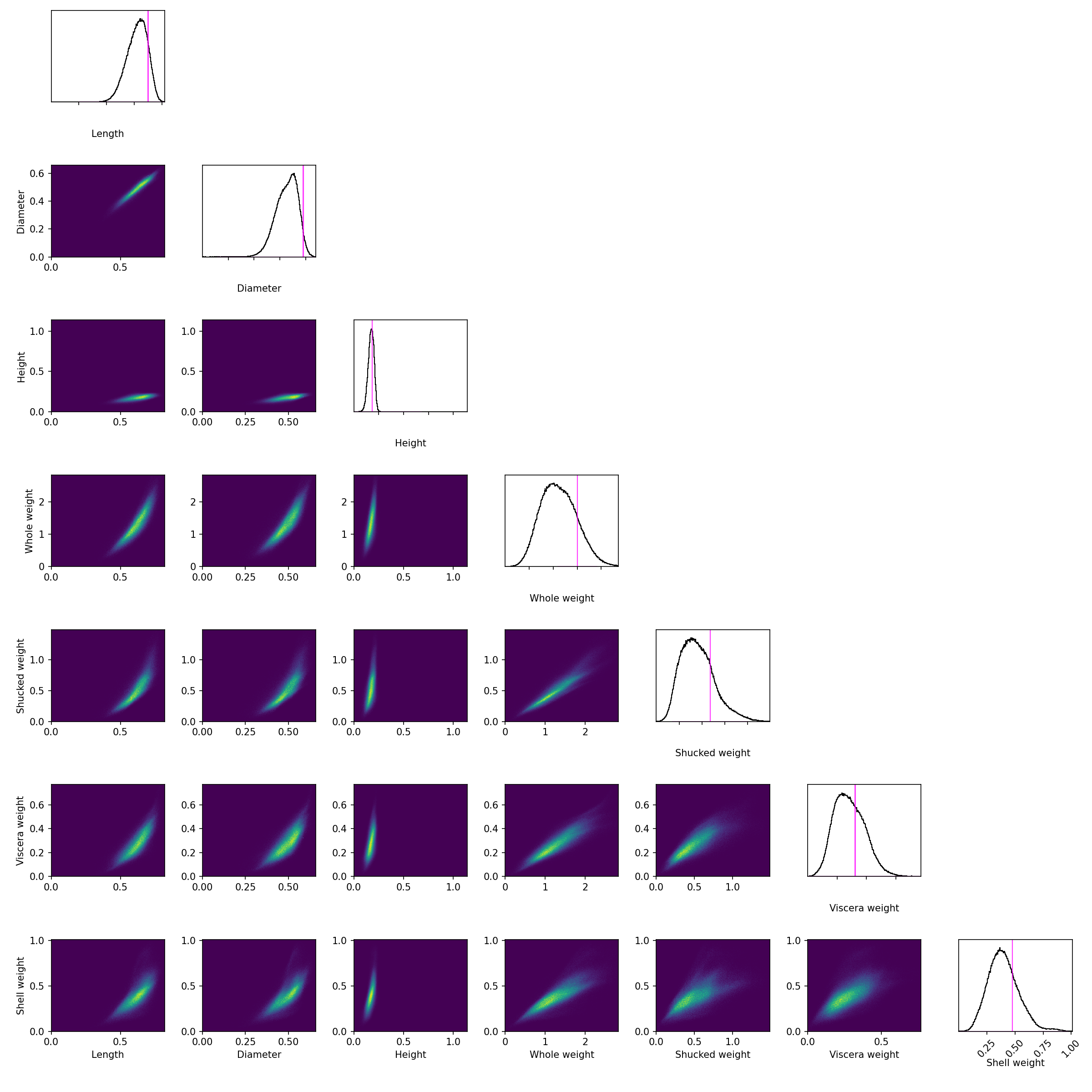}
  \caption{Full result plot for the modified Abalone7D experiment, where the target (unnormalized) belief density corresponds to the abalone age.}
      \label{fig:modifiedabalone}
\end{figure*}

\subsection{Other experimental details}\label{appendix-experimental-details}

\textbf{Hyperparameters}. In all the experiments, we use the value $s=1$ in the preferential likelihood regardless of how misspecified it is with respect to the ground-truth model. Neural Spline Models have 2 hidden layers and 128 hidden units. The number of flows is 6, 8, or 10 depending on the problem complexity. RealNVP models have 4 hidden layers and 2 hidden units. The number of flows is 36 when the number of rankings is more than 50, and 8 otherwise. Other architecture-specific details correspond to the default values implemented in the \emph{normflows} package, a PyTorch-based implementation of normalizing flows \citep{Stimper2023}.

\textbf{Optimization details}. Models are trained for a varying number of iterations from $10^{5}$ to $5 \times 10^{5}$ with the Adamax optimizer \citep{kingma2014adam} and varying batch size from 2 to 8. The learning rate varies from $10^{-5}$ to $5 \times 10^{-5}$ depending on the problem dimensionality, with higher learning rates for higher-dimensional problems. A small weight decay of $10^{-6}$ was applied.

\textbf{Computational environment}. Models are trained and evaluated on a server with nodes of two Intel Xeon processors, code name Cascade Lake, with 20 cores each running at 2.1 GHz. Double precision numbers were used to improve the stability of the training process. We did not explicitly record the training times or memory consumption, but note that the considered data sets and flow architectures are all relatively small.

\textbf{Experiment replications}. Every experiment was replicated with $20$ different seeds, ranging from $1$ to $20$, but a few replications failed due to not well-known reasons, sometimes due to memory issues and sometimes due to numerical instabilities that led the replication to crash. The results are reported over the completed runs. In the main experiment table (Table \ref{experimental-results}), there was one failed replication in the Twogaussians20D experiment and two in the Onemoon2D experiment.

\textbf{Dataset licence}:
\emph{Abalones}: (CC BY 4.0) license, original source \citep{misc_abalone_1}

\subsection{Plots of learned belief densities}\label{plots_learned_densities}

Figure~\ref{fig_abalone} presented a subset of the cross-plots for the multivariate densities for the Abalone regression data and the LLM experiment. Here, we provide the complete visual illustrations over the full density for both, as well as corresponding visualisations for all of the other experiments in Figures~\ref{firstExtraFigure} to \ref{fig:baselineabalone}.

\begin{figure*}[t]
  \centering
  \includegraphics[scale=0.35]{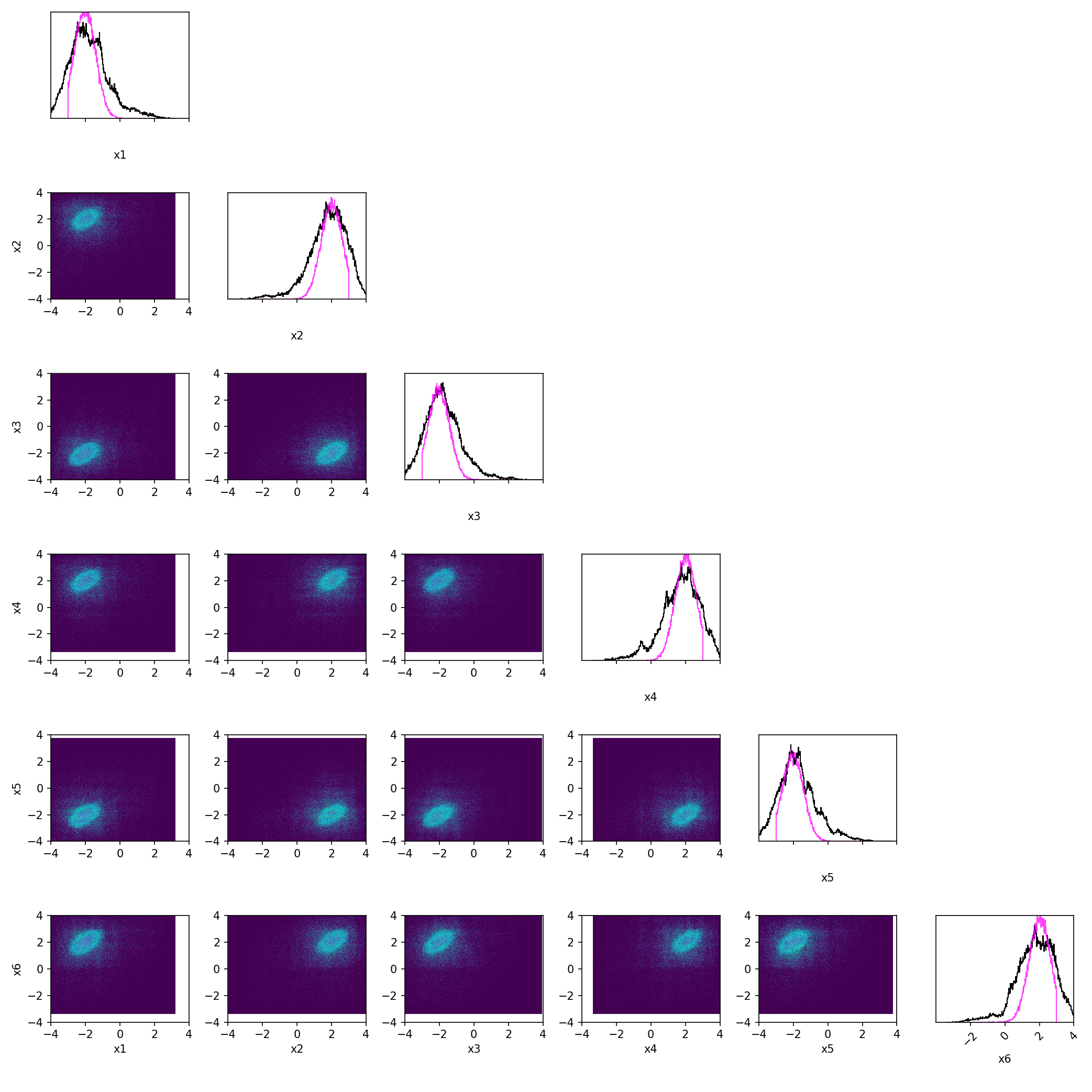}
  \caption{Gaussian6D experiment. The target distribution is depicted by light blue contour points and its marginal by a pink curve. The learned flow is depicted by dark blue contour sample points and its marginal by a black curve.}
  \label{firstExtraFigure}
\end{figure*}

\begin{figure*}[t]
  \centering
  \includegraphics[scale=0.2]{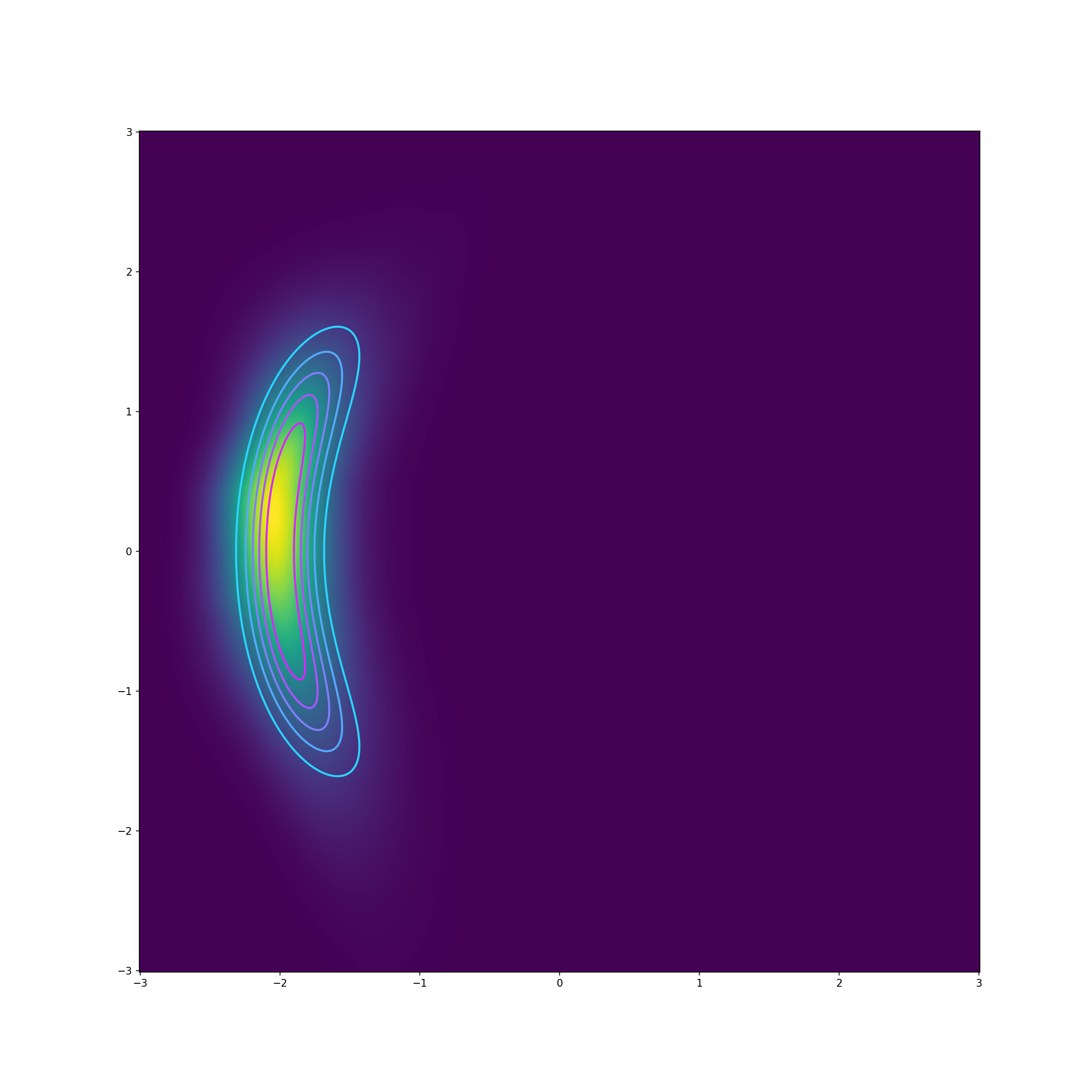}
  \caption{Estimated belief density for the Onemoon2D data. See Figure~\ref{fig1} for other visualisations on the same density.}
\end{figure*}

\begin{figure*}[t]
  \centering
  \includegraphics[scale=0.24]{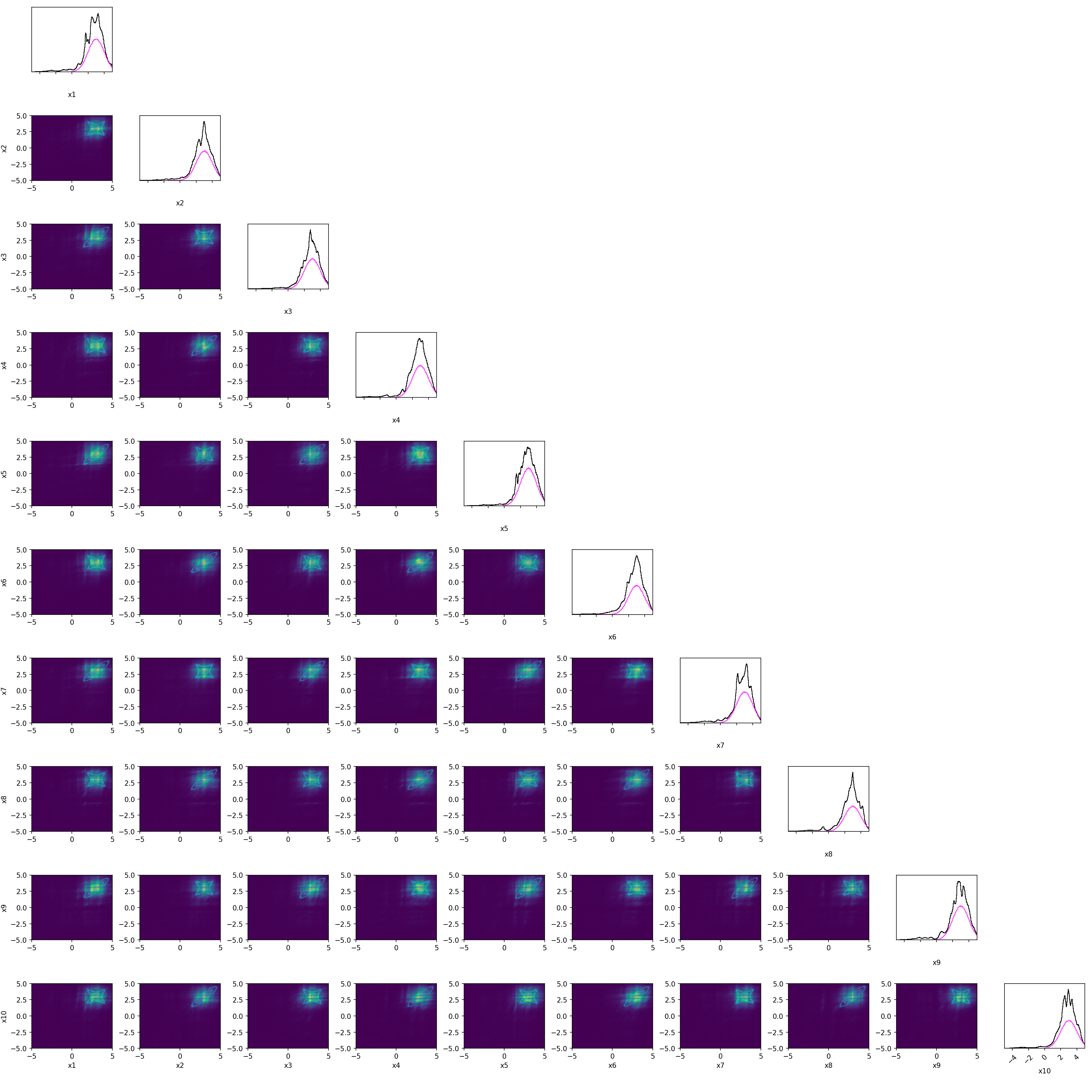}
  \caption{Twogaussians10D experiment. The target distribution is depicted by light blue contour points and its marginal by a pink curve. The learned flow is depicted by dark blue contour sample points and its marginal by a black curve.}
\end{figure*}

\begin{figure*}[t]
  \centering
  \includegraphics[width=\textwidth]{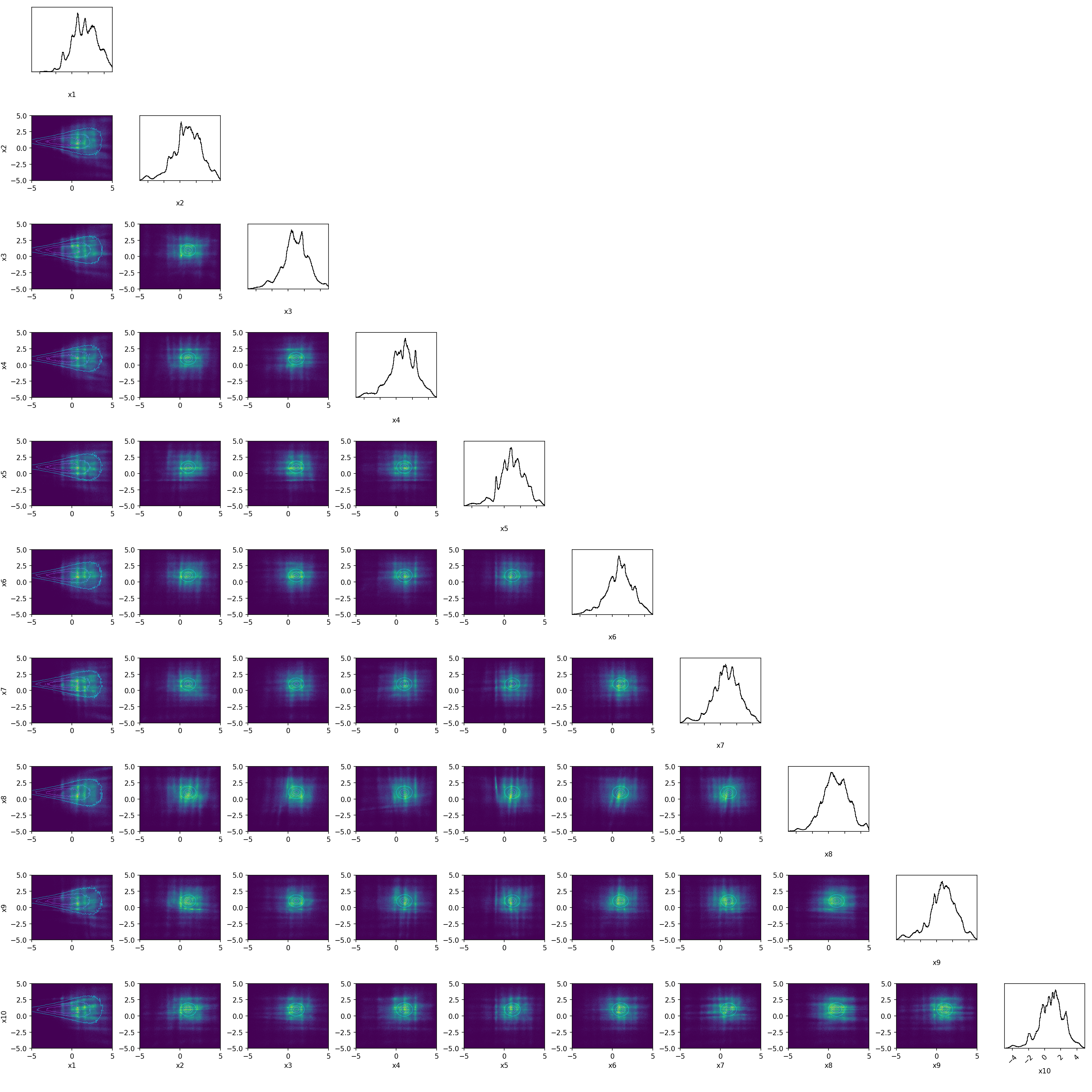}
  \caption{Estimated belief density for the Funnel10D data. The narrow funnel dimension ($x1$) is extremely difficult to capture accurately, but the flow still extends more in that dimension, seen as clear skew in all marginal histograms.}
\end{figure*}

\begin{figure}[t]
  \centering
  \includegraphics[scale=0.11]{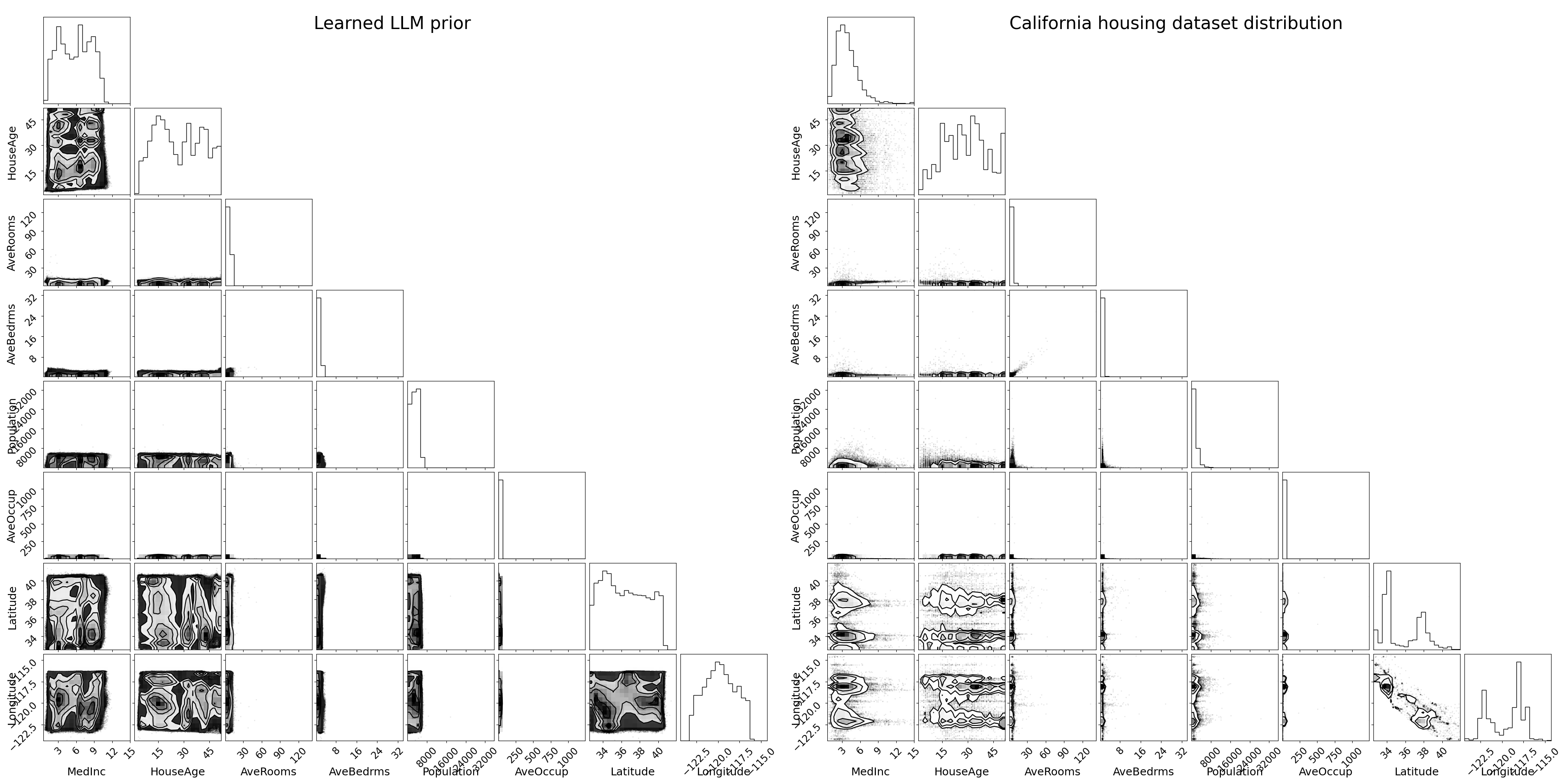}
  \caption{Full result plot for the LLM expert elicitation experiment, complementing the partial plot presented in Figure~\ref{fig-llmexp}.}\label{fig-llmexp-full}
\end{figure}

\begin{figure*}[t]
  \centering
  \includegraphics[scale=0.305]{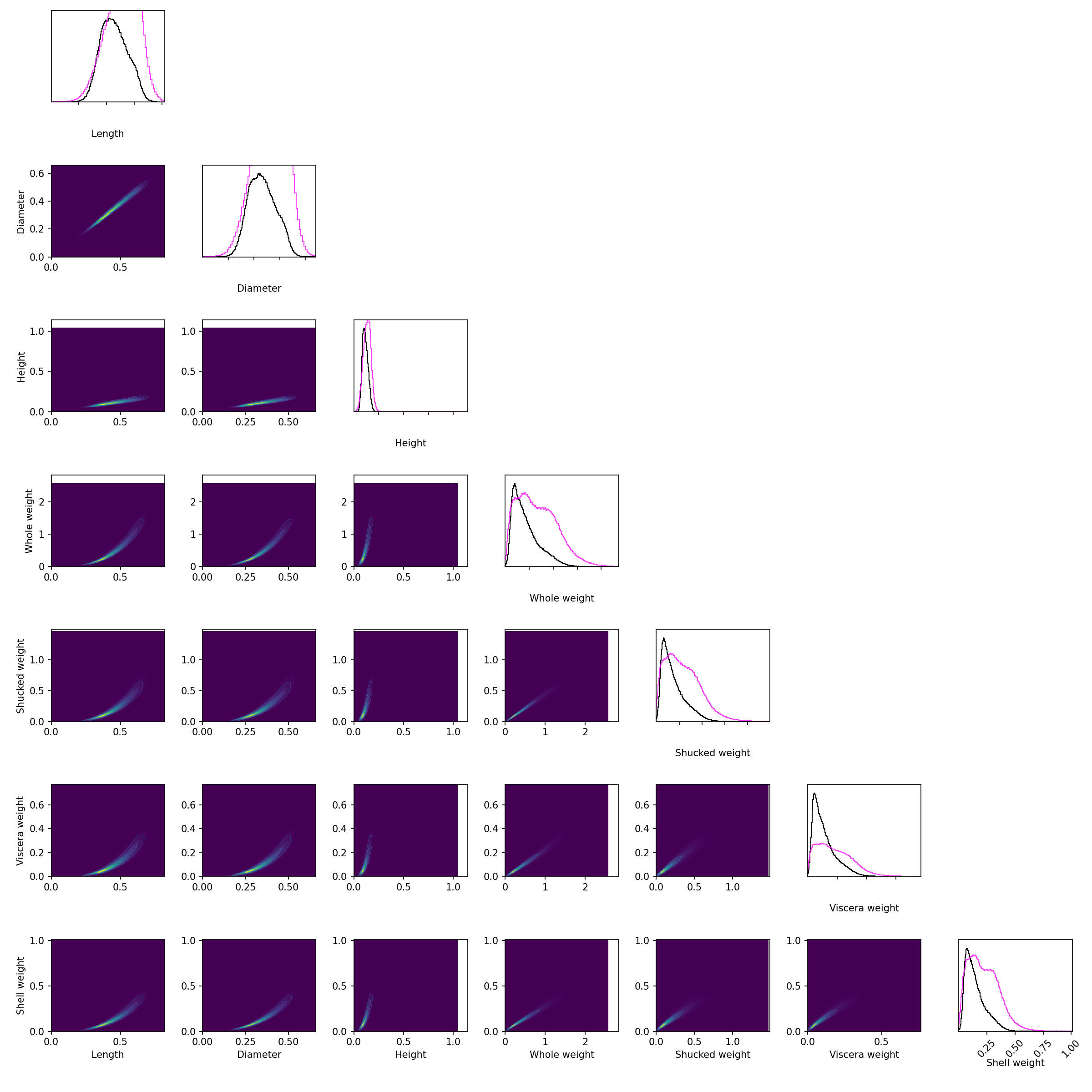}
  \caption{Full result plot for the Abalone7D experiment, complementing the partial plot presented in Figure~\ref{fig_abalone}. The target distribution is depicted by light blue contour points and its marginal by a pink curve. The learned flow is depicted by dark blue contour sample points and its marginal by a black curve.}
      \label{fig:abalonefull}
\end{figure*}

\begin{figure*}[t]
  \centering
  \includegraphics[scale=0.3]{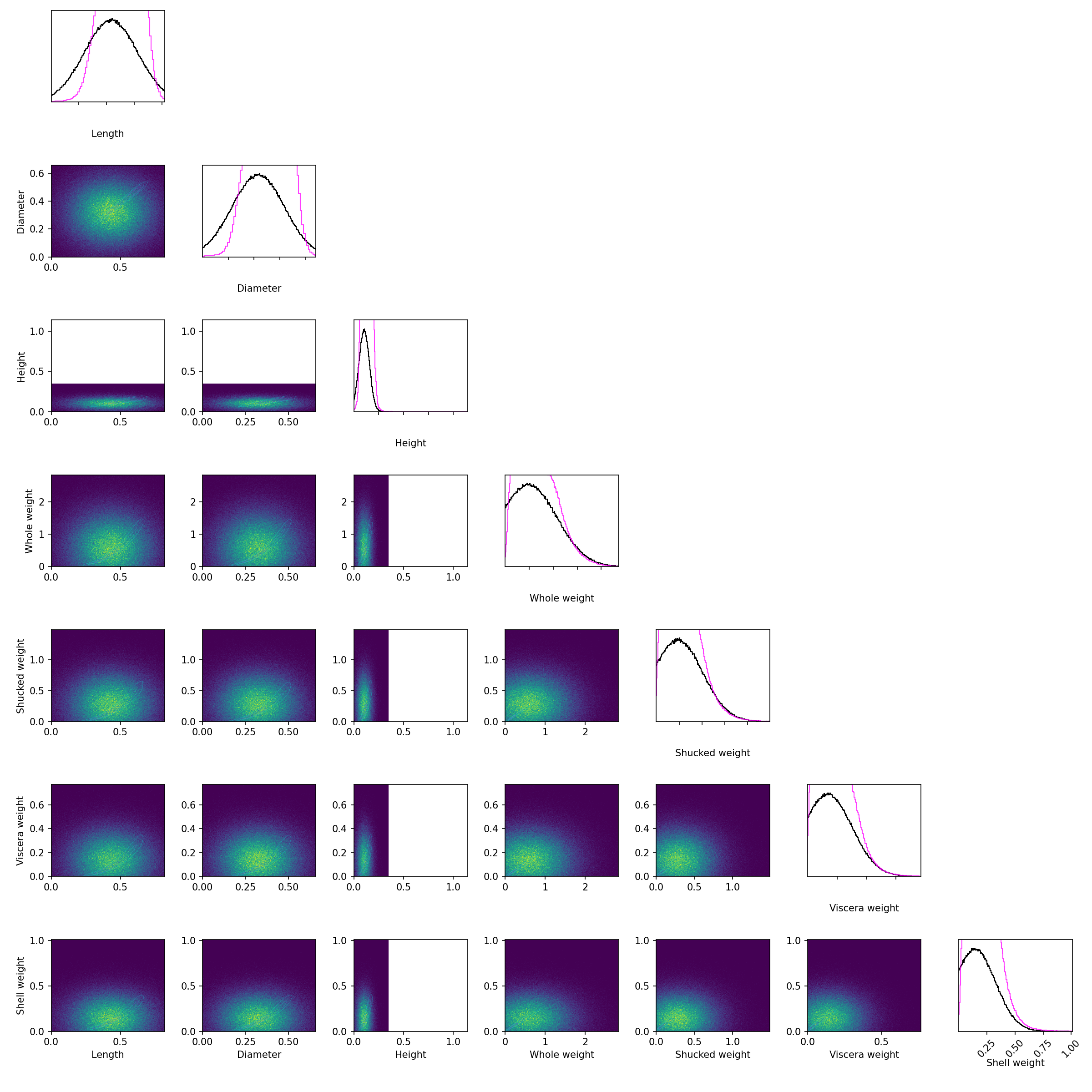}
  \caption{Full result plot for the Abalone7D experiment for the baseline method.}
      \label{fig:baselineabalone}
\end{figure*}

\clearpage

\section{Ablation studies}\label{appendix:ablation_studies}

\renewcommand{\thefigure}{D.\arabic{figure}}
\setcounter{figure}{0}
\renewcommand{\thetable}{D.\arabic{table}}
\setcounter{table}{0}
\renewcommand{\theequation}{D.\arabic{equation}}
\setcounter{equation}{0}

This section reports additional experimentation to complement the results presented in the main paper. Unless otherwise stated, the rest of the details in the experiments are as discussed in Sections \ref{sec-experiments} and \ref{appendix-experimental-details}. The only exception is the number of flows, which are scaled by the number of rankings $n$ to increase flexibility in line with the available data. However, when $n$ is as in the main paper, the number of flows remains unchanged.

\subsection{Effect of the functional prior}

Figure~\ref{fig_ablation_llm} shows the effect of the functional prior for the LLM experiment, showcasing how the maximum likelihood estimate learning the flow without the functional prior exhibits the diverging mass property. Table~\ref{ablation:llm-prior} summarizes the densities in a quantitative manner by reporting the means for all variables. The table shows how the solution without the prior can be massively off already in terms of the mean estimate, for instance having the mean number of rooms at $125$.

\begin{figure}[t]
  \centering
  \includegraphics[scale=0.11]{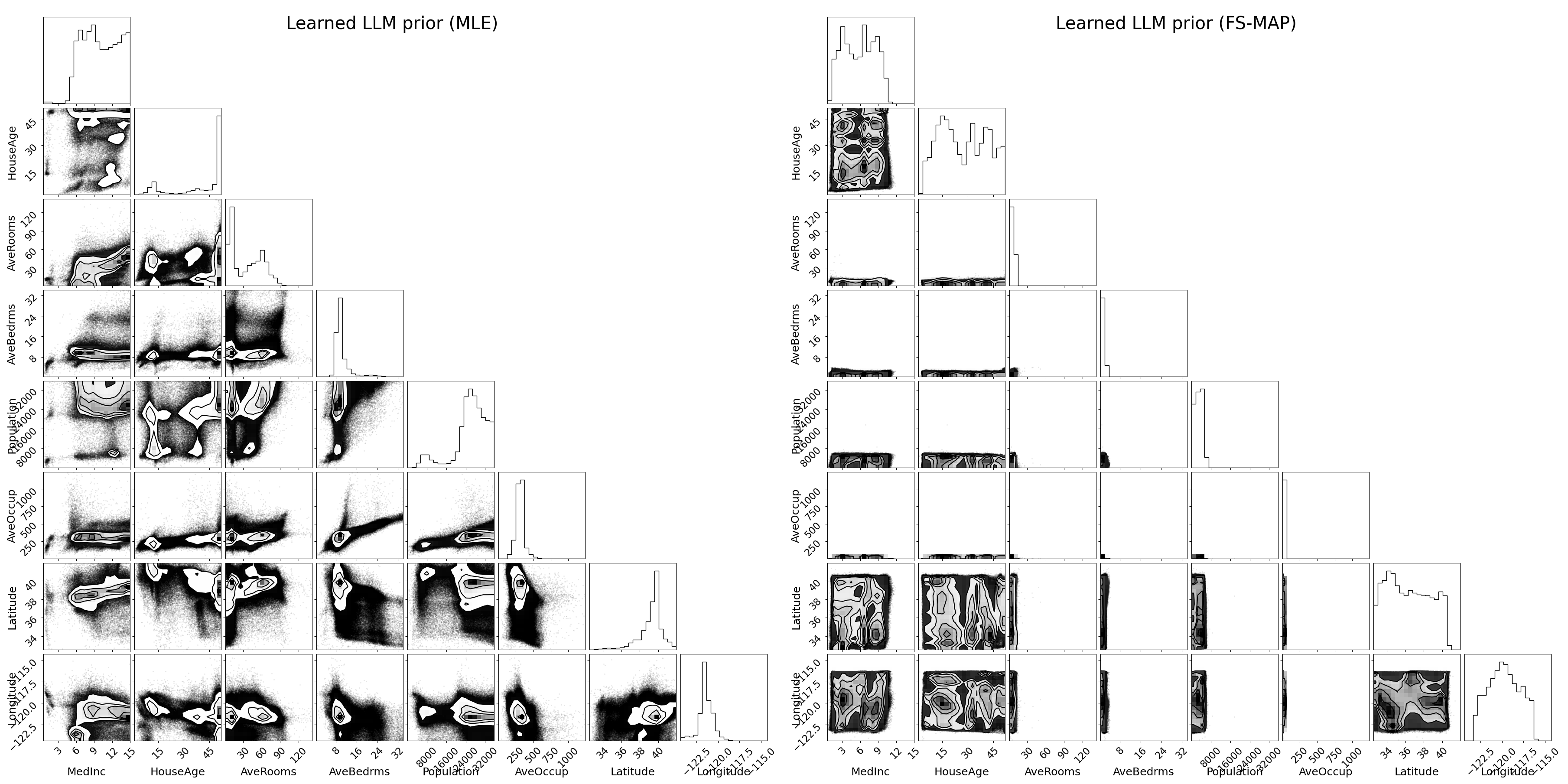}%\label{llmexp}
  \caption{The effect of the functional prior on the learned belief density in Experiment \ref{sec-llmexp}. The left plot corresponds to learning the LLM's belief density using maximum likelihood in the training, and the right plot to using function-space maximum a posteriori with the proposed functional prior. We hypothesize that the extreme marginals (e.g. median income) obtained from maximum likelihood estimation are due to problems with collapsing or diverging probability mass.}\label{fig_ablation_llm}
\end{figure}

\begin{table}[t]
\centering
\caption{The means of the variables based on (first row) the distribution of California housing dataset and the sample from the preferential flow fitted to the LLM's feedback trained (second row) with the likelihood only and (third row) with the both likelihood and prior.}
\label{ablation:llm-prior}
\setlength{\tabcolsep}{1.5pt} % Reduces the space between columns
\begin{tabular}{l|cccccccc}
 & MedInc & HouseAge & AveRooms & AveBedrms & Population & AveOccup & Lat & Long \\ \hline
True data & 3.87 & 28.64 & 5.43 & 1.1 & 1425.48 & 3.07 & 35.63 & -119.57 \\
Flow w/ prior & 9.83 & 43.01 & 125.18 & 8.07 & 22983.76 & 1290.0 & 28.81 & -117.94 \\
Flow w/o prior & 5.91 & 27.19 & 6.28 & 1.58 & 2868.52 & 3.37 & 36.43 & -119.75 \\
\end{tabular}
%\captionsetup{skip=10pt} % Adds extra space between the table and the caption
\end{table}

\subsection{Effect of the noise level $1/s$}

Figure~\ref{ablation_noise_level} investigates the interplay of the true RUM noise and the assumed noise in the preferential likelihood on the Onemoon2D data.

\begin{figure}[t]
    \centering
    \captionsetup{name=Figure}
    \begin{tabular}{>{\centering\arraybackslash}m{2cm} >{\centering\arraybackslash}m{.2\linewidth} >{\centering\arraybackslash}m{.2\linewidth} >{\centering\arraybackslash}m{.2\linewidth}}
      & $s_{true} = 0.01$ & $s_{true} = 1.0$ & $s_{true} = 5.0$ \\
    $s_{lik} = 0.01$ & \includegraphics[width=\linewidth]{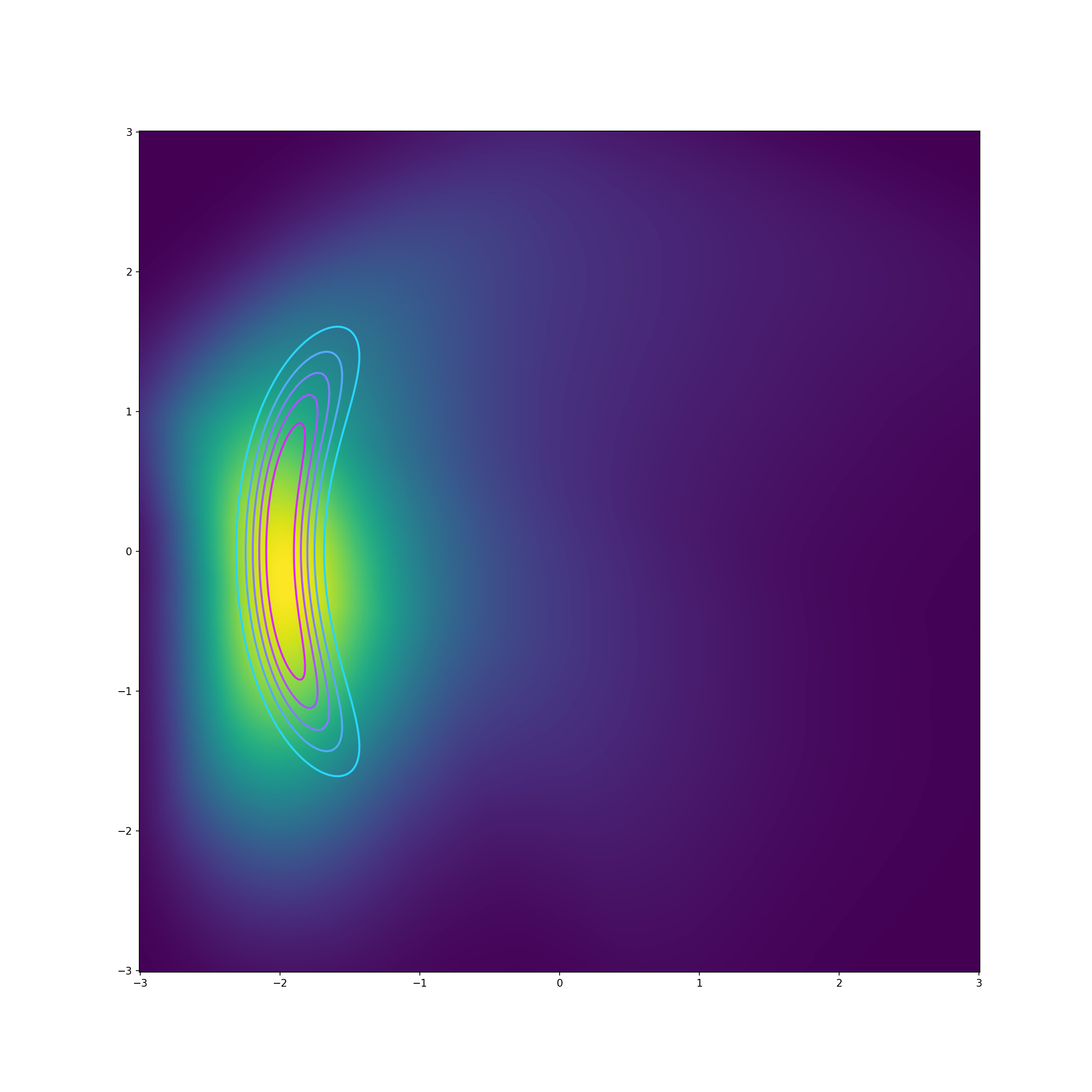} & \includegraphics[width=\linewidth]{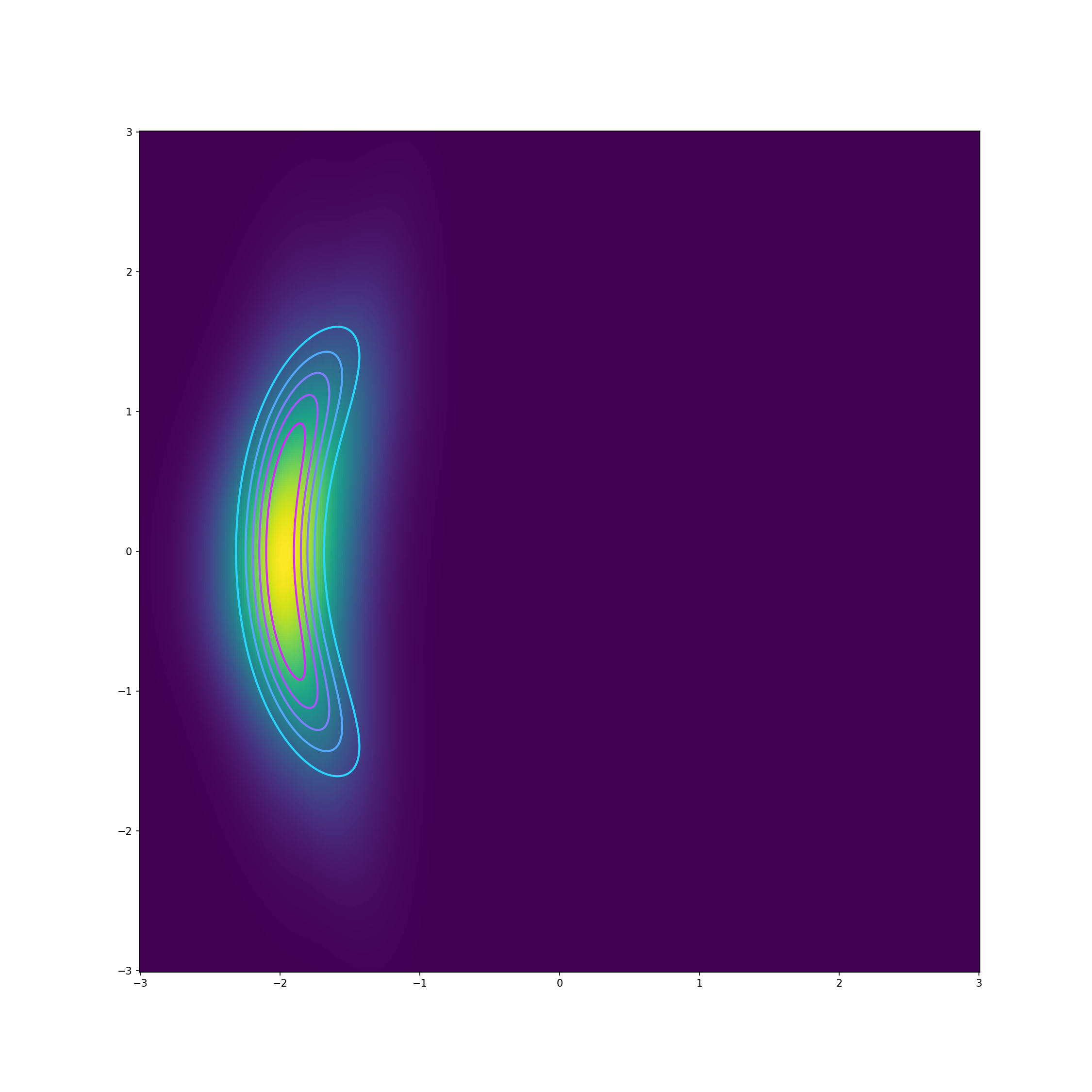} & \includegraphics[width=\linewidth]{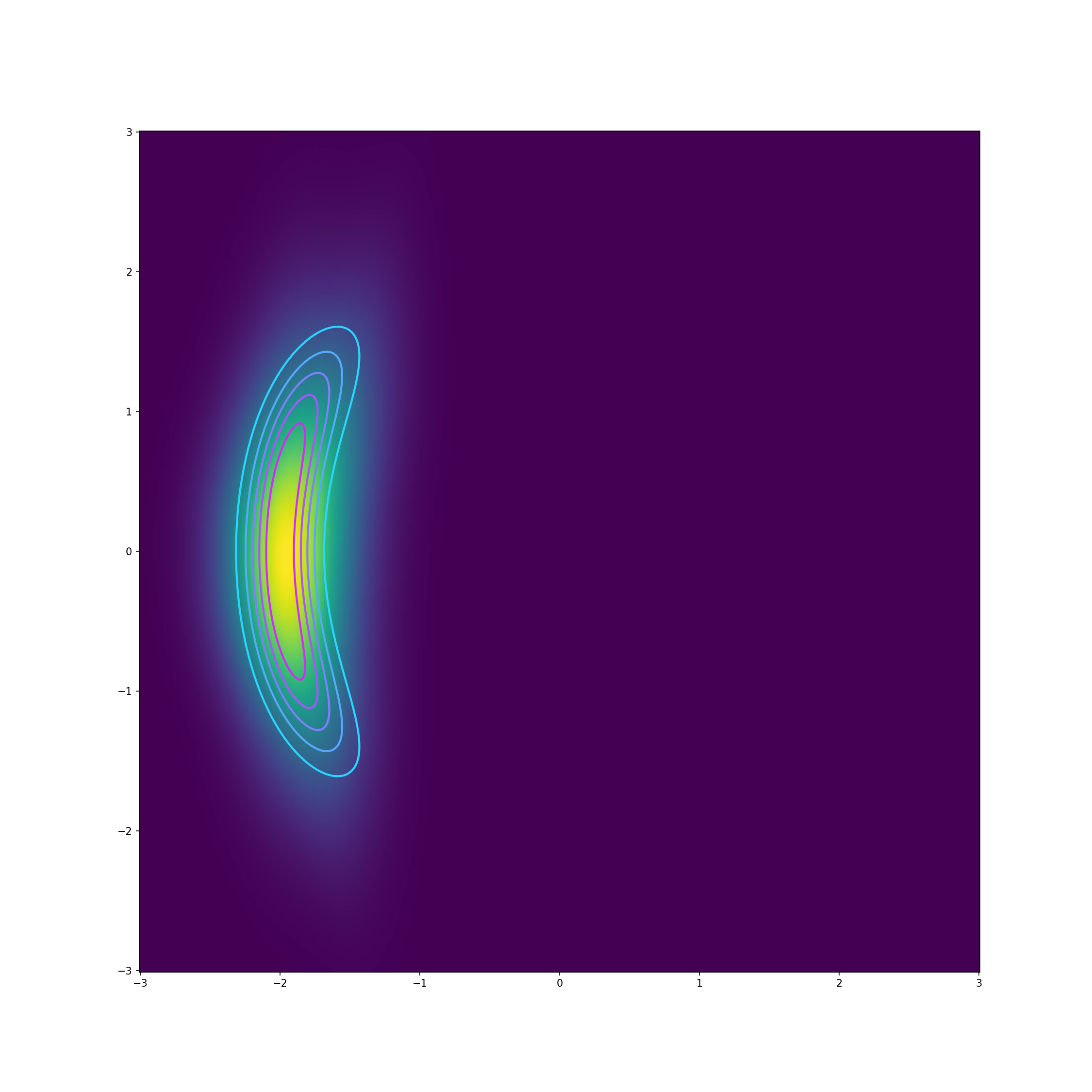} \\
    $s_{lik} = 1.0$ & \includegraphics[width=\linewidth]{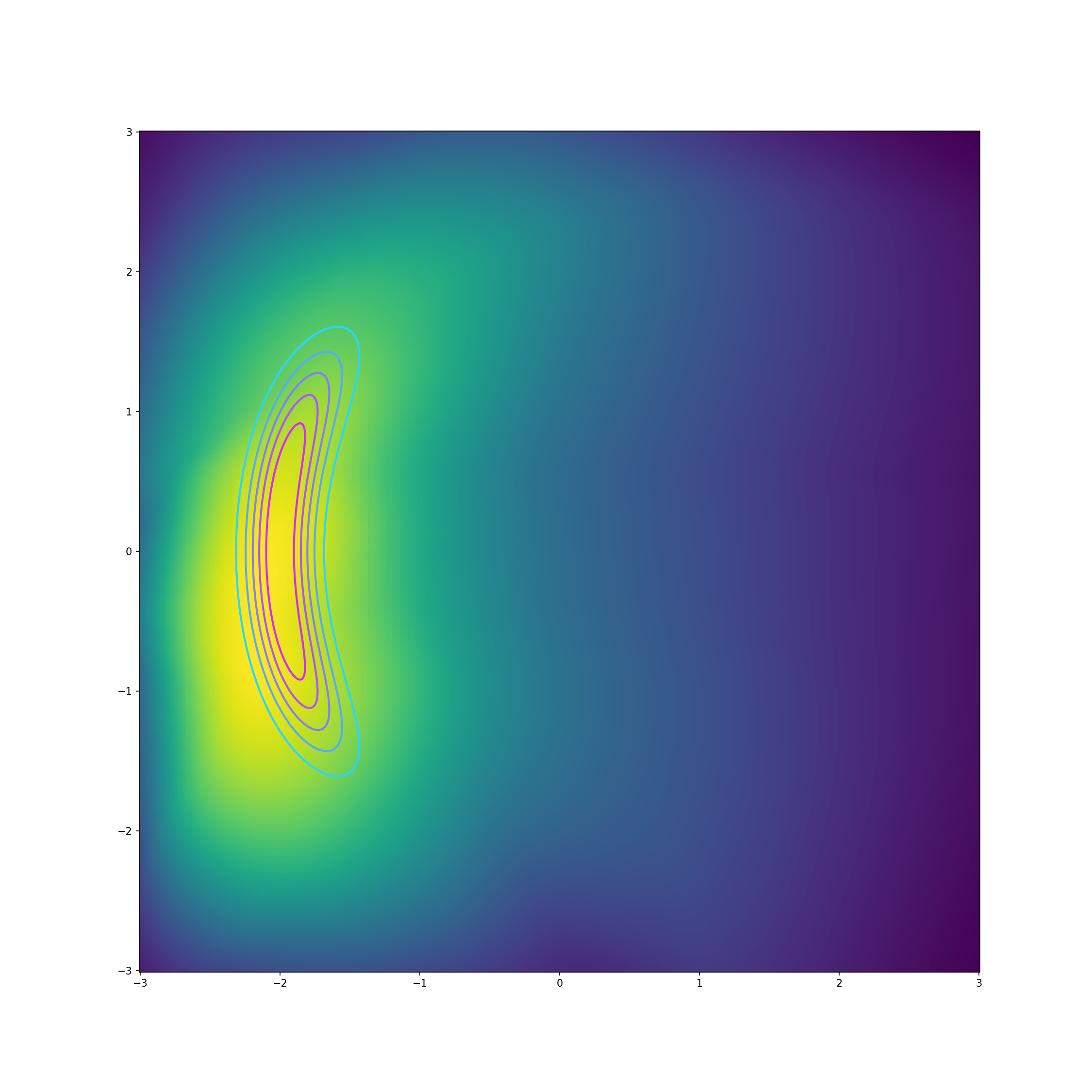} & \includegraphics[width=\linewidth]{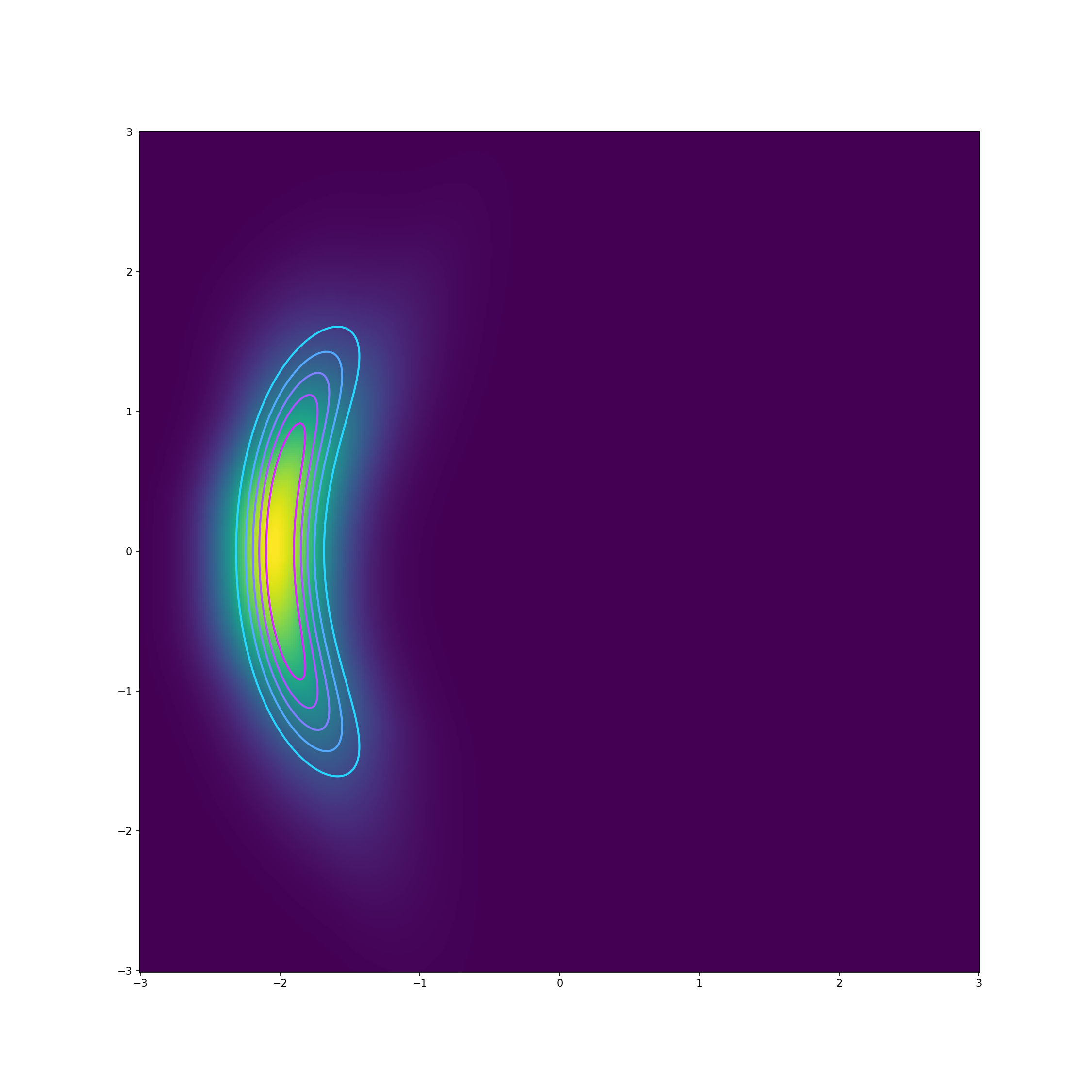} & \includegraphics[width=\linewidth]{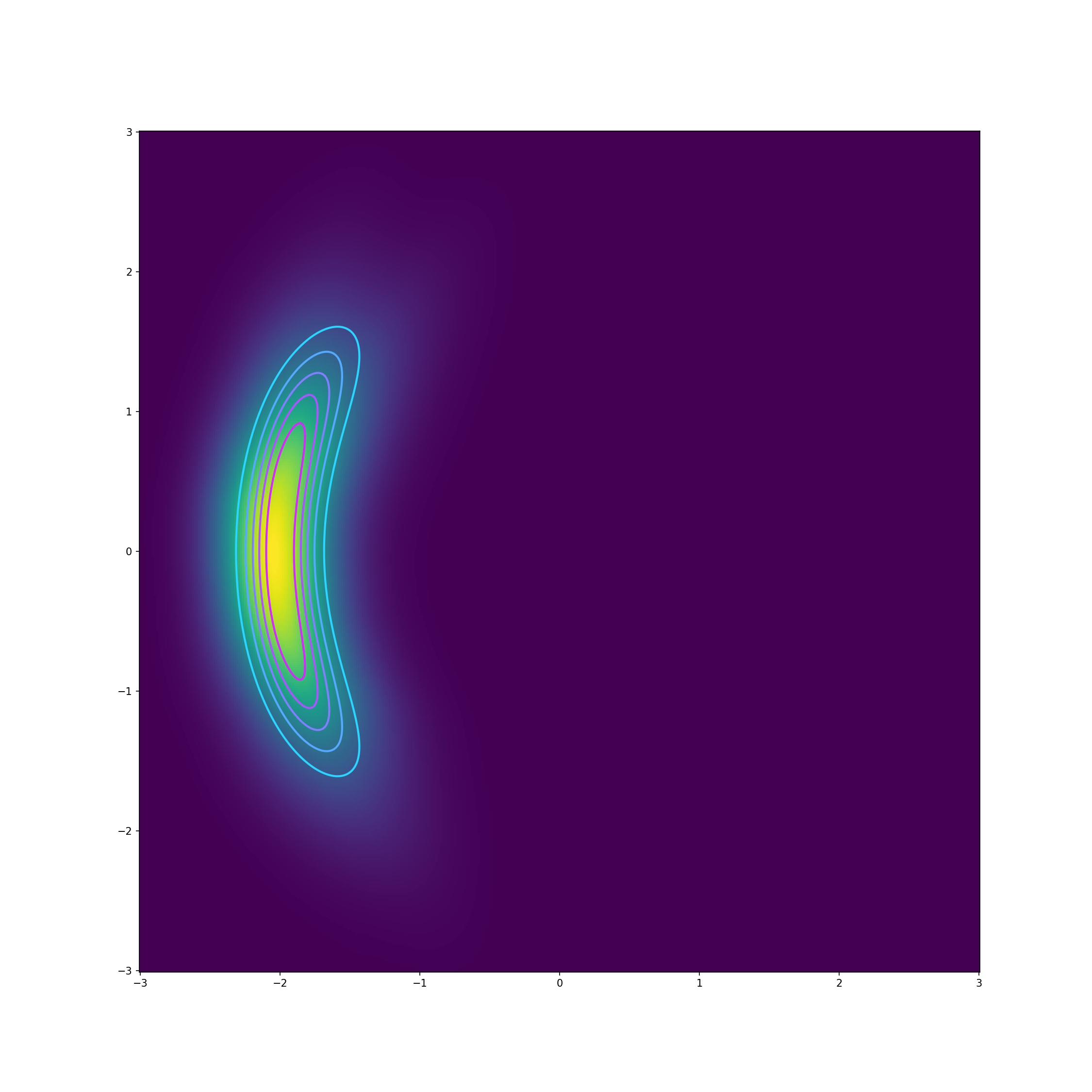} \\
    $s_{lik} = 5.0$ & \includegraphics[width=\linewidth]{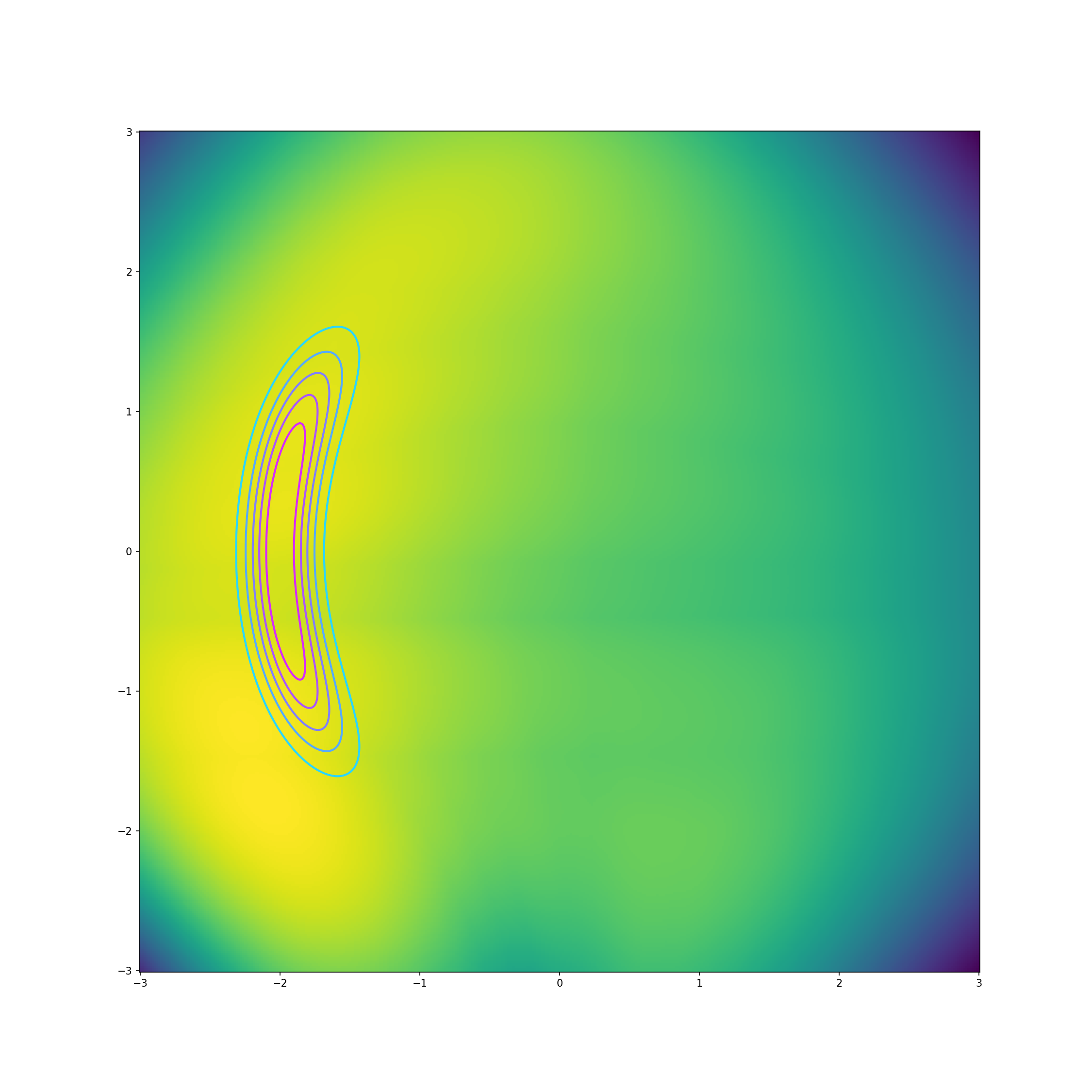} & \includegraphics[width=\linewidth]{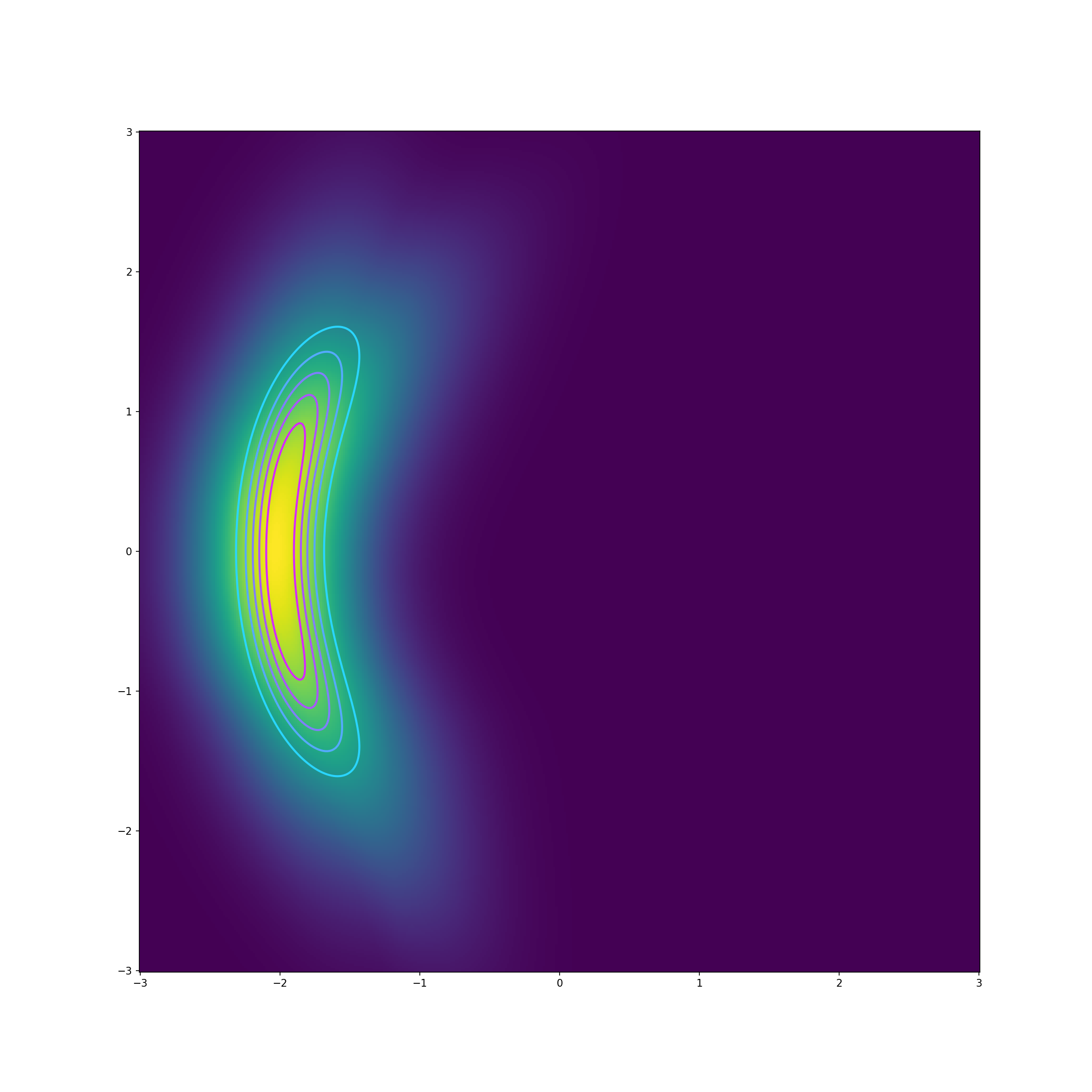} & \includegraphics[width=\linewidth]{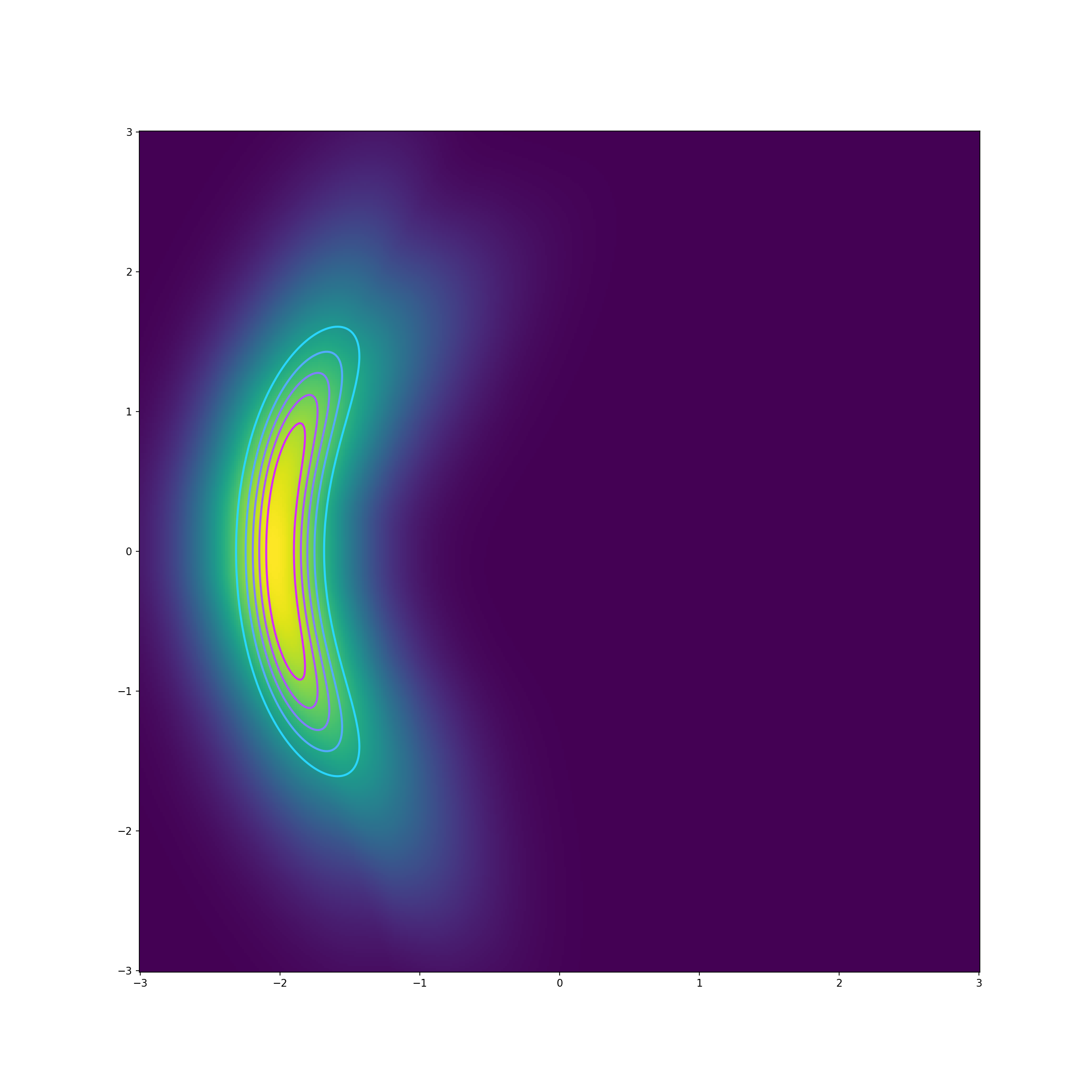} \\
    \end{tabular}
    \caption{Preferential flow fitted via FS-MAP with varying precision levels in the data generation process (in RUM) $s_{true}$, and precision levels in the preferential likelihood $s_{lik}$. The first column shows that a lower precision level in RUM leads to a more spread fitted flow, as expected. The middle plot is the only scenario where both the likelihood and the functional prior are correctly specified, resulting in the best result. Since the prior is misspecified in the bottom-right plot, the best results are not achieved, contrary to expectations. However, this misspecification does not lead to catastrophic performance deterioration but rather to a more spread-out fitted flow.}
    \label{ablation_noise_level}
\end{figure}

\subsection{Effect of the candidate sampling distribution $\lambda$}

We validate the sensitivity of the results in terms the choice of the distribution $\lambda$ that the candidates are sampled from. Figure \ref{onemoon_lambda_experiment} studies the effect of $\lambda$, the unknown distribution from which the candidates to be compared are sampled from, complementing the experiment reported in Section 5.1 and confirming the method is robust for the choice. In the original experiment the candidates were sampled from a mixture distribution of uniform and Gaussian distribution centered on the mean of the target, with the mixture probability $w=1/3$ for the Gaussian. Figure \ref{onemoon_lambda_experiment} reports the accuracy as a function of the choice of $w$ for one of the data sets (Onemoon2D), so that $\lambda$ goes from uniform to a Gaussian,
and includes also an additional reference point where $\lambda$ equals the target. For all $w>0.5$ we reach effectively the same accuracy as when sampling from the target itself, and even for the hardest case of uniform $\lambda$ (with $w=0$ the distance is significantly smaller than the reference scale comparing the base distribution with the target.

\begin{figure*}[h!]
  \caption{The Onemoon2D experiment replicated for varying sampling distributions $\lambda$ from where the candidates are sampled. In original Onemoon2D, $\lambda$ is a mixture of uniform and Gaussian distribution centered on the mean of the target, with the mixture probability $w=1/3$ for the Gaussian. Here, $\lambda \in \{\textrm{Uniform},\textrm{Gaussian-Uniform Mixture}, \textrm{Target}\}$ with letting the mixture probability $w$ to vary in $\{0.1,1/4,1/3,1/2,2/3,3/4,1.0\}$. The rest of the details can be found in Section 5, specifically $n=200$ and $k=5$. The distance between the base density and the target density (1.85) provides a scale reference. The method is robust for the sampling distribution and for broad range of $w$ we reach essentially the same accuracy as when sampling from the target itself.
  }\label{onemoon_lambda_experiment}
  \includegraphics[scale=0.53]{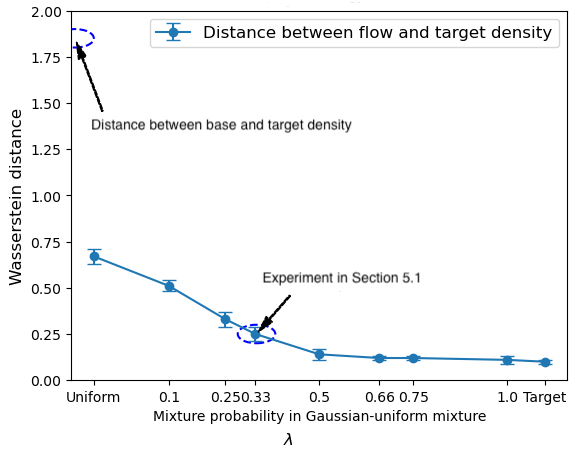}
\end{figure*}

\subsection{Effect of the number of rankings $n$}

We validate the sensitivity of the results in terms of the number of comparisons/rankings $n$. Table \ref{tab:ablation_varying_n}, as well as Figures \ref{fig:ablation_onemoon} and \ref{fig:ablation_twogaussian}, report the results of an experiment that studies the effect of $n$.

Increasing $n$ generally improves the accuracy and already fairly small $n$ is sufficient for learning a good estimate (Table \ref{tab:ablation_varying_n}). For very large $n$, the accuracy can slightly deteriorate. We believe that this is due to prior misspecification that encourages overestimation of the variation due to the fact that $k$ is finite but in the prior it is assumed to be infinite. In the Onemoon2D experiment, Figure \ref{fig:ablation_onemoon} confirms that for $n=1000$ the shape of the estimate is extremely close and the slightly worse Wasserstein distance is due to overestimating the width. The same holds for other experiments such as Twogaussians10D illustrated in Figure \ref{fig:ablation_twogaussian}.

\begin{table}[h!]
    \centering
    \caption{Wasserstein distances for varying $n$ (fixed $k=5$) across different experiments}
    \label{tab:ablation_varying_n}
    \begin{tabular}{@{}lcccc@{}}
    \toprule
    $n$ & 25 & 50 & 100 & 1000 \\ 
    \midrule
    Onemoon2D & 0.67 $\scriptstyle{(\pm 1.34)}$ & 0.18 $\scriptstyle{(\pm 0.04)}$ & 0.17 $\scriptstyle{(\pm 0.03)}$ & 0.23 $\scriptstyle{(\pm 0.02)}$ \\ 
    Gaussian6D & 1.70 $\scriptstyle{(\pm 0.22)}$ & 1.50 $\scriptstyle{(\pm 0.19)}$ & 1.46 $\scriptstyle{(\pm 0.11)}$ & 1.26 $\scriptstyle{(\pm 0.04)}$ \\ 
    \midrule
    $n$ & 50 & 500 & 2000 & 10000 \\ 
    \midrule
    Funnel10D & 4.33 $\scriptstyle{(\pm 0.10)}$ & 3.96 $\scriptstyle{(\pm 0.05)}$ & 3.89 $\scriptstyle{(\pm 0.04)}$ & 3.92 $\scriptstyle{(\pm 0.04)}$ \\ 
    Twogaussians10D & 2.69 $\scriptstyle{(\pm 0.31)}$ & 2.57 $\scriptstyle{(\pm 0.08)}$ & 2.61 $\scriptstyle{(\pm 0.05)}$ & 2.66 $\scriptstyle{(\pm 0.04)}$ \\ 
    \bottomrule
    \end{tabular}
\end{table}

\begin{figure*}[h!]
  \centering
  \subfigure[$n=25$, $k=5$]{\includegraphics[scale=0.12]{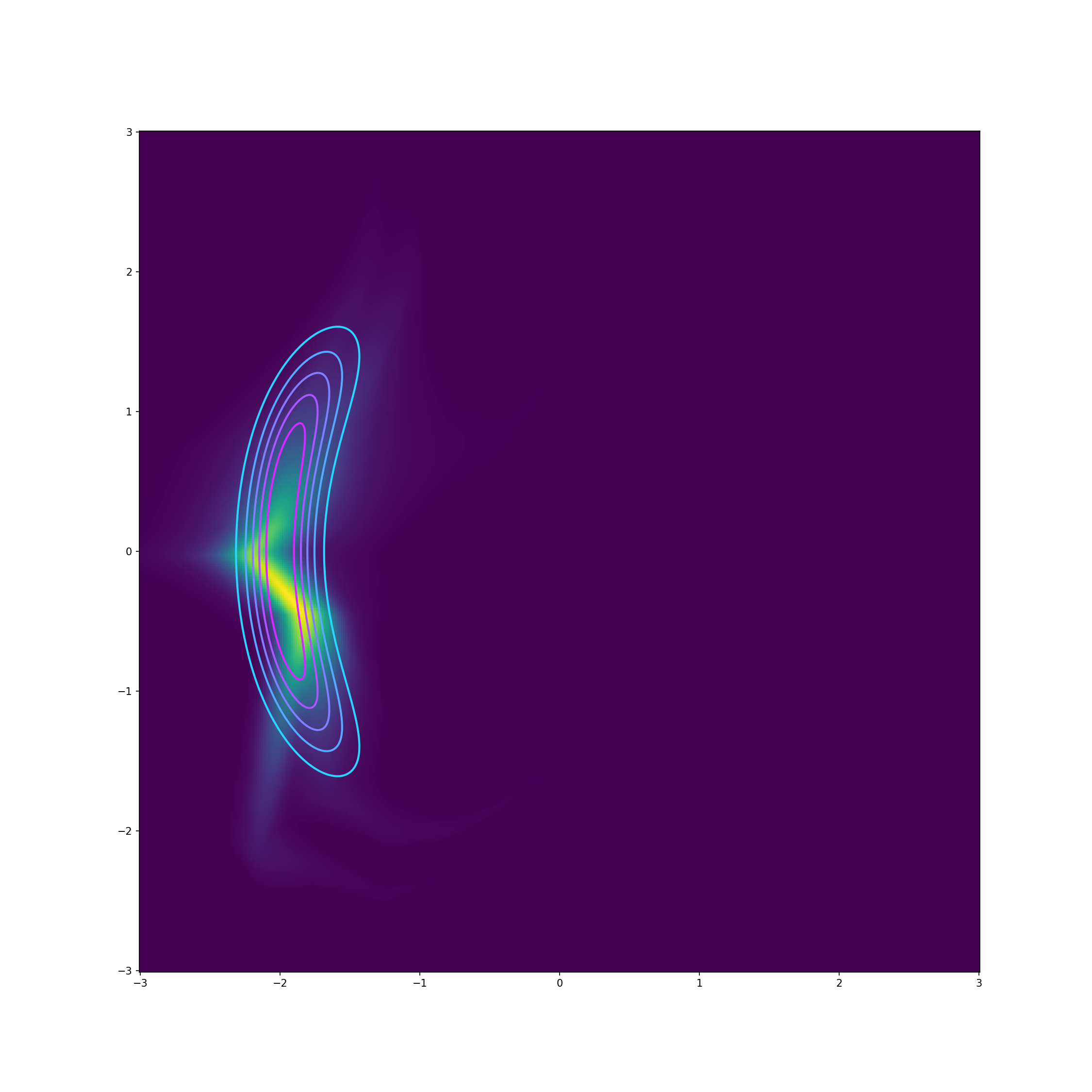}}
   \subfigure[$n=100$, $k=5$]{\includegraphics[scale=0.12]{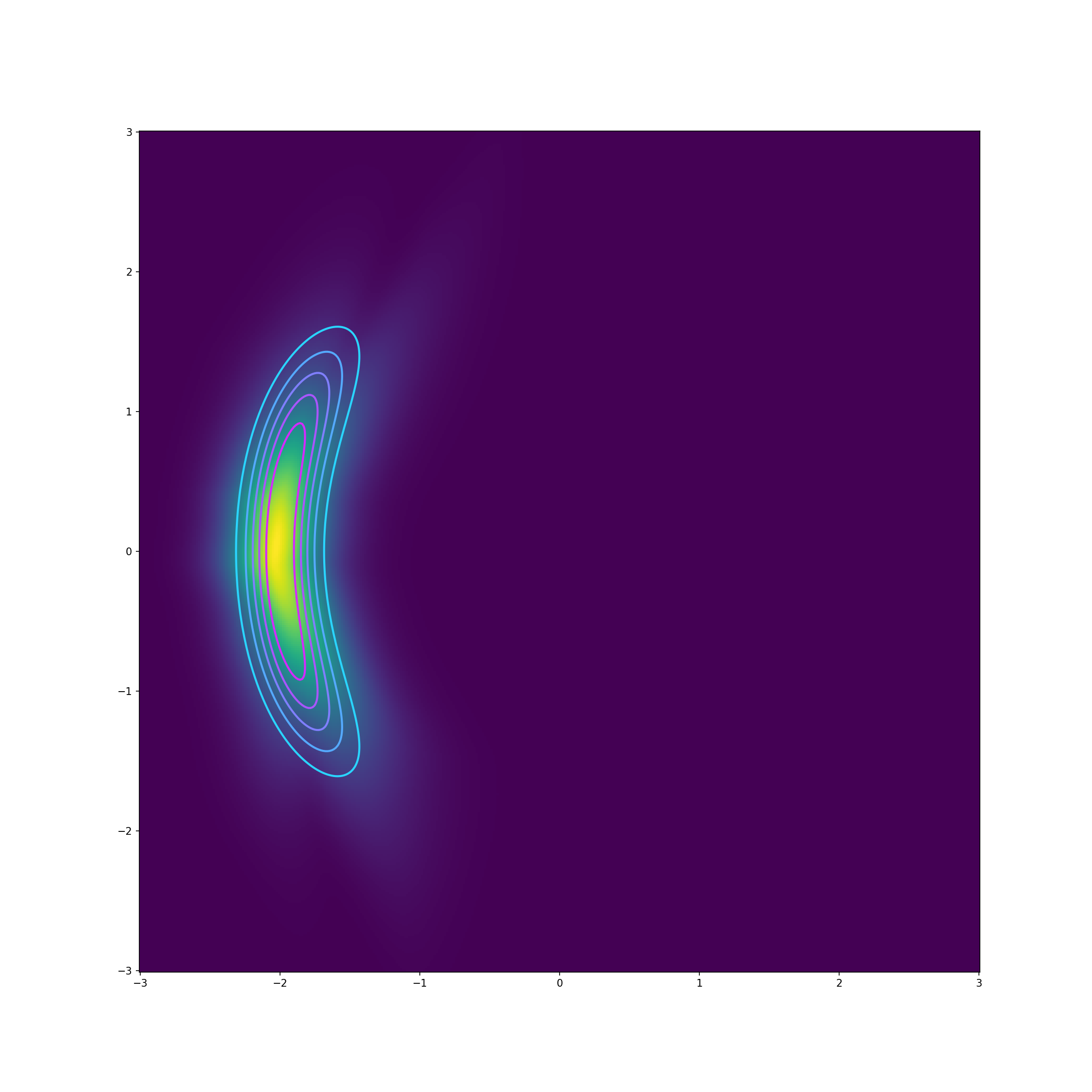}}  
   \subfigure[$n=1000$, $k=5$]{\includegraphics[scale=0.12]{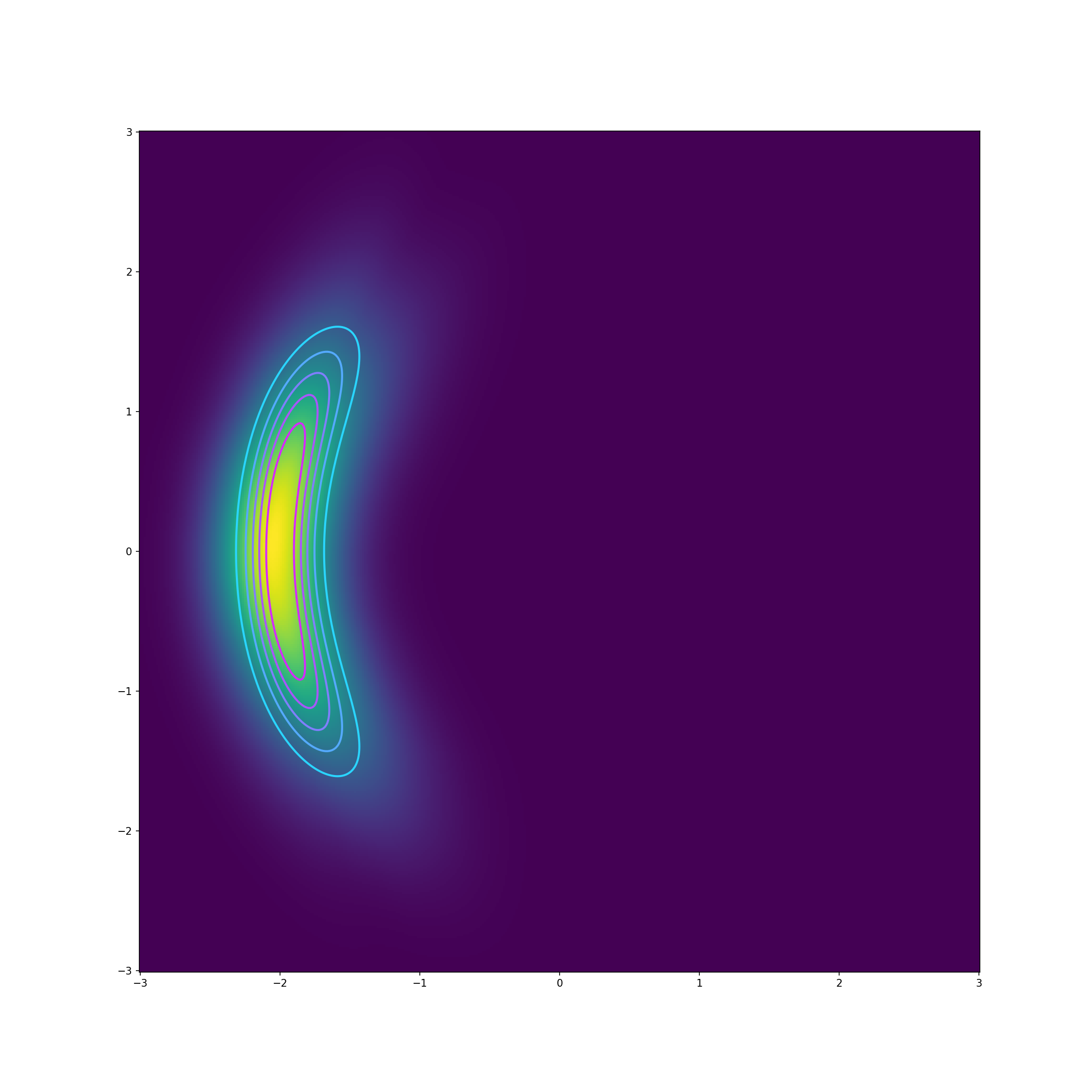}}
  \caption{The estimated belief densities in the Onemoon2D experiment of Table \ref{tab:ablation_varying_n} (contour: true density; heatmap: estimated flow%; red: preferred points; blue: non-preferred points
  ). While the coverage of the estimated density with $n=1000$ is better than with $n=100$, there is more spread with $n=1000$ than with $n=100$, which explains the slight performance deterioration in Table \ref{tab:ablation_varying_n}.
  }\label{fig:ablation_onemoon} 
\end{figure*}

\begin{figure*}[h!]
  \centering
  \subfigure[$n=50$, $k=5$]{\includegraphics[scale=0.16]{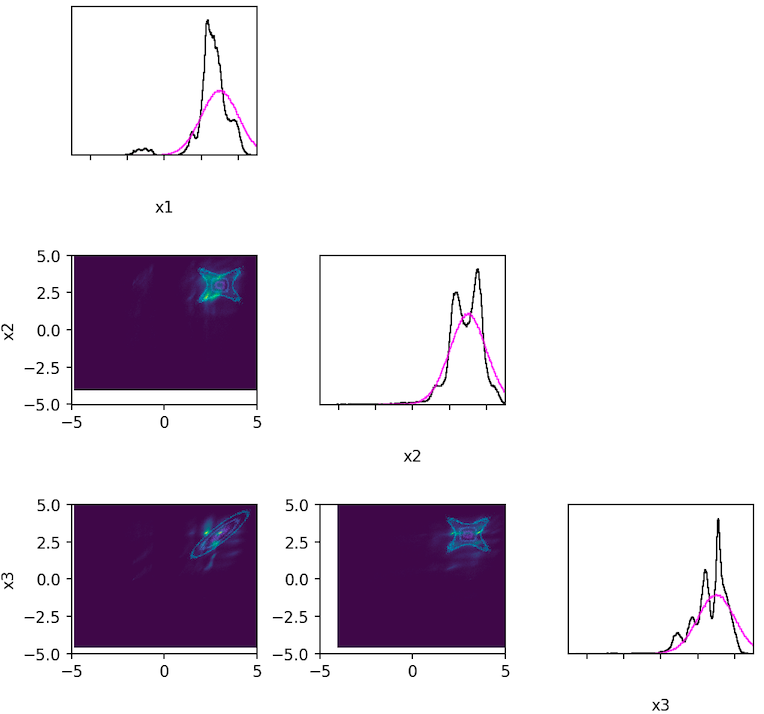}}
   \subfigure[$n=500$, $k=5$]{\includegraphics[scale=0.16]{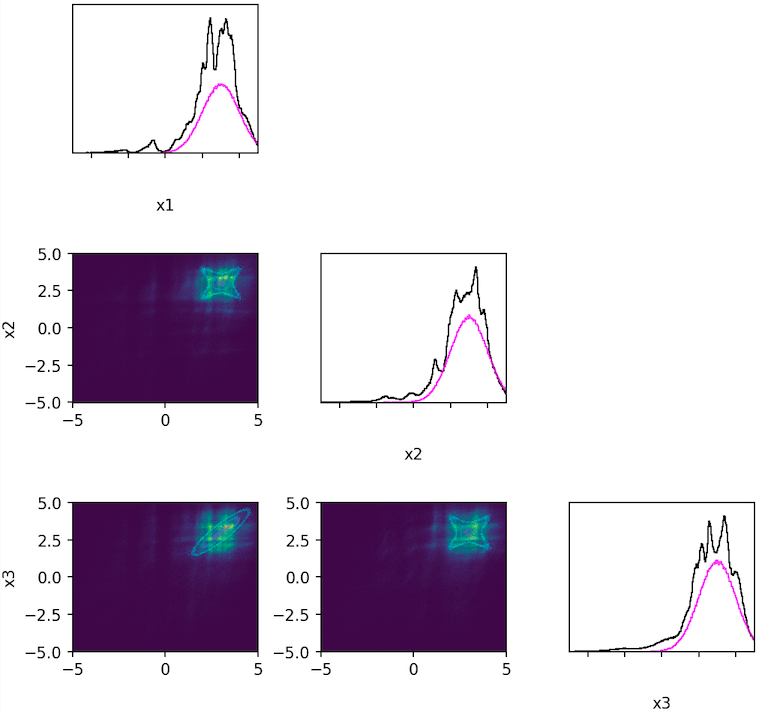}}  
   \subfigure[$n=10000$, $k=5$]{\includegraphics[scale=0.16]{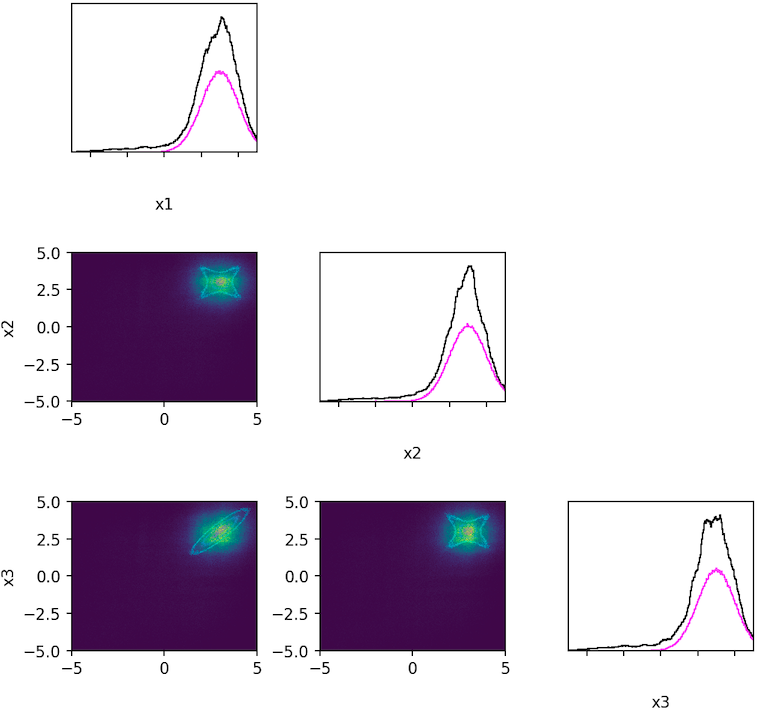}}
  \caption{Cross-plots of selected variables of the estimated flow in the Twogaussians10D experiment of Table \ref{tab:ablation_varying_n} (contour: true density; heatmap: estimated flow%; red: preferred points; blue: non-preferred points
  ). While the coverage of the estimated density with $n=10000$ is better than with $n=500$, there is more spread with $n=10000$ than with $n=500$, which explains the slight performance deterioration in Table \ref{tab:ablation_varying_n}.
  } \label{fig:ablation_twogaussian} 
\end{figure*}

\subsection{Effect of the cardinality of the choice set $k$}

Finally, to complement the ablation studies for $k$ on synthetic settings in Section~\ref{sec:ablation}, we rerun the LLM expert elicitation experiment with $k=2$. Figure~\ref{fig:llm_pairwise} shows that the LLM expert also works with $k=2$. We replicated the original experiment conducted with $k=5$ and report the estimates side-by-side, visually confirming we learn the essential characteristics of the distribution in both cases. The results are not identical and the case of $k=5$ is likely more accurate (see e.g. the marginal distribution of the last feature), but there are no major qualitative differences between the two estimates.

\begin{figure}[h!]
  \centering
  \includegraphics[scale=0.105]{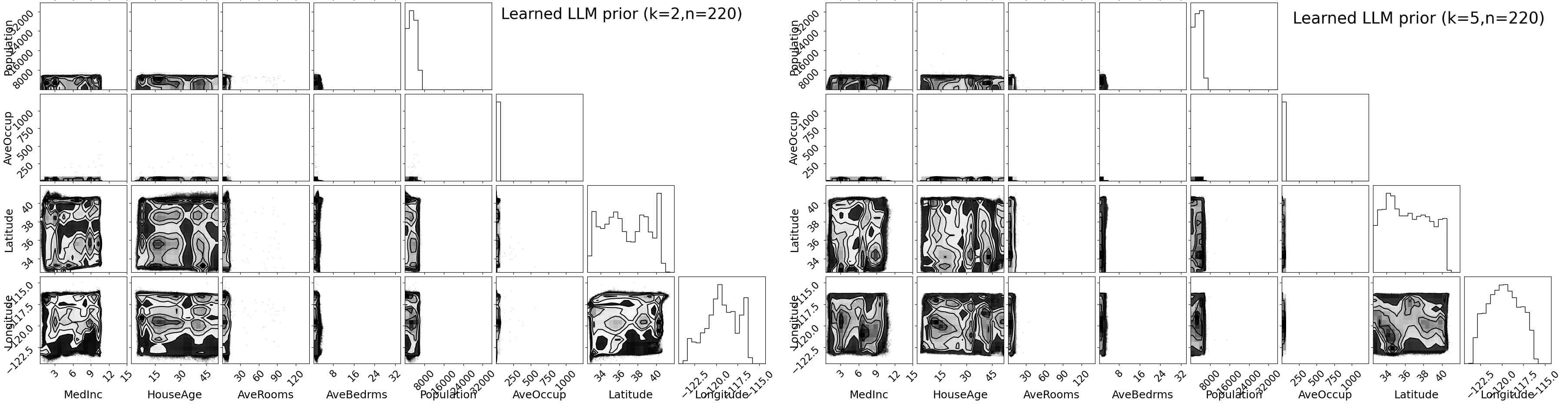}
  \caption{The LLM expert elicitation experiment replicated for the setting of pairwise comparisons (left) and compared to the original setting of 5-wise rankings (right). The estimated flow remains qualitatively the same for the variables shown here (other variables omitted due to lack of space), and this holds true for the rest of the variables as well.}\label{fig:llm_pairwise}
\end{figure}

\end{document}